\documentclass[twoside]{article}

\usepackage[utf8]{inputenc}
\usepackage[english]{babel}
\usepackage{graphicx}
\usepackage{subcaption}
\usepackage{booktabs}
\usepackage{amsmath}
\usepackage{amsfonts}
\usepackage{tabularx}
\usepackage{amssymb}
\usepackage{amsthm}
\usepackage{color}
\usepackage{comment}
\usepackage{hyperref}

\usepackage[ruled]{algorithm2e}

\usepackage[accepted]{aistats2019}

% If you use natbib package, activate the following three lines:
\usepackage[round]{natbib}
\bibliographystyle{abbrvnat}

\newtheorem{theorem}{Theorem}[section]
\newtheorem{corollary}{Corollary}[theorem]
\newtheorem{lemma}[theorem]{Lemma}

\newtheorem{prop}{Proposition}
\newenvironment{sketch}{%
  \proof}{\endproof}

\begin{document}

\twocolumn[

\aistatstitle{Differentiable Antithetic Sampling for Variance Reduction in Stochastic Variational Inference}

\aistatsauthor{Mike Wu \And Noah Goodman \And Stefano Ermon}

\aistatsaddress{Stanford University \And Stanford University \And Stanford University} ]
% \aistatsauthor{Anonymous authors}
% \aistatsaddress{Paper under double-blind review} ]

\begin{abstract}

Stochastic optimization techniques are standard in variational inference algorithms. These methods estimate gradients by approximating expectations with independent Monte Carlo samples.
In this paper, we explore a technique that uses correlated, but more \emph{representative}, samples to reduce variance.
%Often, we rely on large numbers of samples to reduce estimator variance. While this is effective, it can be prohibitively expensive.
Specifically, we show how to generate \textit{antithetic} samples with sample moments that match the population moments of an underlying proposal distribution.
Combining a \textit{differentiable} antithetic sampler with modern stochastic variational inference, we showcase the effectiveness of this approach for learning a deep generative model. An implementation is available at \url{https://github.com/mhw32/antithetic-vae-public}.
\end{abstract}

\section{Introduction}
\label{sec:introduction}

A wide class of problems in science and engineering can be solved by gradient-based optimization of function expectations.
This is especially prevalent in machine learning \citep{schulman2015gradient},  including variational inference \citep{ranganath2014black, rezende2014stochastic} and reinforcement learning \citep{silver2014deterministic}.
On the face of it, problems of this nature require solving an intractable integral.
Most practical approaches instead use Monte Carlo estimates of expectations and their gradients.
These techniques are unbiased but can suffer from high variance when sample size is small---one unlikely sample in the tail of a distribution can heavily skew the final estimate.
% \rdh{Different estimates from what? I think you can say this more crisply.}
A simple way to reduce variance is to increase the number of samples; however the computational cost grows quickly. We would like to reap the positive benefits of a larger sample size using as few samples as possible. \textit{With a fixed computational budget, how do we choose samples?}

A large body of work has been dedicated to reducing variance in sampling, with the most popular in machine learning being  reparameterizations for some continuous distributions \citep{kingma2013auto,jang2016categorical} and control variates to adjust for estimated error \citep{mnih2014neural,weaver2001optimal}.
These techniques sample i.i.d.~but perhaps it is possible to choose correlated samples that are more \textit{representative} of their underlying distribution?
Several such non-independent sampling approaches have been proposed in statistics.
%There are several examples from statistics on the practicality of non-i.i.d sampling; for example, a simple strategy is sampling without replacement, which is forced to be ``representative" by covering the domain more quickly.
% commonly used when drawing from a finite population. \s{$k$ samples without replacement are more representative than with replacement because there can never be duplicates.}
In this work we investigate \textit{antithetics}, where for every sample we draw, we include a negatively correlated sample to minimize the distance between sample and population moments.
% \s{imprecise/incorrect statement i think}. \s{for example, if one sample is much larger than the mean, we will attempt to balance it with one that is much smaller}

The key challenges in applying antithetic sampling to modern machine learning are (1) ensuring that antithetic samples are correctly distributed such that they provide unbiased estimators for Monte Carlo simulation, and (2) ensuring that sampling is differentiable to permit gradient-based optimization.
% \s{key challenge: make sure these samples have the right statistics so that they are still usable in monte carlo schemes and provide unbiased estimators.}
We focus on stochastic variational inference and explore using antithetics for learning the parameters for a deep generative model. %instead of sampling i.i.d.~from the variational posterior, we use an antithetic strategy.
Critically, our method of antithetic sampling is differentiable and can be composed with reparametrizations of the underlying distributions to provide a fully differentiable sampling process. This yields a simple and low variance way to optimize the parameters of the variational posterior.

Concisely, our contributions are as follows:

\begin{itemize}
    \item We review a method to to generate Gaussian variates with known sample moments, then apply it to antithetics, and generalize it to other families using deterministic transformations.
    \item We show that differentiating through the sampling computation improves variational inference.
     % \rdh{be more specific about what kind of benefit}
    \item We show that training VAEs with antithetic samples improves learning across objectives, posterior families, and datasets.
\end{itemize}

\section{Background}
\label{sec:background}
% \s{consider changing the order. explain VAE (2.2), then explain how learning requires stochastic optimization. that connects better with sec 3.}

%We review stochastic variational inference  and explore its relationship to sampling-based estimators.

\subsection{Variational Inference and Learning}

Consider a generative model that specifies a joint distribution $p_\theta(x,z)$ over a set of observed variables $x \in \mathbb{R}^m$ and stochastic variables $z \in \mathbb{R}^d$ parameterized by $\theta$. We are interested in the posterior distribution $p_\theta(z|x) = \frac{p_\theta(x|z)p(z)}{p(x)}$, which is intractable since $p(x) = \int_z p(x,z)dz$. Instead, we introduce a \textit{variational posterior}, $q_\phi(z|x)$ that approximates $p_\theta(z|x)$ but is easy to sample from and to evaluate.

Our objective is to maximize the likelihood of the data (the ``evidence"), $\log p_\theta(x)$. This is intractable so we optimize the evidence lower bound (ELBO) instead:
\begin{align}
    \log p_\theta(x) & \geq \mathbb{E}_{q_\phi(z|x)}[\log \frac{p_\theta(x,z)}{q_\phi(z|x)}]\label{eqn:elbo}
    % &= \mathbb{E}_{q_\phi(z|x)}[\log p_\theta(x|z)] - \nonumber \\
    % &\qquad \textup{KL}[\log q_\phi(z|x) || \log p(z)] \label{eqn:elbo-2}
\end{align}
% where KL represents the Kullback-Leibner divergence.
The VAE \citep{kingma2013auto,rezende2014stochastic} is an example of one such generative model where $p_\theta(x|z)$ and $q_\phi(z|x)$ are both deep neural networks used to parameterize a simple likelihood (e.g., Bernoulli or Gaussian).

\paragraph{Stochastic Gradient Estimation}

Since $\phi$ can impact the ELBO (though not the true marginal likelihood it lower bounds), we jointly optimize over  $\theta$ and $\phi$.
The gradients of the ELBO objective are:
\begin{align}
    \nabla_\theta \textup{ELBO} &= \mathbb{E}_{q_\phi(z|x)}[\nabla_\theta \log p_\theta(x,z)] \label{eqn:elbo_gradient_theta} \\
    \nabla_\phi \textup{ELBO} &= \nabla_\phi \mathbb{E}_{q_\phi(z|x)}[\log \frac{p_\theta(x,z)}{q_\phi(z|x)}]
    \label{eqn:elbo_gradient_phi}
\end{align}
Eqn.~\ref{eqn:elbo_gradient_theta} can be directly estimated using Monte Carlo techniques. However, as it stands, Eqn.~\ref{eqn:elbo_gradient_phi} is difficult to approximate as we cannot distribute the gradient inside the expectation. Luckily, if we constrain $q_\phi(z|x)$ to certain families, we can reparameterize.

% In this section, we present some terminology and summarize the main ideas behind stochastic optimization of functions with respect to \textit{continuous} distributions. Let $x \in \mathbb{R}^{d}$ be a vector of $d$ observable variables and $z \in \mathbb{R}^{l}$ be a vector of $l$ stochastic latent \s{why latent?} variables. We can define an arbitrary function of both variables, $f_{\phi,\theta}(x, z)$, with parameters $\theta$ and $\phi$. Typically, we consider maximizing the following objective:

% \begin{equation}
%     \mathcal{L}(\phi, \theta) = \mathbb{E}_{z \in Q_{\phi}} [f_{\phi,\theta}(x, z)]
% \label{eqn:stochastic_opt}
% \end{equation}
% \s{missing dependence on $x$}

% where $Q$ is a continuous distribution over $z$ parameterized by $\phi$ with a probability density function denoted by $q_\phi(z)$. We assume that $f_{\phi,\theta}(x, z)$ and $q_\phi(z)$ are differentiable with respect to their parameters. To optimize $\mathcal{L}(\phi, \theta)$, we wish to rely on gradient descent, requiring us to efficiently compute the following:

% \begin{equation}
%     \nabla_{\phi, \theta} \mathcal{L}(\phi, \theta) = \mathbb{E}_{z \in Q_\phi}[\nabla_{\phi, \theta} f_{\phi,\theta}(x, z)]
% \label{eqn:gradient_opt}
% \end{equation}

% Given that the domain of $Q_\phi$ can be infinite, $\mathcal{L}(\phi, \theta)$ is intractable to evaluate, and by consequence, to optimize. As a workaround, we settle for approximating the expectation via unbiased Monte Carlo estimates. We now discuss the most popular class of estimators for a subset of continuous distributions.

\paragraph{Reparameterization Estimators}

Reparameterization refers to isolating sampling from the gradient computation graph \citep{kingma2013auto,rezende2014stochastic}. If we can sample $z \sim q_\phi(z|x)$ by applying a deterministic function $z = g_\phi(\epsilon): \mathbb{R}^{d} \rightarrow \mathbb{R}^{d}$ to sampling from an unparametrized distribution, $\epsilon \sim R$, then we can rewrite Eqn.~\ref{eqn:elbo_gradient_phi} as:
\begin{equation}
    \nabla_\phi \textup{ELBO} = \mathbb{E}_{\epsilon \in R}[\nabla_z \log\frac{p_\theta(x,z(\epsilon))}{q_\phi(z(\epsilon)|x)} \nabla_\phi g_\phi(\epsilon)]
\label{eqn:gradient_reparam}
\end{equation}

which can now be estimated in the usual manner. As an example, if $q_\phi(z|x)$ is a Gaussian, $\mathcal{N}(\mu, \sigma^2)$ and we choose $R$ to be $\mathcal{N}(0, 1)$, then $g(\epsilon) = \epsilon * \sigma + \mu$.

\subsection{Antithetic Sampling}
Normally, we sample i.i.d.~from $q_\phi(z|x)$ and $R$ to approximate Eqns.~\ref{eqn:elbo_gradient_theta} and \ref{eqn:gradient_reparam}, respectively. However, drawing correlated samples could reduce variance in our estimation. Suppose we are given $k$ samples $z_1, z_2, ..., z_{k} \sim q_\phi(z|x)$. We could choose a second set of samples $z_{k+1}, z_{k+2},  ..., z_{2k} \sim q_\phi(z|x, z_1, ..., z_k)$ such that $z_{i+k}$ is somehow the ``opposite" of $z_{i}$. Then, we can write down a new estimator using both sample sets. For example, Eqn.~\ref{eqn:elbo_gradient_theta} can be approximated by:
% \ndg{doing it as an estimator of expectation objective is ok, but not for the gradient (because of moving the gradient inside)}
\begin{equation}
\frac{1}{2k}\sum_{i=1}^{k} \nabla_\theta \log p_\theta(x, z_i) + \nabla_\theta \log p_\theta(x, z_{i+k})
\label{eqn:opt_antithetic}
\end{equation}
Assuming $z_{k+1}, ..., z_{2k}$ is marginally distributed according to $q_\phi(z|x)$, Eqn.~\ref{eqn:opt_antithetic} is unbiased. Moreover, if $q_\phi(z|x)$ is near symmetric, the variance of this new estimator will be cut significantly. But \textit{what does ``opposite" mean?} One idea is to define ``opposite" as choosing $z_{k+i}$ such that the moments of the combined sample set $z_1, ..., z_{2k}$ match the moments of $q_\phi(z|x)$. Intuitively, if $z_i$ is too large, then choosing $z_{k+i}$ to be too small can help rebalance the sample mean, reducing first order errors. Similarly, if our first set of samples is too condensed at the mode, then choosing antithetic samples with higher spread can stabilize the variance closer to its expectation. However, sampling $z_{k+1}, ..., z_{2k}$ with particular sample statistics in mind is a difficult challenge. To solve this, we first narrow our scope to Gaussian distributions, and later extend to other distribution families.
%We begin by re-introducing a statistical method to draw i.i.d. normal samples with a specified sample mean and variance.

%If $Q$ is close to symmetric, the variance of this new estimator can be cut by a significant fraction while remaining unbiased. Depending on how $x_{k+1}, ..., x_{2k}$ are drawn, the sample moments of the $2k$ samples will approximately match the moments of $Q$.

% \s{i would merge this with VAE section. breaks the flow. you want to transition from stochastic optimization / monte carlo to how to generate these samples.}

% We now briefly discuss two more recent advances with VAEs useful in our own experiments.

% \s{mention somewhere that we start with gaussians, then will extend to other distributions}

\section{Generating Gaussian Variates with Given Sample Mean and Variance}
\label{sec:methods}

\begin{figure}[tb]
    \centering
    \includegraphics[width=0.55\columnwidth]{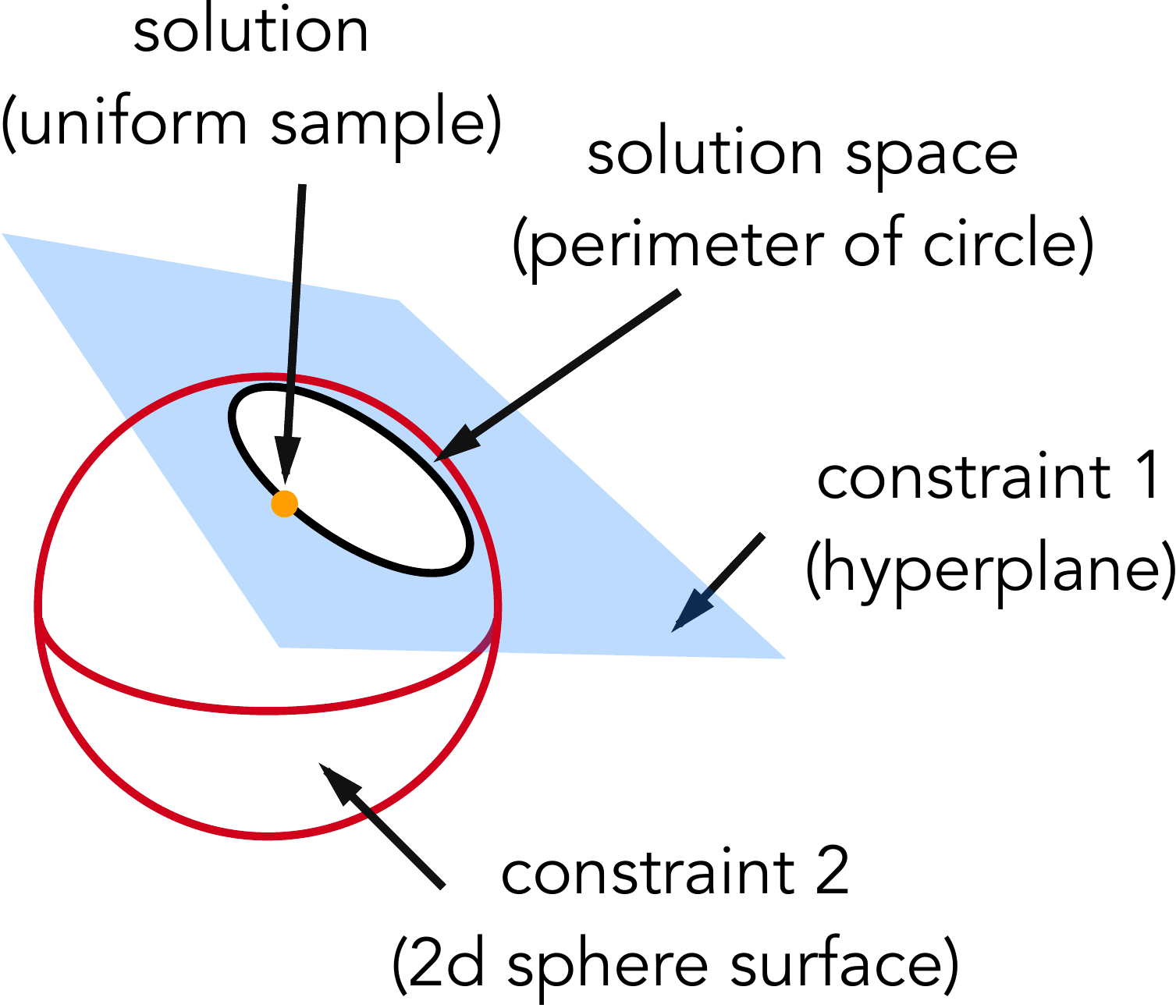}
    \caption{An illustration of Marsaglia's solution to the constrained sampling problem in two dimensions: build a $(k-1)$-dimensional sphere by intersecting a hyperplane and a $k$-dimensional sphere (each representing a contraint). Generating $k$ samples is equivalent to uniformly sampling from the perimeter of the circle.}
    \label{fig:algo}
\end{figure}

% \s{the idea of matching moments is not necessarily obvious to readers. in fact, not entirely clear that's the best thing to do either. so we need some text justifying this approach} If $Q$ is close to symmetric, . Depending on how $x_{k+1}, ..., x_{2k}$ are drawn, the sample moments of the $2k$ samples will approximately match the moments of $Q$.

% \s{convince readers matching moments is a good idea first}

% \s{as a warmup, might be worth explaining how to match the mean. }
We present the \textit{constrained sampling problem}: given a Gaussian distribution with \emph{population mean} $\mu$ and \emph{population variance} $\sigma^2$, we wish to generate $k$ samples $x_1, ..., x_k \sim \mathcal{N}(\mu, \sigma^2)$ subject to the conditions:
\begin{align}
    \frac{1}{k}\sum_{i=1}^{k} x_i &= \eta  \label{eqn:constraint1}\\
    \frac{1}{k}\sum_{i=1}^{k} (x_i - \eta)^2 &= \delta^2 \label{eqn:constraint2}
\end{align}
where the constants $\eta$ and $\delta^2$ are given and represent the \emph{sample mean} and \emph{sample variance}. In other words, how can we draw samples from the correct marginal distribution conditioned on matching desired \emph{sample} moments? For example, we might wish to match \emph{sample} and \emph{population} moments: $\eta=\mu$ and $\delta=\sigma$.
% \s{tell people we'll often want $\eta=\mu=$, .., make sure readers realiza there's a difference between sample and population moments.}

Over forty years, there have been a handful of solutions. We review the algorithm introduced by \citep{marsaglia1980c69}. In our experiments, we reference a second algorithm by \citep{pullin1979generation, cheng1984generation}, which is detailed in the supplement. We chose \citep{marsaglia1980c69} due its simplicity, low computational overhead, and the fact that it makes the fewest random choices of proposed solutions.

\paragraph{Intuition} Since $x_1, ..., x_k$ are independent, we can write the joint density function as follows:
\begin{equation}
    p(x_1, ..., x_k) = (2\pi \sigma^2)^{-\frac{k}{2}}e^{-\frac{1}{2\sigma^2}\sum_i(x_i - \mu)^2}
\label{eqn:joint_density}
\end{equation}
%Furthermore, we rewrite the constraints as:
%\begin{align}
%    \sum_{i=1}^{k} x_i = k\eta \label{eqn:constraint1}\\
%    \sum_{i=1}^{k} (x_i - \eta)^2 = \gamma^2 \label{eqn:constraint2}
%\end{align}
%where $\gamma = \sqrt{k}\delta$. In this form,

We can interpret Eqn.~\ref{eqn:constraint1} as a hyperplane and Eqn.~\ref{eqn:constraint2} as the surface of a sphere in $k$ dimensions. Let $\mathcal{X}$ be the set of all points $(x_1, ..., x_k) \in \mathbb{R}^k$ that satisfy the above constraints. Geometrically, we can view $\mathcal{X}$ as the intersection between the hyperplane and $k$-dimensional sphere, i.e., the surface of a $(k-1)$ dimensional sphere (e.g. a circle if $k=2$).

We make the following important observation: the joint density (Eqn.~\ref{eqn:joint_density}) is constant for all points in $\mathcal{X}$. To see this, we can write the following:
\begin{equation*}
\begin{split}
    \sum_{i}(x_i - \mu)^2 &= \sum_i(x_i - \eta)^2 \\
    & \quad + 2(\eta - \mu)\sum_i(x_i - \eta) + k(\eta - \mu)^2 \\
    &= \sum_i(x_i - \eta)^2 + k(\eta - \mu)^2 \\
    &= k \delta^2 + k(\eta - \mu)^2
    %  &= \gamma^2 + k(\eta - \mu)^2
\end{split}
\label{eqn:proof1}
\end{equation*}
where $\sum_i(x_i - \eta) = \sum_i(x_i) - k\eta = 0$ by Eqn.~\ref{eqn:constraint1}. Plugging this into the density function, rewrite Eqn.~\ref{eqn:joint_density} as:
\begin{equation}
    p(x_1, ..., x_k) = (2\pi \sigma^2)^{-\frac{k}{2}}e^{-\frac{1}{2\sigma^2}(k \delta^2 + k(\eta - \mu)^2)}
\label{eqn:joint_pdf_2}
\end{equation}
Critically, Eqn.~\ref{eqn:joint_pdf_2} is independent of $x_1, ..., x_k$. For any $(\eta, \delta, \mu, \sigma)$, the density for every $x \in \mathcal{X}$ is constant. In other words, the conditional distribution of $x_1, ..., x_k$ given that $x_1, ..., x_k \in \mathcal{X}$ is the uniform distribution over $\mathcal{X}$. Surprisingly, it does \textit{not} depend on $\mu$ or $\sigma$.

Therefore, to solve the constrained sampling problem, we need only be able to sample uniformly from the surface of a $(k-1)$ dimensional sphere.

\paragraph{Marsaglia's Solution}
More precisely, we can generate the required samples $\textbf{x} = (x_1, ..., x_k)$ from a point $\textbf{z} = (z_1, ..., z_{k-1})$ uniformly distributed on the unit sphere in $\mathbb{R}^{k-1}$ centered at the origin by solving the linear system:
\begin{equation}
    \textbf{x} = k^{\frac{1}{2}}\delta\textbf{z}B + \eta \textbf{v}
\label{eqn:linear_system}
\end{equation}
where $\textbf{v} = (1, 1, ..., 1)$ is a $k$ dimensional vector of ones and $B$ is a $(k-1)$ by $k$ matrix such that the rows of $B$ form an orthonormal basis with the null space of $\textbf{v}$ i.e. we choose $B$ where $BB^{t} = I$ and $B\textbf{v}^{t} = 0$, which happens to satisfy our constraints:
\begin{align}
    \textbf{x}\textbf{v}^t &= k\eta \label{eqn:newc1}\\
    (\textbf{x} - \eta\textbf{v})(\textbf{x} - \eta\textbf{v})^t = k\delta^2\textbf{z}BB^t\textbf{z}^t &= k\delta^2 \label{eqn:newc2}
\end{align}
As $\textbf{z}$ is uniformly distributed over the unit $(k-1)$ sphere, Eqn.~\ref{eqn:newc1} and \ref{eqn:newc2} guarantee that $\textbf{x}$ is uniformly distributed in $\mathcal{X}$. We can generate $\textbf{z}$ by sampling $(\epsilon_1, ..., \epsilon_{k-1}) \sim \mathcal{N}(0, 1)$ and setting $z_i = \epsilon_i / \sum_i \epsilon_i^2$. As in \citep{marsaglia1980c69}, we set $B$ to \textsc{RowNormalize}$(A)$ where $A$ is defined as

\begin{center}
\tiny
$\begin{bmatrix}
1-k & 1 & 1 & . & . & . & 1 & 1& 1\\
0 & 2-k & 1 & . & . & . & 1 &1 & 1\\
0 &  0 & 3-k & . & . & . & 1 & 1 & 1\\
. &  &  &  &  &  &  & & .\\
 .&  &  &  &  &  &  & & .\\
. &  &  &  &  &  &  & &  .\\
0 & 0 & 0  & . & . & . &-2 & 1 & 1\\
0 & 0  & 0 & . & . &. & 0 &  -1 & 1
\end{bmatrix}$
\end{center}

and \textsc{RowNormalize} divides each row vector in $A$ by the sum of the elements in that row. We summarize the procedure in Alg.~\ref{algo:marsaglia} and the properties of \textsc{MarsagliaSample} in Prop.~\ref{prop:marsaglia}.

\begin{algorithm}[h!]
\SetAlgoLined
\caption{\textsc{MarsagliaSample}}
\KwData{i.i.d. samples $\epsilon_1, ..., \epsilon_{k-1} \sim \mathcal{N}(0, 1)$; Desired sample mean $\eta$ and variance $\delta^2$; Number of samples $k \in \mathbb{N}$.} % \s{k}
\KwResult{A set of $k$ samples $x_1, x_2, ..., x_k$ marginally distributed as $\mathcal{N}(\mu, \sigma^2)$ with sample mean $\eta$ and sample variance $\delta^2$.}

$\gamma = \sqrt{n}\delta$\;
$s = \sum_i \epsilon_i^2$\;
\For{$i\gets 1$ \KwTo $k$}{
    $z_i = \epsilon_i[(k-i)(k-i+1)s]^{-1/2}$\;
}
$x_1 = (1-k)\gamma z_1 + \eta$\;
$x_k = \gamma \sum_{i=1}^{k-1} z_i + \eta$\;
\For{$i\gets 2$ \KwTo $k-1$}{
    $x_i = (\sum_{i=1}^{i-1} + (i-k)z_i)\gamma + \eta$\;
}
Return $x_1, ..., x_k$\;
\label{algo:marsaglia}
\end{algorithm}

% \begin{prop}
%     Given $k - 1$ i.i.d samples $\boldsymbol{\epsilon} = \epsilon_1, ..., \epsilon_{k-1} \sim \mathcal{N}(0, 1)$ and random variables $\eta$, $\delta^2$, the generated samples $x_1, ..., x_k = \textsc{MarsagliaSample}(\boldsymbol{\epsilon}, \eta, \delta^2,k)$ are independent normal variates with a common mean and variance such that $\frac{1}{k}\sum_i x_i = \eta$ and $\frac{1}{k}\sum_i(x_i - \eta)^2 = \delta^2$.
% \label{prop:marsaglia}
% \end{prop}

\begin{prop}
    For any $k>2$, $\mu \in \mathbb{R}$ and $\sigma^2>0$, if $\eta \sim \mathcal{N}(\mu, \frac{\sigma^2}{k})$ and $\frac{(k-1)\delta^2}{\sigma^2} \sim \chi^2_{k-1}$ and $\boldsymbol{\epsilon} = \epsilon_1, ..., \epsilon_{k-1} \sim \mathcal{N}(0, 1)$ i.i.d., then the generated samples $x_1, ..., x_k = \textsc{MarsagliaSample}(\boldsymbol{\epsilon}, \eta, \delta^2,k)$ are independent normal variates sampled from $\mathcal{N}(\mu, \sigma^2)$ such that $\frac{1}{k}\sum_i x_i = \eta$ and $\frac{1}{k}\sum_i(x_i - \eta)^2 = \delta^2$.
    \label{prop:marsaglia}
\end{prop}
\begin{sketch}We provide a full proof in the supplement. For a sketch, let $\mathbf{x} = (x_1, ..., x_k)$ such that $x_i \sim \mathcal{N}(\mu, \sigma^2)$ i.i.d. Compute sample statistics $\eta, \delta^2$ from $\mathbf{x}$ as defined in Eqn.~\ref{eqn:sample_stats}. Consider the joint distribution over samples and sample moments:
\begin{equation*}
    p(\mathbf{x},\eta,\delta^2) = p(\eta,\delta^2)p(\mathbf{x}|\eta,\delta^2)
\end{equation*}
. We make two observations: first, $\eta,\delta^2$, as defined, are drawn from $p(\eta,\delta^2)$. Second, as hinted above, $p(\mathbf{x}|\eta,\delta^2)$ is the uniform distribution over a $(k-1)$-sphere, which Marsaglia shows us how to sample from. Thus, any samples $\mathbf{x'} \sim p(\mathbf{x}|\eta=\eta,\delta^2=\delta^2)$ will be distributed as $\mathbf{x}$ is (marginally), in other words i.i.d. Gaussian.
\end{sketch}

% \s{add some discussion on how to incorporate the population mean and variance. one way to generate $k$ i.i.d. gaussian with specific population mean and variance is to "`randomly pick the constraints"', and then use marsaglia. specifically, sample $\eta$, $\delta^2$ from gaussina, chi-square ...}

As implied in Prop.~\ref{prop:marsaglia}, if we happen to know the population mean $\mu$ and variance $\sigma^2$ (as we do in variational inference), we could generate $k$ i.i.d. Gaussian variates by sampling $\eta \sim \mathcal{N}(\mu, \frac{\sigma^2}{k})$ and $\frac{(k-1)\delta^2}{\sigma^2} \sim \chi^2_{k-1}$, and passing $\eta$, $\delta^2$ to \textsc{MarsagliaSample}.
% Formally we have the following corollary to Proposition \ref{prop:marsaglia}:
% \s{check $k>2$?}
% \begin{corollary}
%     For any $k>2$, $\mu \in \mathbb{R}$ and $\sigma^2>0$, if $\eta \sim \mathcal{N}(\mu, \frac{\sigma^2}{k})$ and $\frac{(k-1)\delta^2}{\sigma^2} \sim \chi^2_{k-1}$ and $\boldsymbol{\epsilon} = \epsilon_1, ..., \epsilon_{k-1} \sim \mathcal{N}(0, 1)$, then the generated samples $x_1, ..., x_k = \textsc{MarsagliaSample}(\boldsymbol{\epsilon}, \eta, \delta^2,k)$ are independent normal variates sampled from $\mathcal{N}(\mu, \sigma^2)$.
% \label{prop:marsaglia2}
% \end{corollary}

\section{Constrained Antithetic Sampling}

We might be inclined to use \textsc{MarsagliaSample} to directly generate samples with some fixed \emph{deterministic} $\eta=\mu$ and $\delta=\sigma$. However, Prop.~\ref{prop:marsaglia} holds only if the desired sample moments $\eta,\delta$ are \emph{random} variables. If we choose them deterministically, we can no longer guarantee the correct marginal distribution for the samples, thus precluding their use for Monte Carlo estimates. Instead, what we can do is compute $\eta$ and $\delta^2$ from i.i.d. samples from $\mathcal{N}(\mu, \sigma^2)$, derive antithetic sample moments, and use \textsc{MarsagliaSample} to generate a second set of samples distributed accordingly.

More precisely, given a set of $k$ independent normal variates $(x_1, ..., x_k) \sim \mathcal{N}(\mu, \sigma^2)$, we would like to generate a new set of $k$ normal variates $(x_{k+1}, ..., x_{2k})$ such that the combined sample moments match the population moments, $\frac{1}{2k}\sum_{i=1}^{2k} x_i = \mu$ and $\frac{1}{2k}\sum_{i=1}^{2k} (x_i -\mu)^2 = \sigma^2$. We call the second set of samples $(x_{k+1}, ..., x_{2k})$ \textit{antithetic} to the first set.

We compute sample statistics from the first set:
\begin{align}
    \eta &= \frac{1}{k}\sum_{i=1}^{k} x_i & \delta^2 &= \frac{1}{k}\sum_{i=1}^{k} (x_i -\mu)^2
    \label{eqn:sample_stats}
\end{align}
Note that  $\eta,\delta$ are random variables, satisfying $\eta \sim \mathcal{N}(\mu, \frac{\sigma^2}{k})$ and $\frac{(k-1)\delta^2}{\sigma^2} \sim \chi^2_{k-1}$. Ideally, we would want the second set to come from an ``opposing" $\eta'$ and $\delta'$. To choose $\eta'$ and $\delta'$, we leverage the \textit{inverse CDF transform}: given the cumulative distribution function (CDF) for a random variable $X$, denoted $F_X$, we can define a uniform variate $Y= F_X(X)$. The \textit{antithetic} uniform variable is then $Y' = 1 - Y$, which upon application of the inverse CDF function, is mapped back to a properly distributed antithetic variate $X' = F^{-1}_X(Y')$. Crucially, $X$ and $X'$ have the same marginal distribution, but are not independent.

Let $F_{\eta}$ represent a Gaussian CDF and $F_{\delta}$ represent a Chi-squared CDF. We can derive $\eta'$ and $\delta'$ as:
\begin{align}
    \eta' &= F^{-1}_\eta(1 - F_\eta(\eta)) \\
    \frac{(k-1)(\delta')^2}{\sigma^2} &= F^{-1}_\delta\left(1 - F_\delta \left(\frac{(k-1)\delta^2}{\sigma^2}\right)\right) \label{eq:inversechi}
\end{align}
Crucially, $\eta', \delta'$ chosen this way are random variables with the correct marginal distributions, i.e., $\eta' \sim \mathcal{N}(\mu, \frac{\sigma^2}{k})$ and $\frac{(k-1)(\delta')^2}{\sigma^2} \sim \chi^2_{k-1}$. knowing $\eta', \delta'$, it is straightforward to generate antithetic samples with \textsc{MarsagliaSample}. We summarize the algorithm in Alg.~\ref{alg:antithetic} and its properties in Prop.~\ref{prop:antithetic}.

\begin{algorithm}[h!]
\SetAlgoLined
\caption{\textsc{AntitheticSample}}
\KwData{i.i.d. samples $(x_1, ..., x_k) \sim \mathcal{N}(\mu, \sigma^2)$; i.i.d. samples $\boldsymbol{\epsilon} = (\epsilon_1, ..., \epsilon_{k-1}) \sim \mathcal{N}(0, 1)$; Population mean $\mu$ and variance $\sigma^2$; Number of samples $k \in \mathbb{N}$.} % \s{k}
\KwResult{A set of $k$ samples $(x_{k+1}, x_{k+2}, ..., x_{2k})$ marginally distributed as $\mathcal{N}(\mu, \sigma^2)$ with sample mean $\eta'$ and sample standard deviation $\delta'$.}

$v = k -1$\;
$\eta = \frac{1}{k}\sum_{i=1}^k x_i$\;
$\delta^2 = \frac{1}{k}\sum_{i=1}^k (x_i - \eta)^2$\;
% $\eta' = 2\mu - \eta$\;
$\eta' = F^{-1}_\eta(1 - F_\eta(\eta))$\;
$\lambda = v\delta^2/\sigma^2$\;
$\lambda' = F^{-1}_\delta(1 - F_\delta(\lambda))$\;
% $\lambda' = v(2(1 - \frac{3}{16v} - \frac{7}{512v^2} + \frac{231}{8192v^3}) - (\frac{\lambda}{v})^{1/4})^4$\;
$(\delta')^2 = \lambda'\sigma^2/v$\;
$(x_{k+1}, ..., x_{2k}) = \textsc{MarsagliaSample}(\boldsymbol{\epsilon}, \eta', (\delta')^2, k)$\;
Return $(x_{k+1}, ..., x_{2k})$\;
\label{alg:antithetic}
\end{algorithm}

% \s{not sure we can make any claim on the sample variance. even if the chi-square thing was exact, not clear to me the variance would match exactly}
\begin{prop}
Given $k - 1$ i.i.d samples $\boldsymbol{\epsilon} = (\epsilon_1, ..., \epsilon_{k-1}) \sim \mathcal{N}(0, 1)$, $k$ i.i.d. samples $\boldsymbol{x} = (x_1, ..., x_k) \sim \mathcal{N}(\mu, \sigma^2)$, let $(x_{k+1}, ..., x_{2k}) = \textsc{AntitheticSample}(\boldsymbol{x}, \boldsymbol{\epsilon}, \mu, \sigma^2, k)$ be the generated antithetic samples. Then:
\begin{enumerate}
\item $x_{k+1}, ..., x_{2k}$ are independent normal variates sampled from $\mathcal{N}(\mu, \sigma^2)$.
\item The combined sample mean $\frac{1}{2k}\sum_{i=1}^{2k}x_i$ is equal to the population mean $\mu$.
\item The sample variance of $x_{k+1}, ..., x_{2k}$ is anticorrelated with the sample variance of $x_1, ..., x_k$.
\end{enumerate}
%i.i.d. but each $x_{k+i}$ is negatively correlated with $x_{i}$. The combined sample mean $\frac{1}{2k}\sum_{i=1}^{2k}x_i = \mu$ will match the population mean. The combined sample variance $\frac{1}{2k}\sum_{i=1}^{2k}(x_i-\mu)^2 = \sigma^2$ will match population variance.
% The total sample variance $\frac{1}{2k}\sum_{i=1}^{2k}(x_i - \mu)^2 \approx \sigma^2$ will be close to the population variance (depending on the normal approximation error).
\label{prop:antithetic}
\end{prop}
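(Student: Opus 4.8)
The plan is to treat the three conclusions separately, since each rests on a different property of the construction in \textsc{AntitheticSample}. Throughout I would exploit two facts about the \emph{inverse CDF (antithetic) transform} $x \mapsto F^{-1}(1 - F(x))$: (i) it preserves the marginal law, because if $\eta$ has CDF $F_\eta$ then $F_\eta(\eta)$ is uniform on $[0,1]$ and so is $1 - F_\eta(\eta)$; and (ii) when $F$ is the CDF of a distribution symmetric about $\mu$ (as for the Gaussian sample mean $\eta$), the transform collapses to the reflection $\eta \mapsto 2\mu - \eta$.

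For part 1, I would first check that the inputs handed to \textsc{MarsagliaSample} satisfy the hypotheses of Prop.~\ref{prop:marsaglia}. By property (i), $\eta' \sim \mathcal{N}(\mu, \sigma^2/k)$ and $\lambda' = F_\delta^{-1}(1 - F_\delta(\lambda)) \sim \chi^2_{k-1}$, so $\frac{(k-1)(\delta')^2}{\sigma^2} \sim \chi^2_{k-1}$. The delicate point is the \emph{joint} law: Prop.~\ref{prop:marsaglia} requires $(\eta', (\delta')^2, \boldsymbol{\epsilon})$ to be distributed exactly as in its hypothesis, which (for the stated marginals to assemble into the law induced by i.i.d. Gaussian samples) needs $\eta' \perp (\delta')^2$. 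Here I would invoke the classical independence of the sample mean $\eta$ and sample variance $\delta^2$ of a normal sample; since $\eta'$ is a function of $\eta$ alone and $(\delta')^2$ a function of $\delta^2$ alone, the two remain independent, and both are independent of $\boldsymbol{\epsilon}$ because $\boldsymbol{\epsilon}$ is drawn separately from $(x_1, \dots, x_k)$. With the joint law matched, Prop.~\ref{prop:marsaglia} immediately yields that $x_{k+1}, \dots, x_{2k}$ are i.i.d. $\mathcal{N}(\mu, \sigma^2)$.

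For part 2, I would use property (ii): Gaussian symmetry gives $\eta' = 2\mu - \eta$. By construction the first block has sample mean $\eta$, so $\sum_{i=1}^k x_i = k\eta$, and \textsc{MarsagliaSample} guarantees the second block has sample mean $\eta'$, so $\sum_{i=k+1}^{2k} x_i = k\eta'$; hence the combined mean is $\frac{1}{2k}(k\eta + k\eta') = \frac{1}{2}(\eta + \eta') = \mu$. For part 3, I would argue by monotonicity. The antithetic map $\lambda \mapsto \lambda' = F_\delta^{-1}(1 - F_\delta(\lambda))$ is strictly decreasing, being the composition of the increasing $F_\delta^{-1}$, the decreasing $1 - (\cdot)$, and the increasing $F_\delta$; since $\delta^2$ and $(\delta')^2$ are positive rescalings of $\lambda$ and $\lambda'$, it follows that $(\delta')^2$ is a decreasing function of $\delta^2$. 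I would then use the standard fact that a random variable is nonpositively correlated with any decreasing function of itself: taking an independent copy $\tilde\lambda$ of $\lambda$ and writing $\tilde\lambda' = F_\delta^{-1}(1 - F_\delta(\tilde\lambda))$, the product $(\lambda - \tilde\lambda)(\lambda' - \tilde\lambda')$ is almost surely $\le 0$, and taking expectations gives $2\,\mathrm{Cov}(\lambda, \lambda') \le 0$, whence $\mathrm{Cov}(\delta^2, (\delta')^2) \le 0$.

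I expect the main obstacle to be part 1, specifically pinning down the joint distribution of $(\eta', (\delta')^2, \boldsymbol{\epsilon})$ so that Prop.~\ref{prop:marsaglia} applies verbatim; this hinges on the independence of the sample mean and sample variance of a Gaussian sample together with the transforms acting on the two coordinates separately. Parts 2 and 3, by contrast, reduce respectively to the reflection identity and a one-line covariance inequality.
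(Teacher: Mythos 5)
Your proof is correct and follows essentially the same route as the paper's: part 1 by reducing to Prop.~\ref{prop:marsaglia}, part 2 via the reflection $\eta' = 2\mu - \eta$, and part 3 via the fact that the inverse-CDF chi-squared transform in Eq.~\ref{eq:inversechi} is decreasing. The one place you go beyond the paper's terse argument---checking that $\eta'$ and $(\delta')^2$ have the correct \emph{joint} law (independence), not merely the correct marginals, before invoking Prop.~\ref{prop:marsaglia}---is a legitimate gap-fill rather than a departure: the paper's supplementary proof relies on exactly the same classical independence of the Gaussian sample mean and sample variance, and your explicit independent-copy covariance inequality for part 3 just makes precise what the paper asserts by citation.
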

\begin{proof}
The first property follows immediately from Prop.~\ref{prop:marsaglia}, as by construction $\eta', \delta'$ have the correct marginal distribution. Simple algebra shows that the inverse Gaussian CDF transform simplifies to $\eta' = 2 * \mu - \eta$, giving the desired relationship $\eta/2 + \eta'/2 = \mu $. The third property follows from Eq. \ref{eq:inversechi}.
\end{proof}

Since both sets of samples share the same (correct) marginal distribution, $x_1, ..., x_{2k}$ can be used to obtain unbiased Monte Carlo estimates.
% \s{try to make the proof more precise, actually write down the expectation with respect to the joint. in the proof, use previous result that the samples have the right marginal distribution. that is the key step..}
\begin{prop}
    %For a normal distribution $Q$, given $x_1, ..., x_k \sim Q$ i.i.d. samples and $x_{k+1}, ..., x_{2k}$ i.i.d. antithetic samples drawn using $\textsc{AntitheticSample}$, the combined set $(x_1, ..., x_{2k})$ forms an unbiased estimator, $\frac{1}{2k}\sum_{i=1}^{2k} f_\theta(x_i)Q(x_i)$ for a generic expectation, $\mathbb{E}_{x \sim Q}[f_\theta(x)]$ of a function $f$.
        Given $k - 1$ i.i.d samples $\boldsymbol{\epsilon} = (\epsilon_1, ..., \epsilon_{k-1}) \sim \mathcal{N}(0, 1)$, $k$ i.i.d. samples $\boldsymbol{x} = (x_1, ..., x_k) \sim \mathcal{N}(\mu, \sigma^2)$, let $(x_{k+1}, ..., x_{2k}) = \textsc{AntitheticSample}(\boldsymbol{x}, \boldsymbol{\epsilon}, \mu, \sigma^2, k)$ be the generated antithetic samples. Then $\frac{1}{2k} \sum_{i=1}^{2k} f(x_i)$ is an unbiased estimator of $\mathbb{E}_{x \sim \mathcal{N}(\mu, \sigma^2)}[f(x)]$.
 \label{unbiased}
\end{prop}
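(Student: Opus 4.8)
The plan is to reduce the claim entirely to the marginal distributions of the individual samples, exploiting the fact that unbiasedness is a statement about a single expectation and therefore only sees first-order (marginal) information, never the joint law. First I would apply linearity of expectation to write
\begin{equation*}
    \mathbb{E}\left[\frac{1}{2k}\sum_{i=1}^{2k} f(x_i)\right] = \frac{1}{2k}\sum_{i=1}^{2k}\mathbb{E}[f(x_i)],
\end{equation*}
which is valid regardless of any dependence among the $x_i$, provided each term is finite. This immediately decouples the estimator into $2k$ separate one-dimensional expectations.

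Next I would treat the two blocks of samples separately. For $i = 1, \dots, k$, the samples are i.i.d.\ draws from $\mathcal{N}(\mu,\sigma^2)$ by hypothesis, so $\mathbb{E}[f(x_i)] = \mathbb{E}_{x\sim\mathcal{N}(\mu,\sigma^2)}[f(x)]$ trivially. For $i = k+1, \dots, 2k$, I would invoke the first property of Proposition~\ref{prop:antithetic}, which establishes that each antithetic sample $x_i$ is marginally distributed as $\mathcal{N}(\mu,\sigma^2)$. Since $\mathbb{E}[f(x_i)]$ depends only on the marginal law of $x_i$, this again yields $\mathbb{E}[f(x_i)] = \mathbb{E}_{x\sim\mathcal{N}(\mu,\sigma^2)}[f(x)]$.

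Substituting both cases back into the sum, every one of the $2k$ terms equals the common value $\mathbb{E}_{x\sim\mathcal{N}(\mu,\sigma^2)}[f(x)]$, so the prefactor $\frac{1}{2k}$ cancels against the count $2k$ and the estimator's expectation is exactly $\mathbb{E}_{x\sim\mathcal{N}(\mu,\sigma^2)}[f(x)]$, as required.

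The conceptual point worth flagging — rather than a genuine technical obstacle — is precisely the one the surrounding text emphasizes: the antithetic construction deliberately makes $x_{k+1},\dots,x_{2k}$ \emph{dependent} on $x_1,\dots,x_k$ (and dependent among themselves, since they are generated jointly via \textsc{MarsagliaSample} and share the moment constraints). One might worry this dependence spoils the estimator. The proof makes clear that it does not: correlation structure influences only the \emph{variance} of the estimator (indeed, that is the whole purpose of the antithetic coupling), whereas the bias is governed solely by the marginals, which Proposition~\ref{prop:antithetic} has already certified to be correct. The only care needed is to apply Proposition~\ref{prop:antithetic} for the marginal of each antithetic sample rather than attempting to reason about their joint distribution, and to assume $f$ is integrable under $\mathcal{N}(\mu,\sigma^2)$ so that linearity of expectation applies term by term.
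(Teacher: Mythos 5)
Your proof is correct and follows essentially the same route as the paper's: linearity of expectation (valid despite the dependence between the two blocks of samples) combined with the fact that each $x_i$ is marginally $\mathcal{N}(\mu,\sigma^2)$ --- by hypothesis for $i \le k$ and by Prop.~\ref{prop:antithetic} for $i > k$. Your added remarks on integrability of $f$ and on dependence affecting only variance, not bias, are accurate refinements of the same argument.
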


\begin{proof}
Let $q(x_1, \cdots, x_k, x_{k+1}, \cdots, x_{2k})$ denote the joint distribution of the $2k$ samples. Note the two groups of samples $(x_1, ..., x_k)$ and $(x_{k+1}, ..., x_{2k})$ are not independent. However,
\begin{align*}
\mathbb{E}_{(x_1, \cdots, x_k, x_{k+1}, \cdots, x_{2k}) \sim q} \left[ \frac{1}{2k} \sum_{i=1}^{2k} f(x_i) \right] = \\
\frac{1}{2k} \sum_{i=1}^{2k} \mathbb{E}_{x_i \sim q_i(x_i)} \left[f(x_i) \right] = \mathbb{E}_{x \sim \mathcal{N}(\mu, \sigma^2)}[f(x)]
\end{align*}
because by assumption and Prop.~\ref{prop:antithetic}, each $x_i$ is marginally distributed as $\mathcal{N}(\mu, \sigma^2)$.
%We can define an estimator using only the first $k$ samples, $e_1 = \frac{1}{k}\sum_{i=1}^{k} f_\theta(x_i)Q(x_i)$. By definition, $e_1$ is unbiased. Similarly, we define a second estimator using the next $k$ samples, $e_2 = \frac{1}{k}\sum_{i=k+1}^{2k} f_\theta(x_i)Q(x_i)$. By Prop.~\ref{prop:antithetic}, $x_{k+1}, ..., x_{2k} \sim Q$ are marginally distributed according to $Q$. Therefore, $e_2$ is also unbiased. We write the combined estimator as $\frac{1}{2k}\sum_{i=1}^{2k} f_\theta(x_i)Q(x_i) = \frac{e_1}{2} + \frac{e_2}{2}$. A linear combination of two unbiased estimators is unbiased.
\label{prop:test}
\end{proof}

\subsection{Approximate Antithetic Sampling}

If $F_\eta$ and $F_\delta$ were well-defined and invertible, we could use Alg.~\ref{alg:antithetic} as is, with its good guarantees. On one hand, since $\eta$ is normally distributed, the inverse CDF transform simplifies to:
\begin{equation}
    \eta' = 2 * \mu - \eta
\label{eqn:inversecdf_mean}
\end{equation}
However, there is no general closed form expression for $F^{-1}_\delta$. Our options are then to either use a discretized table of probabilities or approximate the inverse CDF. Because we desire differentiability, we choose to use a normal approximation to $F_{\delta}$.

\paragraph{Antithetic Hawkins-Wixley}

\cite{canal2005normal} surveys a variety of normal approximations, all of which are a linear combination of $\chi^2$ variates to a power root. We choose to use \citep{hawkins1986note} as (P1) it is an even power, (P2) it contains only 1 term involving a random variate, and (P3) is shown to work better for smaller degrees of freedom (smaller sample sizes). We derive a closed form for computing $\delta'$ from $\delta$ by combining the normal approximation with Eqn.~\ref{eqn:inversecdf_mean}. We denote this final transform as the \textit{antithetic Hawkins-Wixley transform}:
\begin{equation}
    \lambda' = v(2(1 - \frac{3}{16v} - \frac{7}{512v^2} + \frac{231}{8192v^3}) - (\frac{\lambda}{v})^{1/4})^4
    \label{eqn:anti_approx}
\end{equation}
where $\lambda \sim \chi^2_{v}$ with $v$ being the degree of freedom. Therefore, if we set $\lambda = (k-1)\delta^2/\sigma^2 \sim \chi^2_{k-1}$ and $v = k - 1$, then we can derive $(\delta')^2 = \lambda'\sigma^2/(k-1)$ where $\lambda'$ is computed as in Eqn.~\ref{eqn:anti_approx}, whose derivation can be found in the supplementary material.

P1 is important as odd degree approximations e.g. \citep{wilson1931distribution} can result in a negative value for $\lambda'$ under small $k$. P2 is required to derive a closed form as most linear combinations do not factor. P3 is desirable for variational inference.

To update Alg.~\ref{alg:antithetic}, we swap the fourth line with Eqn.~\ref{eqn:inversecdf_mean} and the sixth line with Eqn.~\ref{eqn:anti_approx}.
The first property in Prop. \ref{prop:antithetic} and therefore also Prop. \ref{unbiased} do not hold anymore: the approximate \textsc{AntitheticSample} has bias that depends on the approximation error in Eqn.~\ref{eqn:anti_approx}.
%
%We summarize the properties of the approximate antithetic sampler below:
%\begin{prop}
%Given $k - 1$ i.i.d samples $\boldsymbol{\epsilon} = (\epsilon_1, ..., \epsilon_{k-1}) \sim \mathcal{N}(0, 1)$, $k$ i.i.d. samples $\boldsymbol{x} = (x_1, ..., x_k) \sim \mathcal{N}(\mu, \sigma^2)$, the generated samples $(x_{k+1}, ..., x_{2k}) = \textsc{AntitheticSample}(\boldsymbol{x}, \boldsymbol{\epsilon}, \mu, \sigma^2, k)$ are i.i.d. but each $x_{k+i}$ is negatively correlated with $x_{i}$. The combined sample mean $\frac{1}{2k}\sum_{i=1}^{2k}x_i = \mu$ will match the population mean. The samples $x_1, ..., x_{2k}$ no longer form an unbiased estimator with error dependent on the approximation error in Eqn.~\ref{eqn:anti_approx}.
%\label{prop:antithetic2}
%\end{prop}
In practice, we find the approximate \textsc{AntitheticSample} to be effective.
From now on, when we refer to \textsc{AntitheticSample}, we refer to the approximate version. See supplement for a written algorithm. We refer to Fig.~\ref{fig:demo} for an illustration of the impact of antithetics: sampling i.i.d. could result in skewed sample distributions that over-emphasize the mode or tails, especially when drawing very few samples. Including antithetic samples helps to ``stabilize" the sample distribution to be closer to the true distribution.

\begin{figure}[h!]
    \centering
    \begin{subfigure}[b]{\columnwidth}
        \includegraphics[width=\columnwidth]{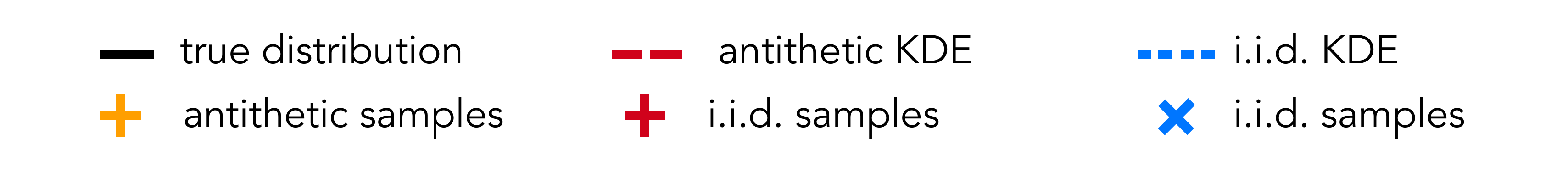}
    \end{subfigure}
    \begin{subfigure}[b]{.24\columnwidth}
        \includegraphics[width=\columnwidth]{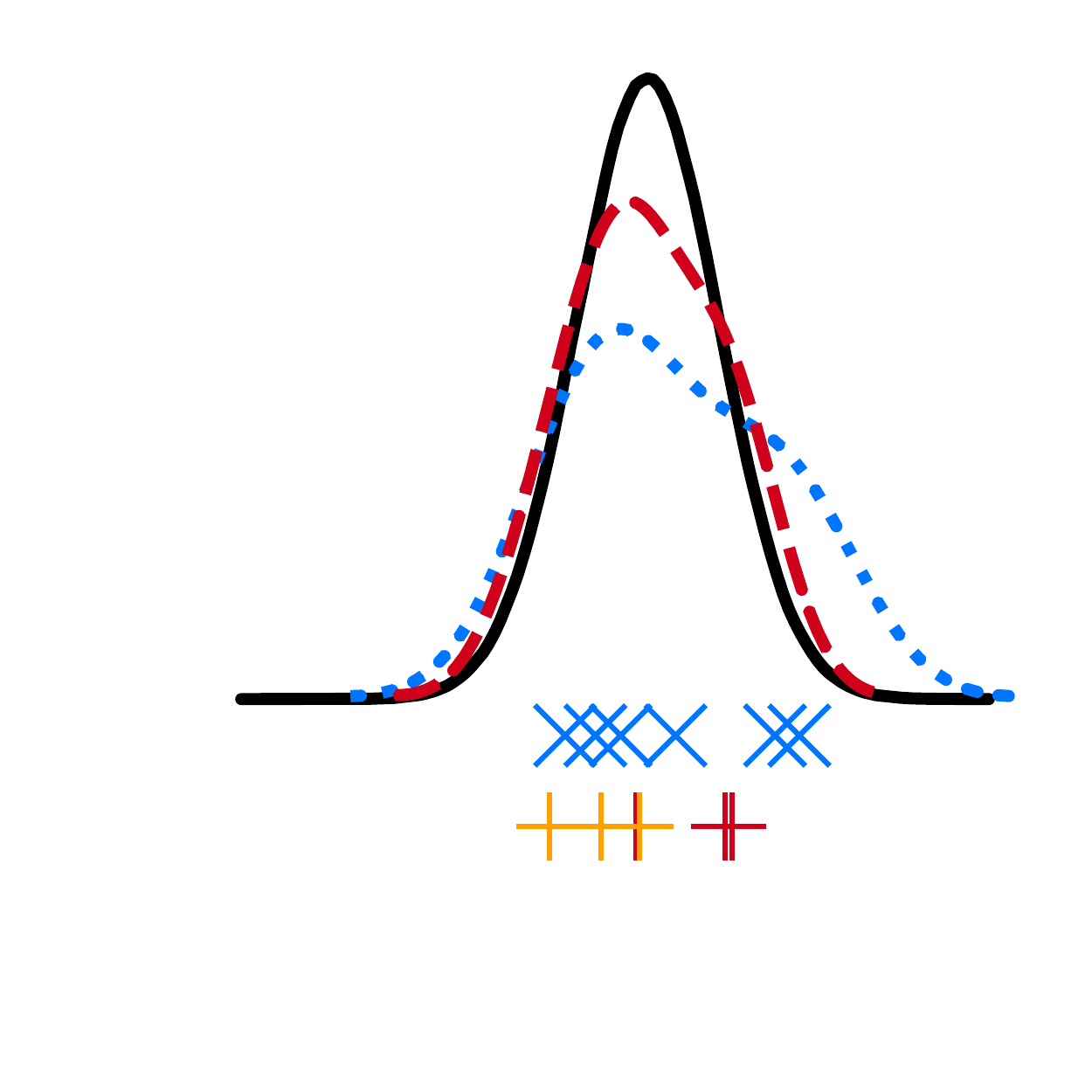}
        \caption{$k=8$}
        \label{fig:demo8}
    \end{subfigure}
    \begin{subfigure}[b]{.24\columnwidth}
        \includegraphics[width=\columnwidth]{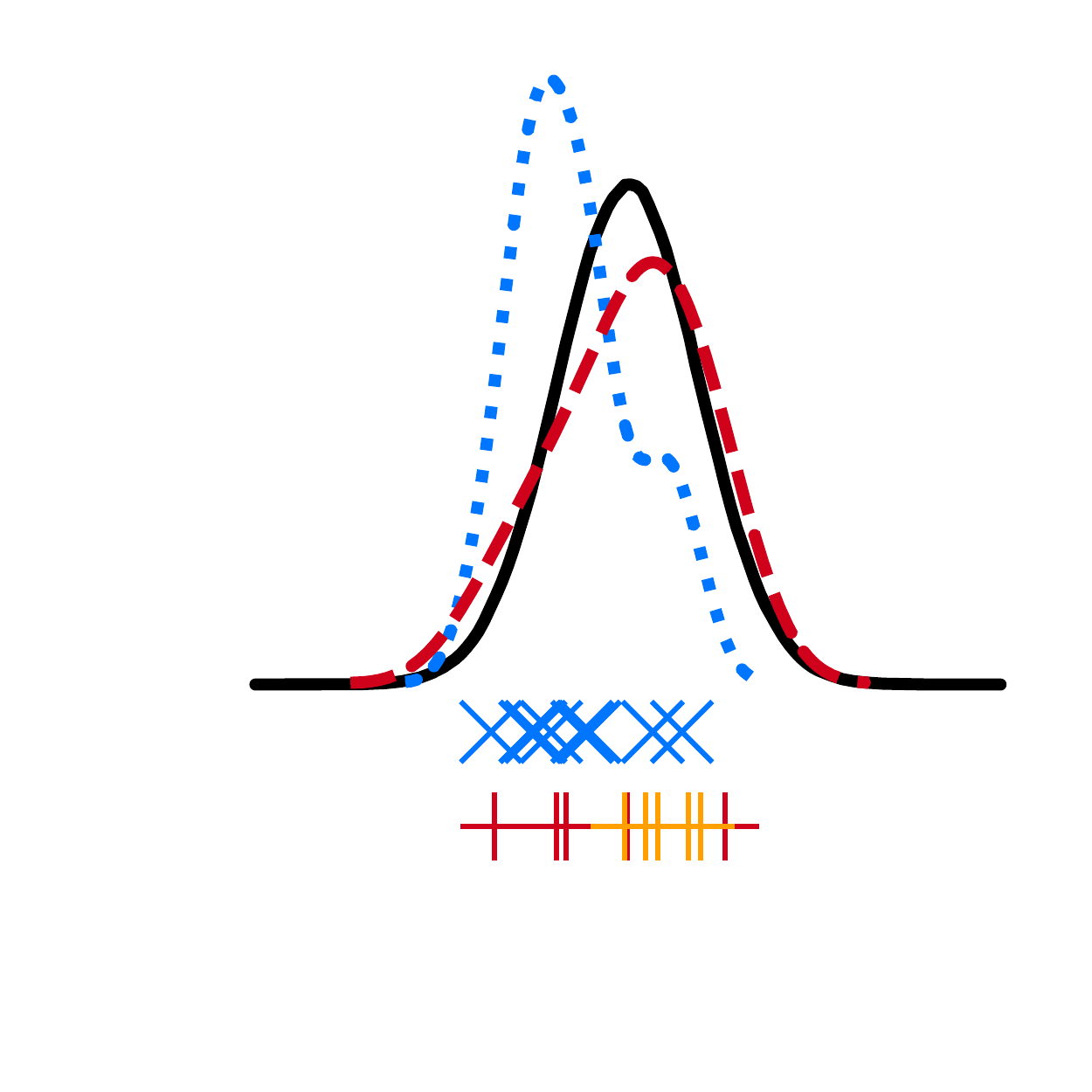}
        \caption{$k=10$}
        \label{fig:demo10}
    \end{subfigure}
    \begin{subfigure}[b]{.24\columnwidth}
        \includegraphics[width=\columnwidth]{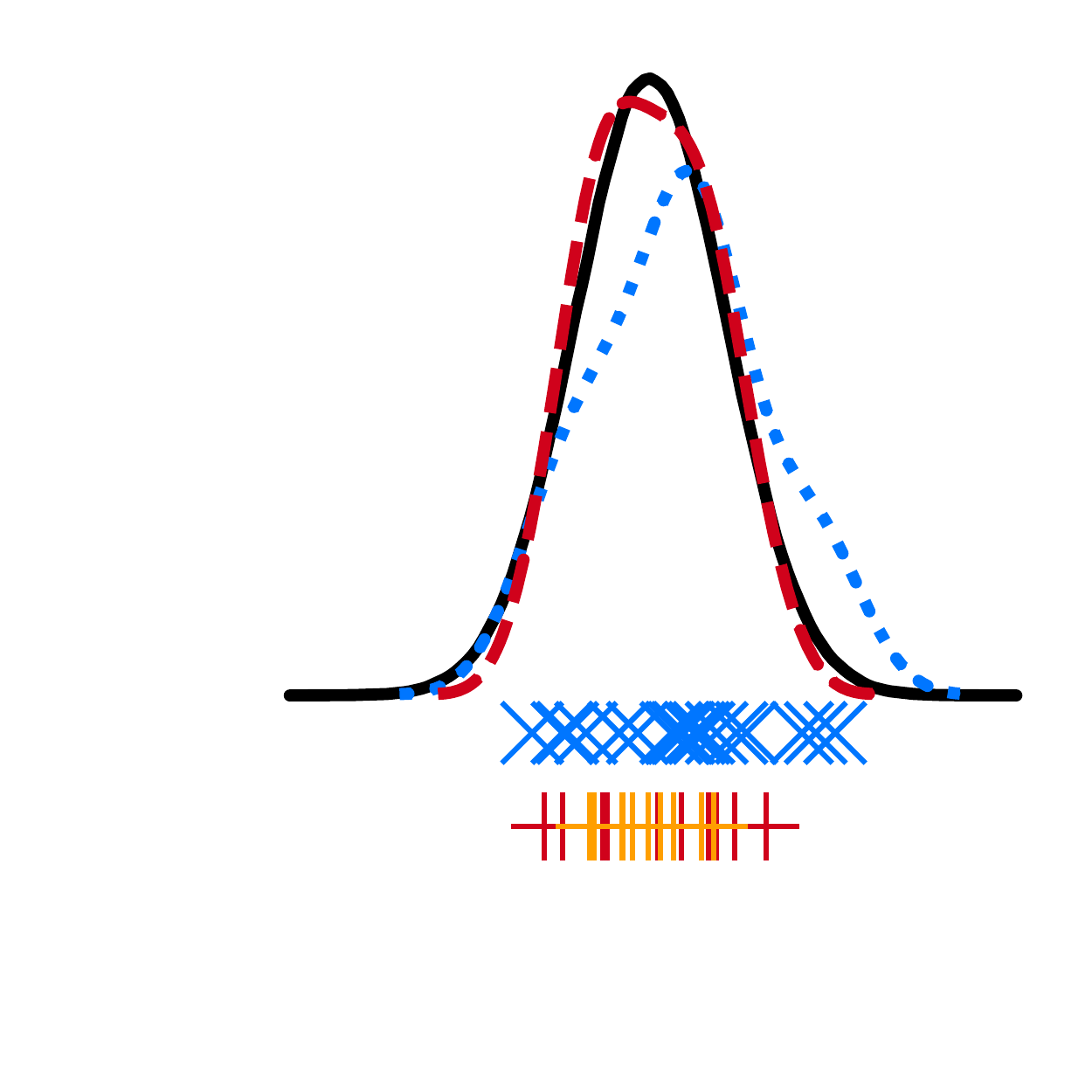}
        \caption{$k=20$}
        \label{fig:demo20}
    \end{subfigure}
    \begin{subfigure}[b]{.24\columnwidth}
        \includegraphics[width=\columnwidth]{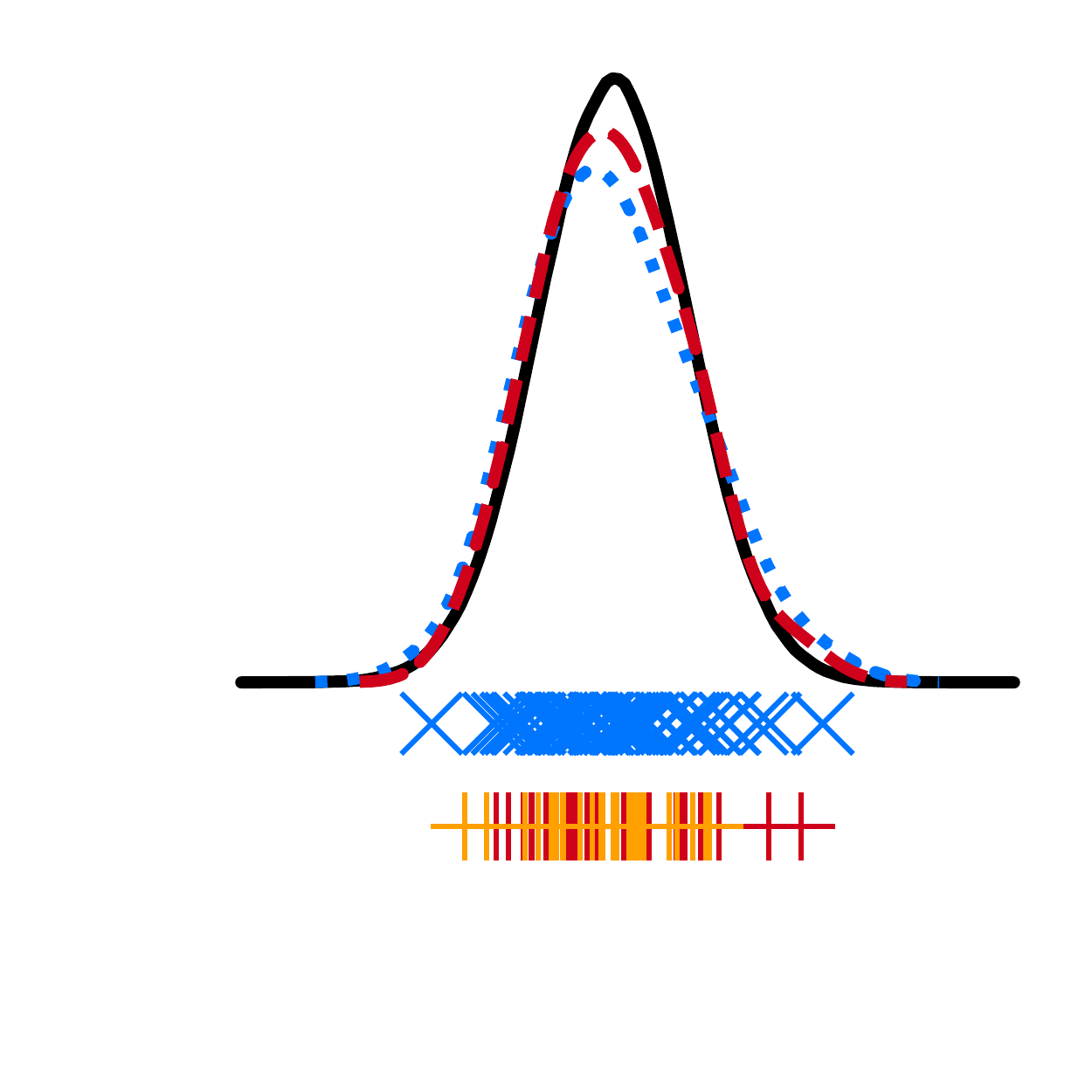}
        \caption{$k=50$}
        \label{fig:demo50}
    \end{subfigure}
    \caption{The effect of \textsc{AntitheticSample} in 1 dimension. We vary the number of samples $k$, and plot the true distribution (solid black line), a kernel density estimate (KDE) of the empirical distribution (dotted blue line) of $2k$ i.i.d. samples (blue), and a KDE of the empirical distribution (dashed red line) of $k$ i.i.d. samples (red) pooled with $k$ antithetic samples (orange).
%    The ``+" and ``x" symbols in each figure represent the raw samples. With smaller $k$, the red curve is a much closer approximation of the black curve than its blue counterpart. As $k$ increases, both sampling techniques converge on the true distribution.
    This snapshot was taken from the first epoch of training an AntiVAE on dynamic MNIST.}

    % \rdh{I wonder whether a more schematic version of this can go earlier in the intro to motivate the intuition? also, make the lines thinker and the sample markers on the x axis a lot bigger so you can see them. also, it looks like the true distribution is different across the columns? (it's taller)}}
        % \s{needs better captioning and legend. unclear what is shown. also not readable in black and white}
    \label{fig:demo}
\end{figure}

% Alternatively, \cite{cheng1982use, cheng1984generation} proposes an antithetic sampling technique using \textsc{MatchMoments}. Given $X \sim \mathcal{N}(\mu, \sigma^2)$, derive \textit{antithetic} moments from the moments of $X$. Largely, this involves applications of the inverse-CDF trick and normal approximations for non-normal variates, resulting in a second set of antithetic samples $X'$ such that when used in combination with $X$, should help reduce first order errors. In practice, we find the normal approximations to be poor, resulting in little difference in estimator quality. We find directly matching moments to be much simpler and more effective. \s{i don't understand this paragraph}

% \ndg{is it at all feasible to have a theorem showing that variance of an expectation estimate will be strictly lower after matching moments (under some appropriately draconian assumptions)?}

% \mwu{my gut instinct is that this depends a lot on the function in the expectation.}

\section{Generalization to Other Families}
\label{sec:generalization}

% \s{be explicit about what you get here. you are not matching the moments of the resulting distribution. it's a bit of a weird thing..}

\cite{marsaglia1980c69}'s algorithm is restricted to distribution families that can be transformed to a unit sphere (primarily Gaussians), as are many similar algorithms \citep{cheng1984generation, pullin1979generation}. However, we can explore ``generalizing" \textsc{AntitheticSample} to a wider class of families by first antithetically sampling in a Gaussian distribution, then transforming its samples to samples from another family using a deterministic function, $g: \mathbb{R}^d \rightarrow \mathbb{R}^d$. Although we are not explicitly matching the moments of the derived distributions, we expect that transformations of more representative samples in an initial distribution may be more representative in the  transformed distribution. We now discuss a few candidates for $g(\cdot)$.

\subsection{One-Liners}
\cite{devroye1996random} presents a large suite of ``one line" transformations between distributions. We focus on three examples starting from a Gaussian to (1) Log Normal, (2) Exponential, and (3) Cauchy. Many additional transformations (e.g.~to Pareto, Gumbel, Weibull, etc.) can be used in a similar fashion. Let $F_x$ refer to the CDF of a random variable $x$. See supplementary material for derivations.

% \begin{figure}[h!]
%     \centering
%     \begin{subfigure}[b]{\columnwidth}
%         \includegraphics[width=\columnwidth]{}
%         \caption{}
%         \label{fig:skematic2}
%     \end{subfigure}
%     \begin{subfigure}[b]{\columnwidth}
%         \includegraphics[width=\columnwidth]{}
%         \caption{}
%         \label{fig:skematic3}
%     \end{subfigure}
%     \caption{Composition of a series of transformations on samples from $q(z|x)$. In (a), \textsc{MatchMoments} is applied prior to reparameterization so it escapes the backwards pass. In (b) \textsc{MatchMoments} is included in backpropagation. $g(z)$ represents a series of transformations e.g. one liners, flows, etc.}
% \end{figure}

\paragraph{Log Normal}$g(z) = e^z$ where $z \sim \mathcal{N}(\mu, \sigma^2)$.
% For this example, we can apply $g$ after \textsc{MatchMoments} and backpropagate.

% \paragraph{Inverse CDF transform} More complex one liners (again) operate via the inverse CDF transform.  Let $F_x$ denote the CDF for a variable $x$, then $y = F_x(x)$ where $y \in U(0, 1)$. We can derive the inverse transformation $F_x^{-1}(y)$ by solving the system: $F_x(F_x^{-1}(y)) = y$. This allows us to map a Gaussian variate to a Uniform, which can then be mapped to many families.

% With this, the variational posterior is no longer Gaussian: the computational flow becomes 1) sample from a unit Gaussian, 2) apply \textsc{MatchMoments}, 3) map to Uniform, 4) apply some $g$ to map to a final distribution. Thus, we no longer backpropagate through \textsc{MatchMoments}. \s{is there a way to do it with adjoints?}

\paragraph{Exponential} Let $F_x(x) = 1 - \exp^{\lambda x}$ where $\lambda\in \mathbb{R}^d$ is a learnable parameter. Then $F_x^{-1}(y) = -\frac{1}{\lambda} \log y$. Thus, $g(u, \lambda) = -\frac{1}{\lambda} \log u$ where $u \in U(0, 1)$.

\paragraph{Cauchy} Let $F_x(x) = \frac{1}{2} + \frac{1}{\pi}\arctan(\frac{x - x_0}{\gamma})$ where $x_0\in \mathbb{R}^d, \gamma\in \mathbb{R}^d$ are learnable parameters. Then $F_x^{-1}(y) = \gamma(\tan(\pi y) + x_0)$. Given $u \in U(0, 1)$, we define $g(u, x_0, \gamma) = \gamma(\tan(\pi u) + x_0)$.

\subsection{Deeper Flows}
One liners are an example of a simple flow where we know how to score the transformed sample. If we want more flexible distributions, we can apply normalizing flows (NF). A \textit{normalizing flow} \citep{rezende2015variational} applies $T$ invertible transformations $h^{(t)}, t = 1, ..., T$ to samples $z^{(0)}$ from a simple distribution, leaving $z^{(T)}$ as a sample from a complex distribution. A common normalizing flow is a linear-time transformation: $g(z) = z + u(h(w^{T}z + b))$ where $w\in \mathbb{R}^d, u\in \mathbb{R}^d, b\in \mathbb{R}$ are learnable parameters, and $h$ is a non-linearity. In variational inference, flows enable us to parameterize a wider set of posterior families.
% This particular $g(\cdot)$ is easy to use as its Jacobian-determinant can be computed in closed form: $|\det \frac{\partial g}{\partial z}| = |1 + u^{T}h'(w^{T}z + b)w|$.

We can also achieve flexible posteriors using volume-preserving flows (VPF), of which \cite{tomczak2016improving} introduced the Householder transformation: $g(z) = (I - 2\frac{v \cdot v^T}{\|v\|^2})z$ where $v \in \mathbb{R}^d$ is a trainable parameter. Critically, the Jacobian-determinant is 1.
% often set to the output of a hidden layer in the inference network
\begin{algorithm}[h]
\SetAlgoLined
\caption{AntiVAE Inference}
\KwData{A observation $x$; number of samples $k \geq 6$; a variational posterior $q(z|x)$ e.g. a $d$-dimensional Gaussian, $\mathcal{N}^d(\mu, \sigma^2)$.}
\KwResult{Samples $z^d_1,...,z^d_k\sim q_{\mu, \sigma}(z|x)$ that match moments.}
$\mu^d$, $\sigma^d =$\textsc{InferenceNetwork}$(x)$\;
$\mu = \normalfont\textsc{Flatten}(\mu^d)$\;
$\sigma = \normalfont\textsc{Flatten}(\sigma^d)$\;
$\epsilon_1, ..., \epsilon_{kd/2} \sim \mathcal{N}(0, 1)$\;
$\boldsymbol{\xi} = \xi_1, ..., \xi_{\frac{kd}{2} - 1} \sim \mathcal{N}(0, 1)$\;
\For{$i\gets 1$ \KwTo $kd/2$}{
    $y_i = \epsilon_i * \sigma + \mu$\;
}
$\boldsymbol{y} = (y_1, ..., y_{kd/2})$\;
$y_{\frac{kd}{2} + 1}, ..., y_{kd} = $\textsc{AntitheticSample}$(\boldsymbol{y}, \boldsymbol{\xi}, \mu, \sigma)$\;
$\boldsymbol{z} = (y_1, ..., y_{kd})$\;
$z^d_1, ..., z^d_k = \normalfont\textsc{UnFlatten}(\boldsymbol{z})$\;
Return $z^d_1, ..., z^d_k$\;
\label{algo:vae_forward}
\end{algorithm}

\section{Differentiable Antithetic Sampling}

% \s{now the issue is that reviewers will (correctly) say this is not new because it's in cheng's papers. i think we want to be explicit about what's new. we can probably claim we are the first to backprop through. we can claim we are the first to apply to variational inference. maybe the one-liners? anything else? anyways, i think some separate section with the new stsuff would be helpful. and if possible, i wouldn't tie it to the VAE case. explain the new stuff in the context of general stochastic optimization and variational inference. that will make it cleaner and more general. basically, try to distill what's new, and make a bigger deal out of it..}

% \s{for example, explain how you would backprop through matchmoments, and with respect to what, and is it even differentiable (doesn't look differentiable to me..), etc.}

% \ndg{i don't love starting at stochastic optimization in general, because what is cool here is that using matchmoments in elbo allows the gradient to be passed through (its a reparam). what do you think about focussing this section on VI? the flow is then: elbo estimated with samples from matchmoments+transform is still an unbiased lower bound estimate, because matchmoments is differentiable the elbo gradients can be simply derived (ala reparam), the computation can be done by AD.}
Finally, we can use \textsc{AntitheticSample} to approximate the ELBO for variational inference.

For a given observation $x \in p_\textup{data}(x)$ from an empirical dataset, we write the antithetic gradient estimators as:
\begin{equation}
    \begin{split}
        \nabla_\theta \textup{ELBO} &\approx \frac{1}{2k}\sum_{i=1}^{k}[\nabla_\theta \log p_\theta(x, z_i) \\
        & + \nabla_\theta \log p_\theta(x, z_{i+k}) ]\label{eqn:grad_anti1}
    \end{split}
\end{equation}
\begin{equation}
    \begin{split}
        \nabla_\phi \textup{ELBO} &\approx \frac{1}{2k}\sum_{i=1}^{k}[\nabla_{z} \log \frac{p_\theta(x, z_i(\epsilon))}{q_\phi(z_i(\epsilon)|x)}\nabla_\phi g_\phi(\epsilon_i) \\
        & + \nabla_{z} \log \frac{p_\theta(x, z_{i+k}(\epsilon))}{q_\phi(z_{i+k}(\epsilon)|x)}\nabla_\phi g_\phi(\epsilon_{i+k})]\label{eqn:grad_anti2}
    \end{split}
\end{equation}
where $(\epsilon_1, ..., \epsilon_k) \sim \mathcal{N}(0, 1)$, $\boldsymbol{\xi} = (\xi_1, ..., \xi_{k-1})$, $\boldsymbol{z} = (z_1, ..., z_k) \sim q_\phi(z|x)$, and $(z_{k+1}, ..., z_{2k}) = \textsc{AntitheticSample}(\boldsymbol{z}, \boldsymbol{\xi}, \mu, \sigma^2, k)$. Optionally, $\boldsymbol{z} = \textsc{Transform}(\boldsymbol{z}, \alpha)$ where $\textsc{Transform}$ denotes any sample transformation(s) with parameters $\alpha$.

Alternative variational bounds have been considered recently, including an importance-weighted estimator of the ELBO, or IWAE \citep{burda2015importance}. Antithetic sampling can be applied in a similar fashion, as also shown in~\citep{shu2019buffered}.

Importantly, \textsc{AntitheticSample} is a special instance of a reparameterization estimator. Aside from samples from a parameter-less distribution (unit Gaussian), \textsc{AntitheticSample} is completely deterministic, meaning that it is differentiable with respect to the population moments $\mu$ and $\sigma^2$ by any modern auto-differentiation library.
% \begin{figure}[h!]
%     \begin{subfigure}[b]{0.49\columnwidth}
%         \includegraphics[width=\columnwidth]{}
%         \caption{VAE}
%         \label{fig:vae_traj}
%     \end{subfigure}
%     \begin{subfigure}[b]{0.49\columnwidth}
%         \includegraphics[width=\columnwidth]{}
%         \caption{AntiVAE}
%         \label{fig:antivae_traj}
%     \end{subfigure}
% \caption{Trajectory of first 100 epochs for variational posteriors $q(z|x)$ for a fixed observation $x$ from dynamic MNIST with $|z| = 2$. With a fixed random seed, same weight initialization, and same number of random choices, we find that adding antithetic sampling finds a different local maxima.}
% \rdh{hmm, i'm not sure what this is showing. first, are these exactly the same latent dimensions? why does the red mean? why is it significant that it finds a different local maxima (i.e. without characterizing what is special about these two maxima?}}
% \label{fig:q_trajectory}
% \end{figure}
%This is special as usually, in variational inference, learning $\phi,\theta$ is isolated from the sample generation.
Allowing backpropagation through \textsc{AntitheticSample} means that any free parameters are aware of the sampling strategy. Thus, including antithetics will change the optimization trajectory, resulting in a different variational posterior than if we had used i.i.d  samples alone. In Sec.~\ref{sec:results}, we show experimentally that \textit{most of the benefit} of differentiable antithetic sampling comes from being differentiable.

% \mwu{its probably unbiased. need to add proof.}
Alg.~\ref{algo:vae_forward} summarizes inference in a VAE using differentiable antithetic sampling (denoted by AntiVAE\footnote{For some experiments, we use Cheng's algorithm instead of Marsaglia's. We refer to this as AntiVAE (Cheng).}).
To the best of our knowledge, the application of antithetic sampling to stochastic optimization, especially variational inference is novel. Both the application of \citep{marsaglia1980c69} to drawing antithetics and the extension of \textsc{AntitheticSample} to other distribution families by transformation is novel. This is also the first instance of differentiating through an antithetic sample generator.
% \ndg{This stuff is hanging out of place...: \textsc{Flatten} reshapes a matrix of size $k$ by $d$ to a vector of $k * d$. \textsc{UnFlatten} is the inverse. With diagonal covariance, 1 sample from a $d$ dimensional Gaussian is equivalent to $d$ samples from a 1 dimensional Gaussian.}
\begin{figure*}[t!]
    \centering
    \begin{subfigure}[b]{\textwidth}
        \includegraphics[width=\textwidth]{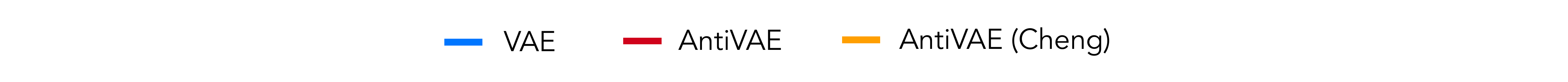}
    \end{subfigure}
    \begin{subfigure}[b]{0.135\textwidth}
        \includegraphics[width=\textwidth]{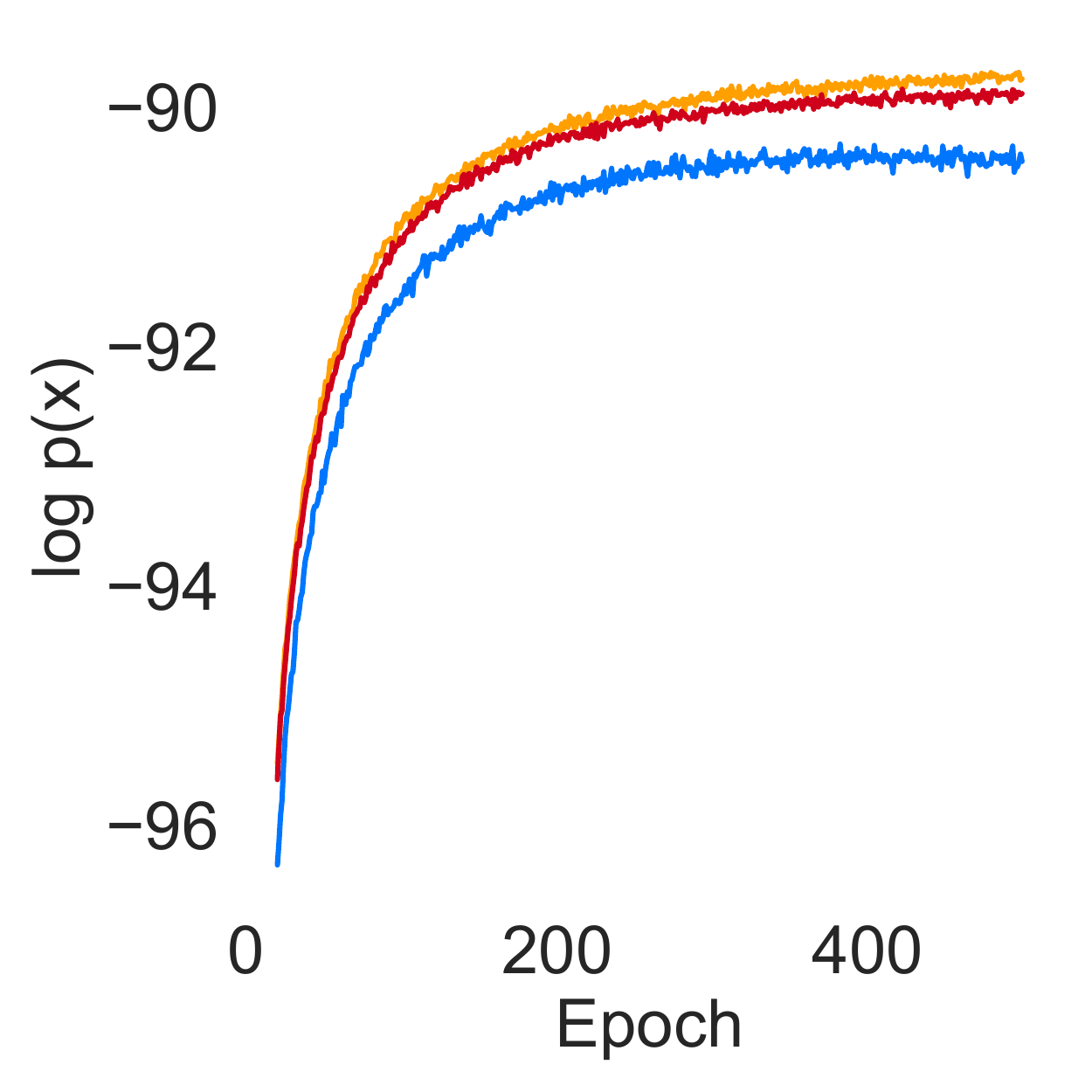}
        \caption{static}
        \label{fig:elbo_static_mnist}
    \end{subfigure}
    \begin{subfigure}[b]{0.135\textwidth}
        \includegraphics[width=\textwidth]{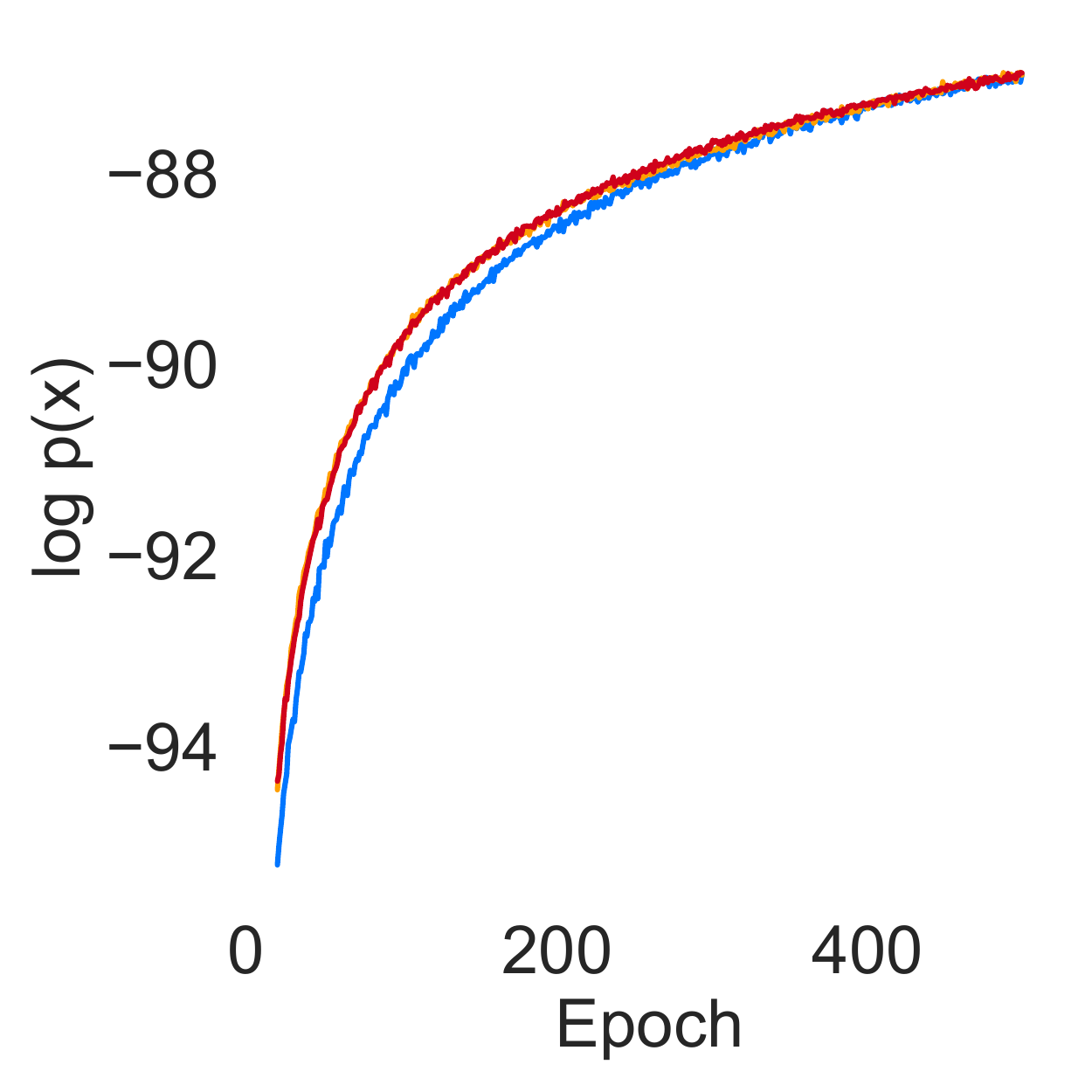}
        \caption{dynamic}
        \label{fig:elbo_dynamic_mnist}
    \end{subfigure}
    \begin{subfigure}[b]{0.135\textwidth}
        \includegraphics[width=\textwidth]{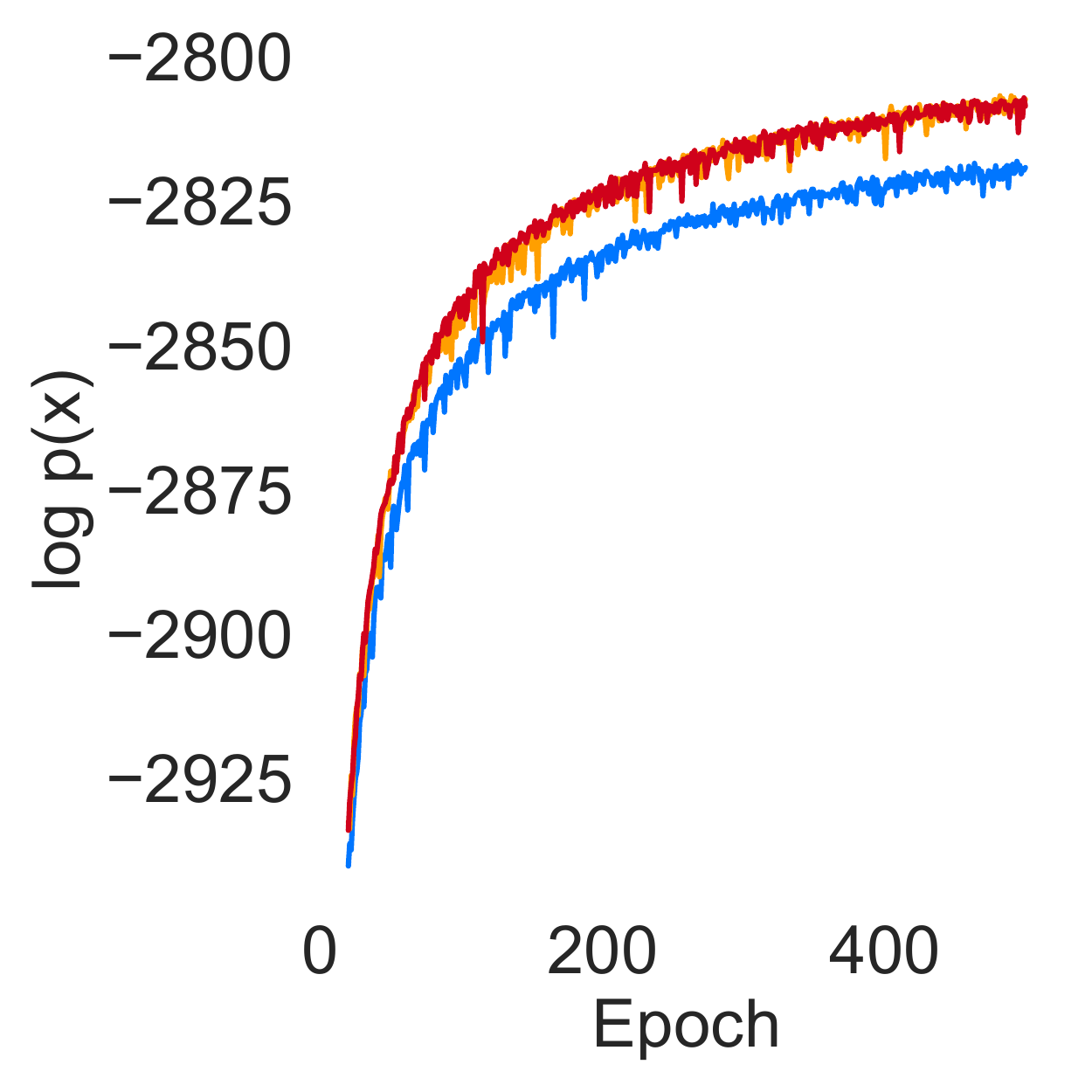}
        \caption{Fashion}
        \label{fig:elbo_fashion_mnist}
    \end{subfigure}
    \begin{subfigure}[b]{0.135\textwidth}
        \includegraphics[width=\textwidth]{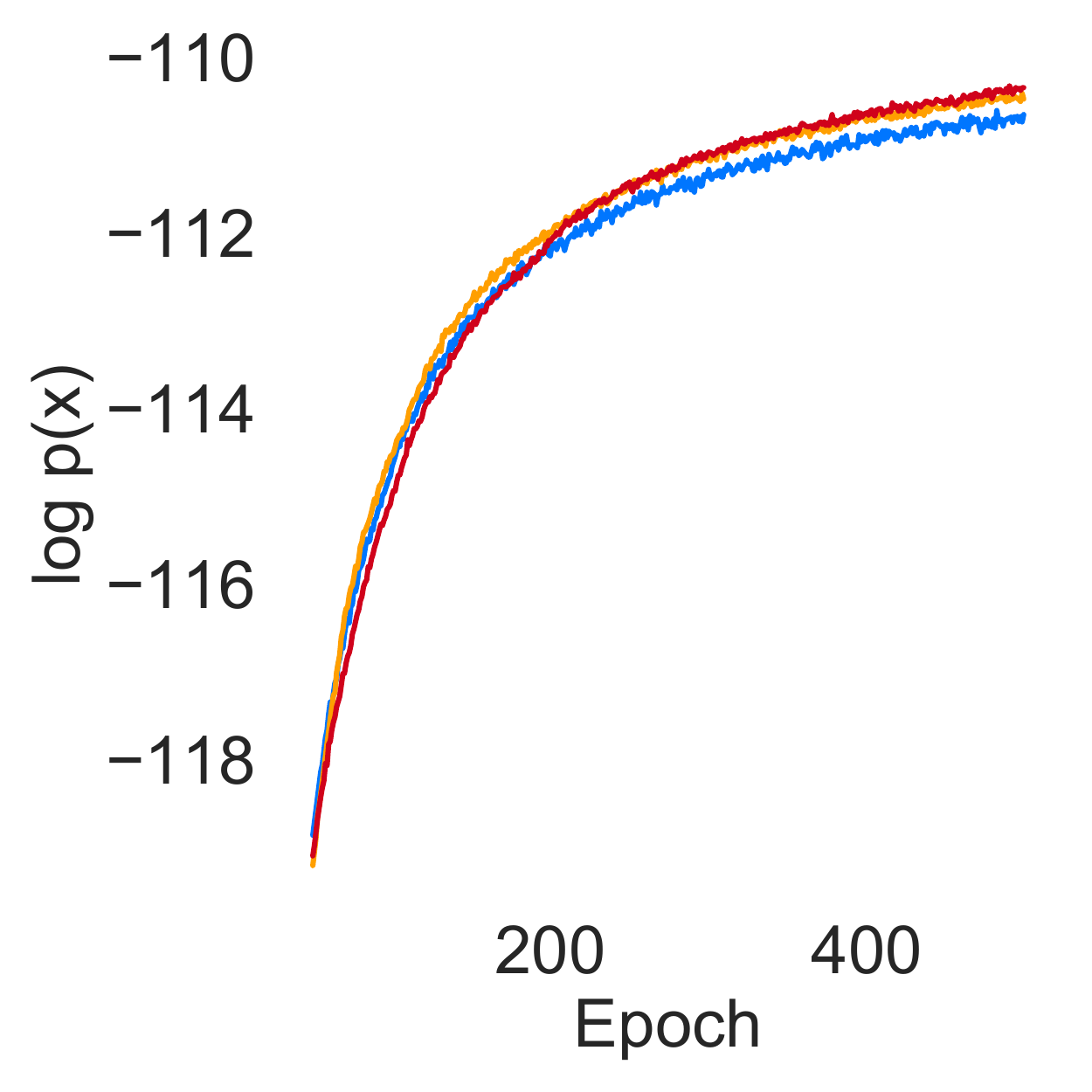}
        \caption{Omniglot}
        \label{fig:elbo_omniglot}
    \end{subfigure}
    \begin{subfigure}[b]{0.135\textwidth}
        \includegraphics[width=\textwidth]{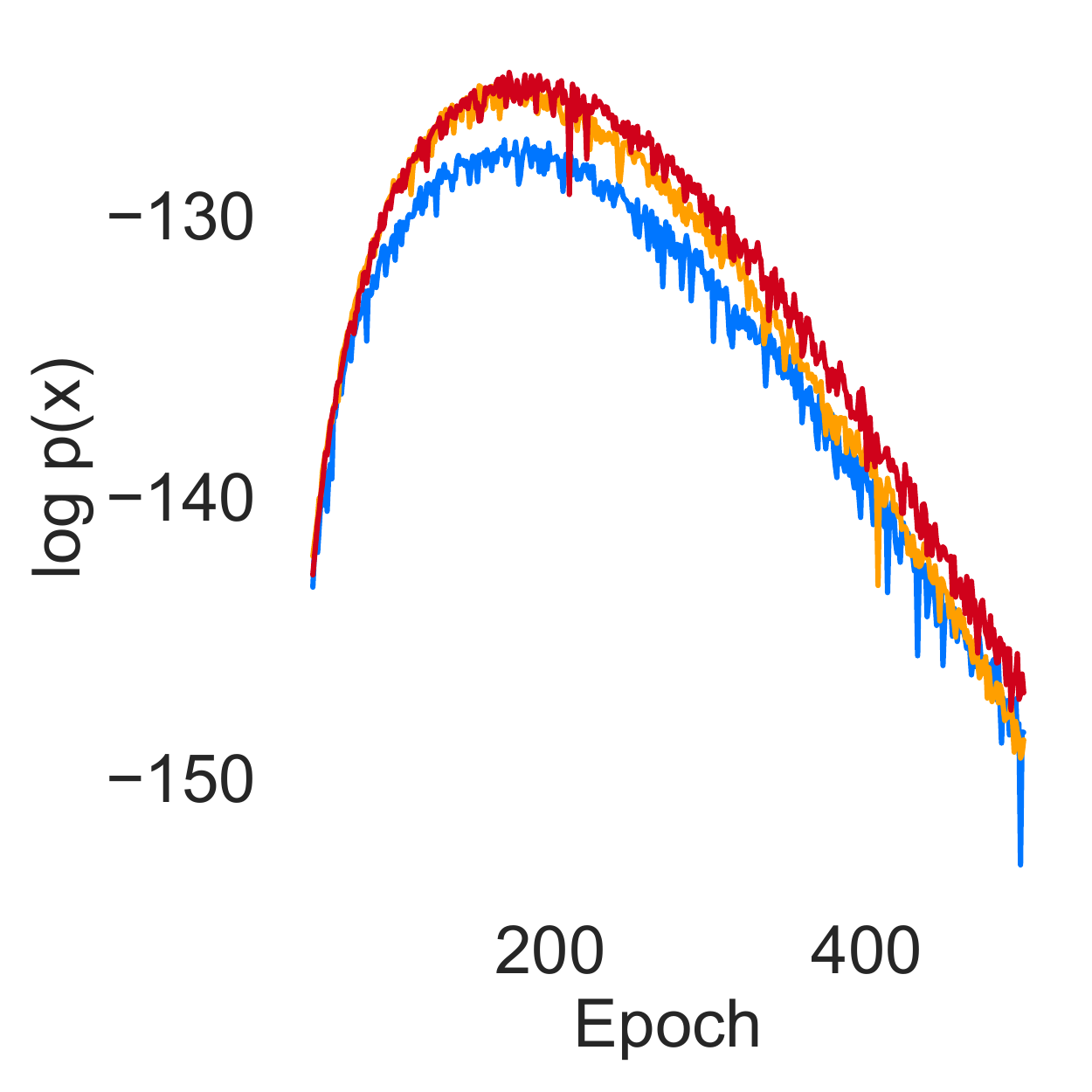}
        \caption{Caltech}
        \label{fig:elbo_caltech}
    \end{subfigure}
    \begin{subfigure}[b]{0.135\textwidth}
        \includegraphics[width=\textwidth]{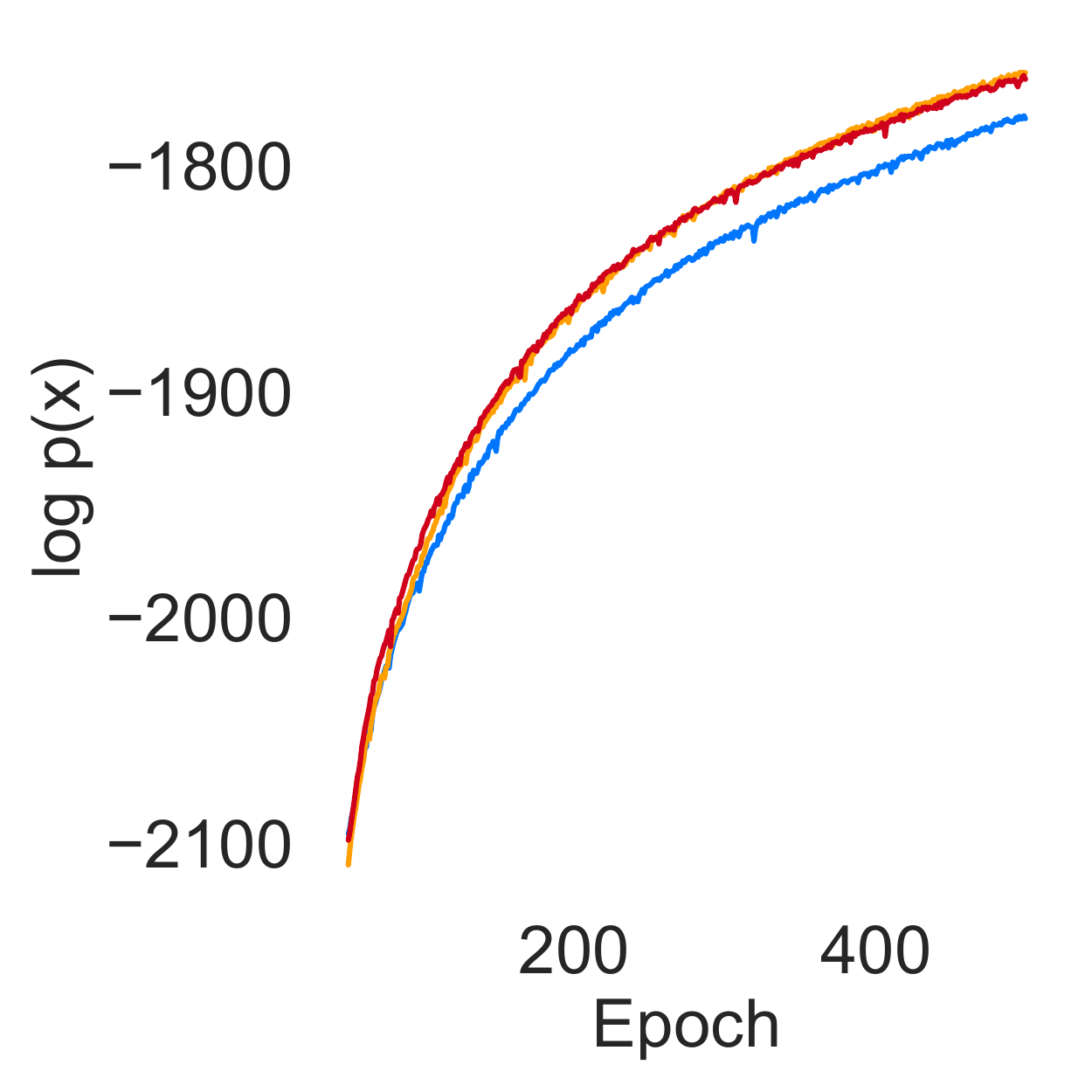}
        \caption{Frey}
        \label{fig:elbo_freyfaces}
    \end{subfigure}
    \begin{subfigure}[b]{0.135\textwidth}
        \includegraphics[width=\textwidth]{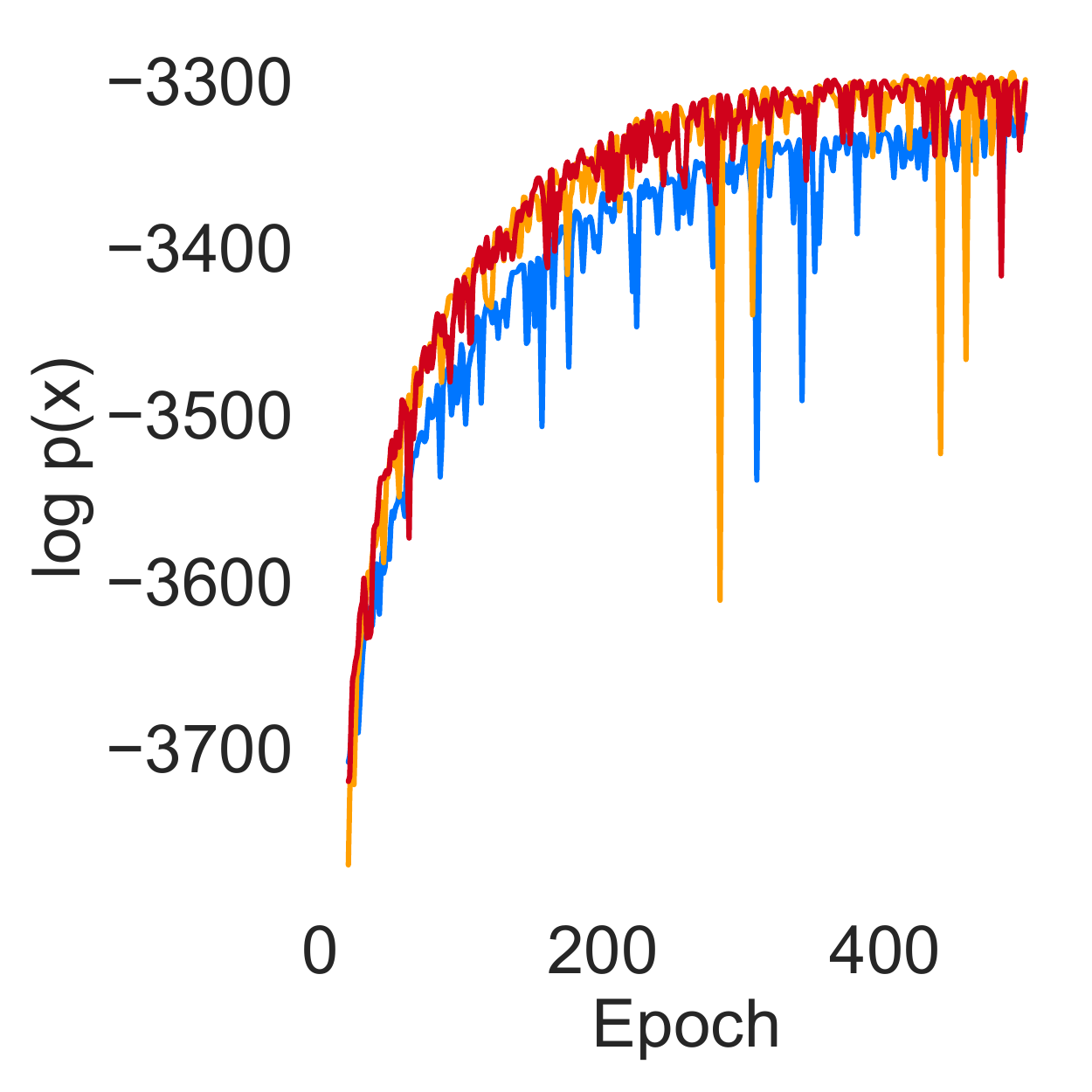}
        \caption{Hist.}
        \label{fig:elbo_histopathology}
    \end{subfigure}
\caption{A comparison of test log likelihoods over 500 epochs between VAE and AntiVAE. Transforming samples to match moments seems to have different degrees of effectiveness depending on the data domain. However, we find that the test ELBO with AntiVAE is almost always greater or equal to that of the VAE. This behavior is not sensitive to hyperparameters e.g. learning rate or MLP hidden dimension. For each subplot, we start plotting from epoch 20 to 500. We cannot resample observations in Caltech101, leading to overfitting.}
\label{fig:learning_trajectory}
\end{figure*}

\begin{table*}[t!]
\small
\begin{tabular}{r|ccccccc}
    Model & stat. MNIST & dyn. MNIST & FashionMNIST & Omniglot & Caltech  & Frey & Hist. \\
    \toprule
    VAE & -90.44 & -86.96 & -2819.13 & -110.65 & -127.26 & -1778.78 & -3320.37\\
    AntiVAE & -89.74 & -86.94 & -2807.06 & \textbf{-110.13} & \textbf{-124.87} & -1758.66 & -3293.01 \\
    AntiVAE (Cheng) & \textbf{-89.70} & \textbf{-86.93} & \textbf{-2806.71} & -110.39 &  -125.19 & \textbf{-1758.29} & \textbf{-3292.72}\\
    \hline
    VAE+IWAE & -89.78 & -86.71 & -2797.02 & \textbf{-109.32} & -123.99 & -1772.06 & -3311.23 \\
    AntiVAE+IWAE & \textbf{-89.71} & \textbf{-86.62} & \textbf{-2793.01} & -109.48 & \textbf{-123.35} & \textbf{-1771.47} & \textbf{-3305.91}\\
    \hline
    VAE ($\log\mathcal{N}$) & \textbf{-149.47} & -145.13 & -2891.75 & -164.01 & -269.51 & -1910.11 &  -3460.18 \\
    AntiVAE ($\log\mathcal{N}$) & -149.78 & \textbf{-141.76} & \textbf{-2882.11} & \textbf{-163.55} & \textbf{-266.82} & \textbf{-1895.15} & \textbf{-3454.54} \\
    \hline
    VAE (Exp.) & \textbf{141.95} & -140.91 & -2971.00 & -159.92 & -200.14 & -2176.83 & -3776.48 \\
    AntiVAE (Exp.) & 141.98 & \textbf{-140.58} & \textbf{-2970.12} & \textbf{-158.15} & -\textbf{197.47} & \textbf{-2156.93} & \textbf{-3770.33} \\
    \hline
    VAE (Cauchy) & -217.69 & -217.53 & -3570.53 & -187.34 & -419.78 & -2404.24 & -3930.40 \\
    AntiVAE (Cauchy) & \textbf{-215.89} & \textbf{-217.12} & \textbf{-3564.80} & \textbf{-186.02} & \textbf{-417.0} & \textbf{-2395.07} & \textbf{-3926.95} \\
    \hline
    VAE+10-NF & -90.07 & -86.93 & -2803.98 & -110.03 & -128.62 & -1780.61 & -3328.68 \\
    AntiVAE+10-NF & \textbf{-89.77} & \textbf{-86.57} & \textbf{-2801.90} & \textbf{-109.43} & \textbf{-127.23} & \textbf{-1777.26} & \textbf{-3303.00}\\
    \hline
    VAE+10-VPF & -90.59 & -86.99 & -2802.65 & -110.19 & -128.87& -1789.18 & -3312.30 \\
    AntiVAE+10-VPF & \textbf{-90.00} & \textbf{-86.59} & \textbf{-2797.05} & \textbf{-109.04} & \textbf{126.72} & \textbf{-1787.18} & \textbf{-3305.42}\\
\end{tabular}
\caption{Test log likelihoods between the VAE and AntiVAE under different objectives and posterior families (a higher number is better). Architecture and hyperparameters are consistent across models. AntiVAE (Cheng) refers to drawing antithetic sampling using an alternative algorithm to Marsaglia (see supplement). Results show the average over 5 independent runs with different random seeds. For measurements of variance, see supplement.}
\label{table:results}
\end{table*}

\begin{figure*}[t!]
    \begin{subfigure}[b]{0.16\textwidth}
        \includegraphics[width=\textwidth]{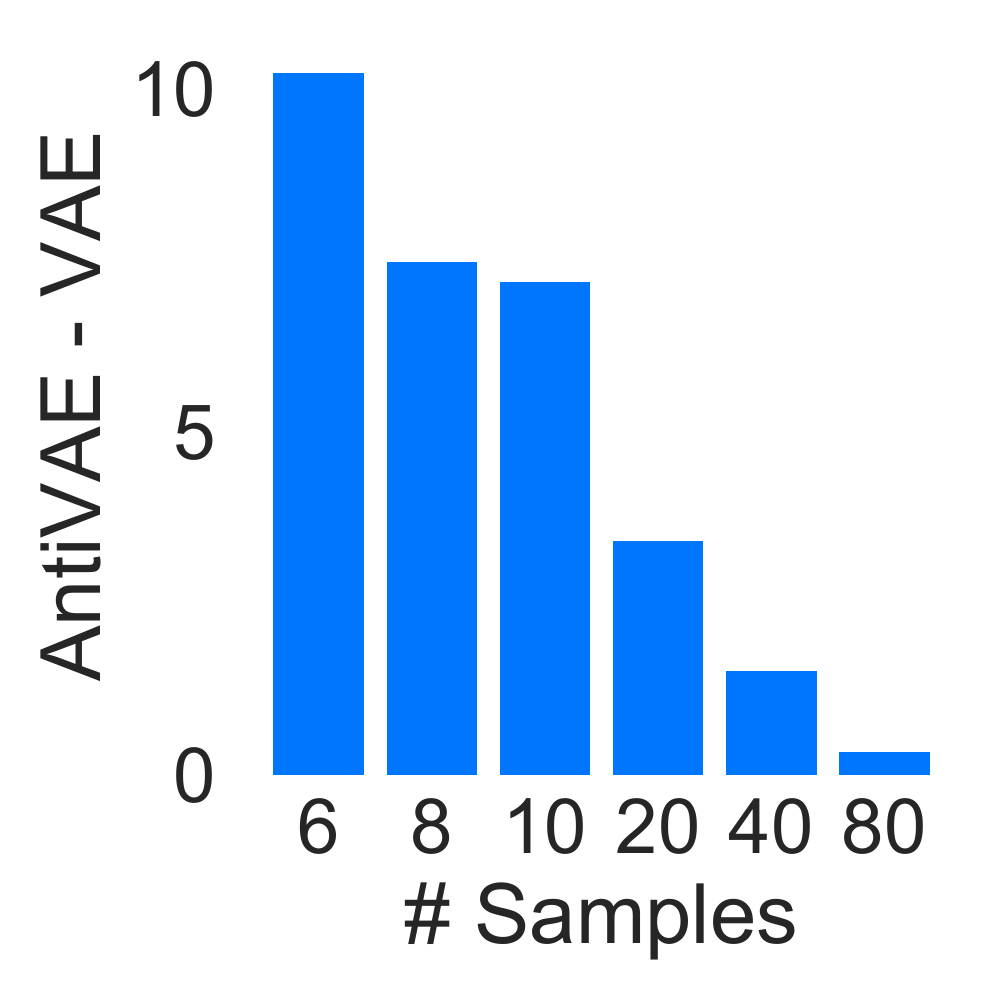}
        \caption{FashionMNIST}
    \end{subfigure}
    \begin{subfigure}[b]{0.16\textwidth}
        \includegraphics[width=\textwidth]{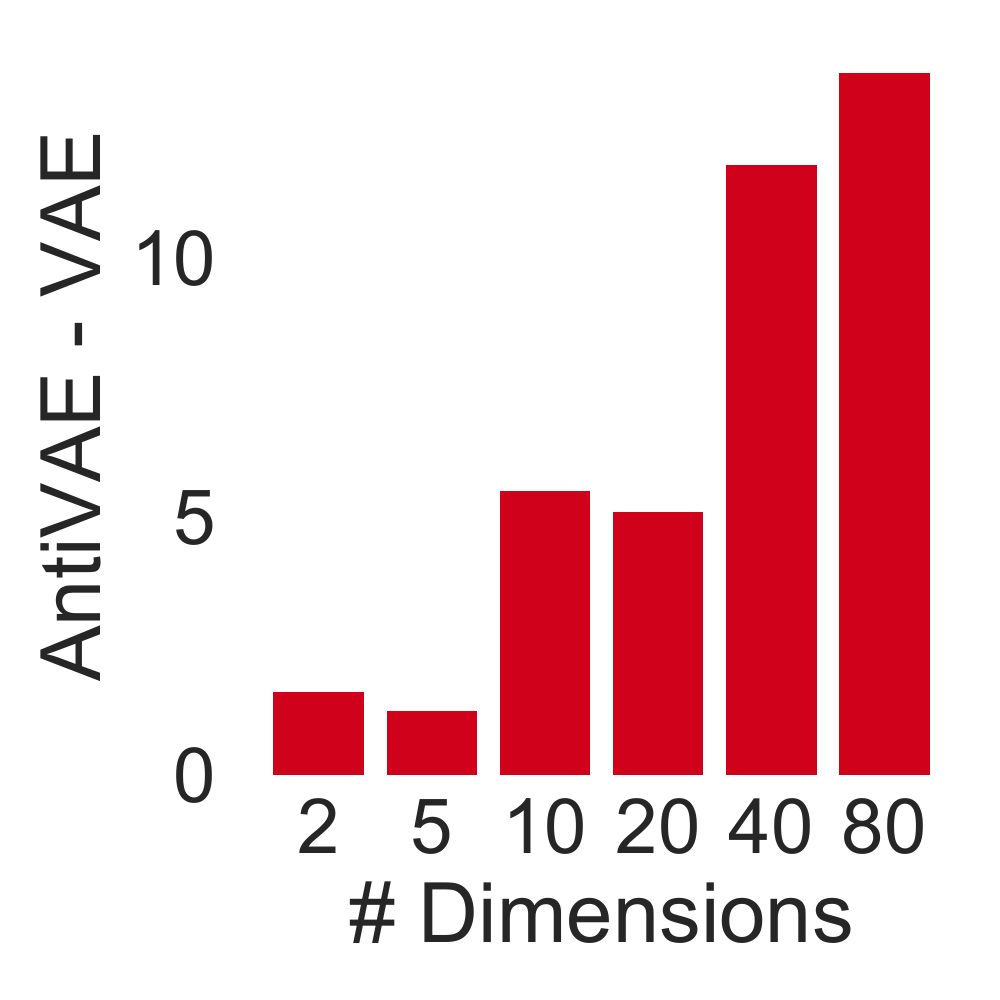}
        \caption{FashionMNIST}
    \end{subfigure}
    \begin{subfigure}[b]{0.24\textwidth}
        \includegraphics[width=\textwidth]{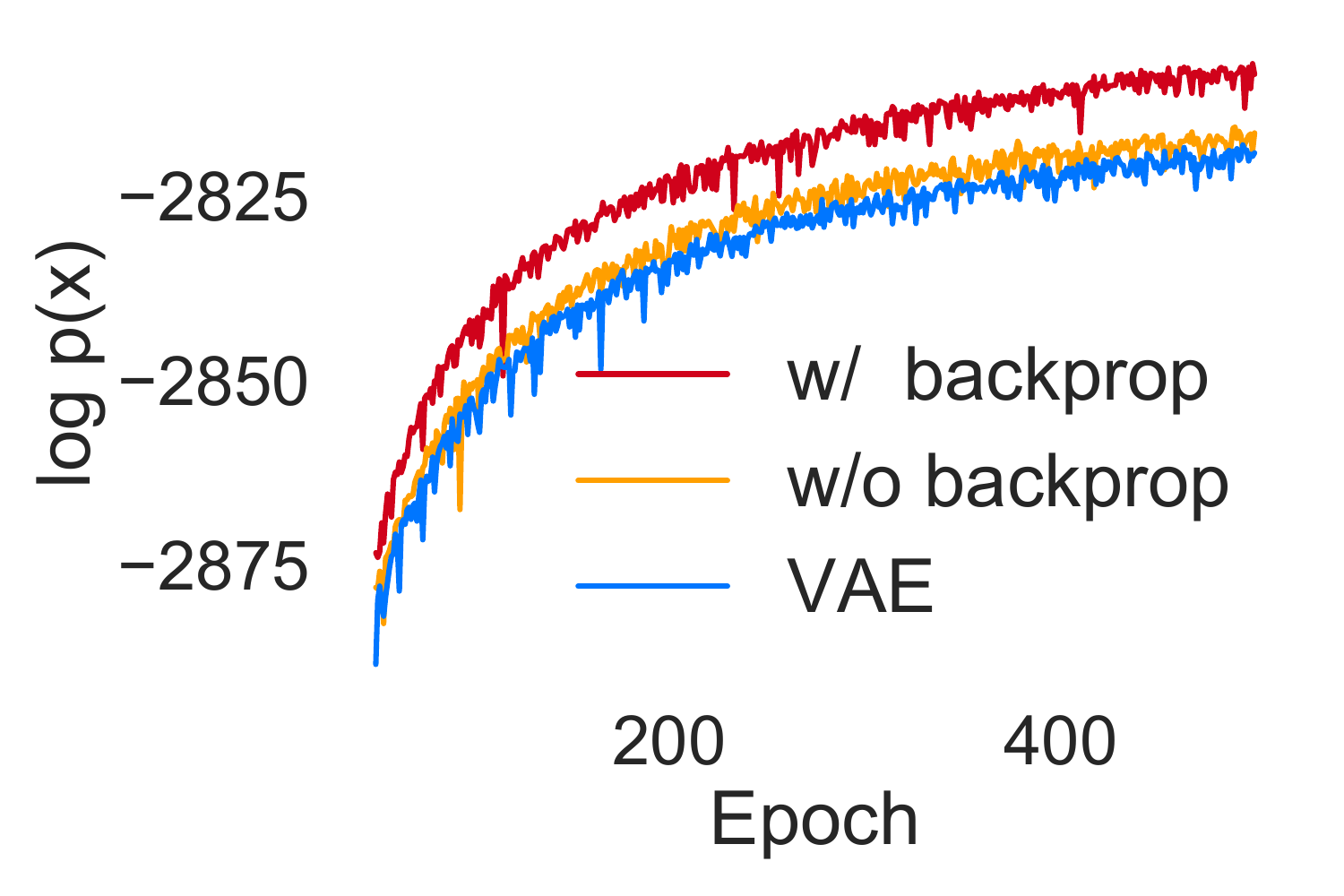}
        \caption{FashionMNIST}
        \label{fig:discussion:c}
    \end{subfigure}
    \begin{subfigure}[b]{0.24\textwidth}
        \includegraphics[width=\textwidth]{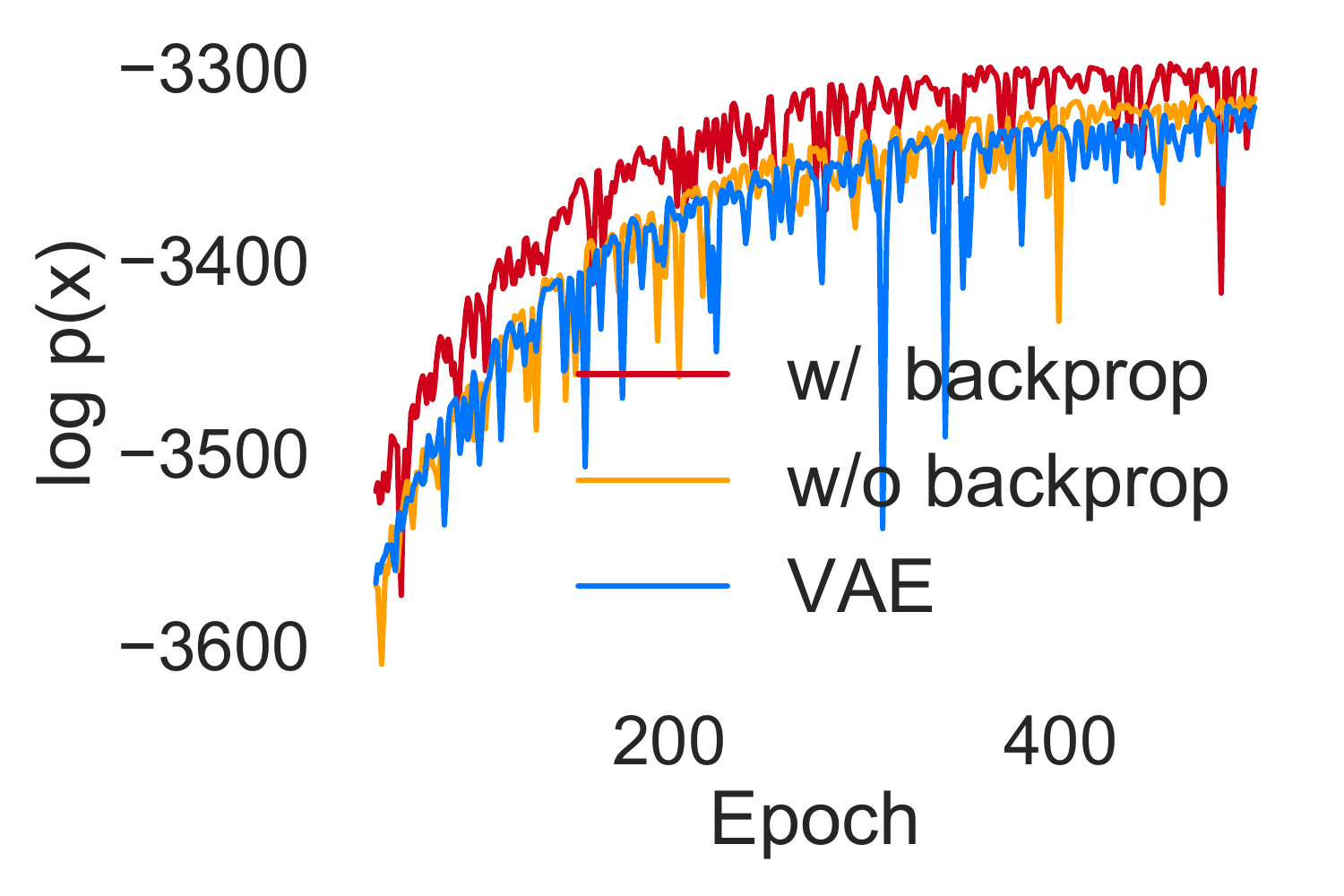}
        \caption{Histopathology}
        \label{fig:discussion:d}
    \end{subfigure}
    \begin{subfigure}[b]{0.16\textwidth}
        \includegraphics[width=\textwidth]{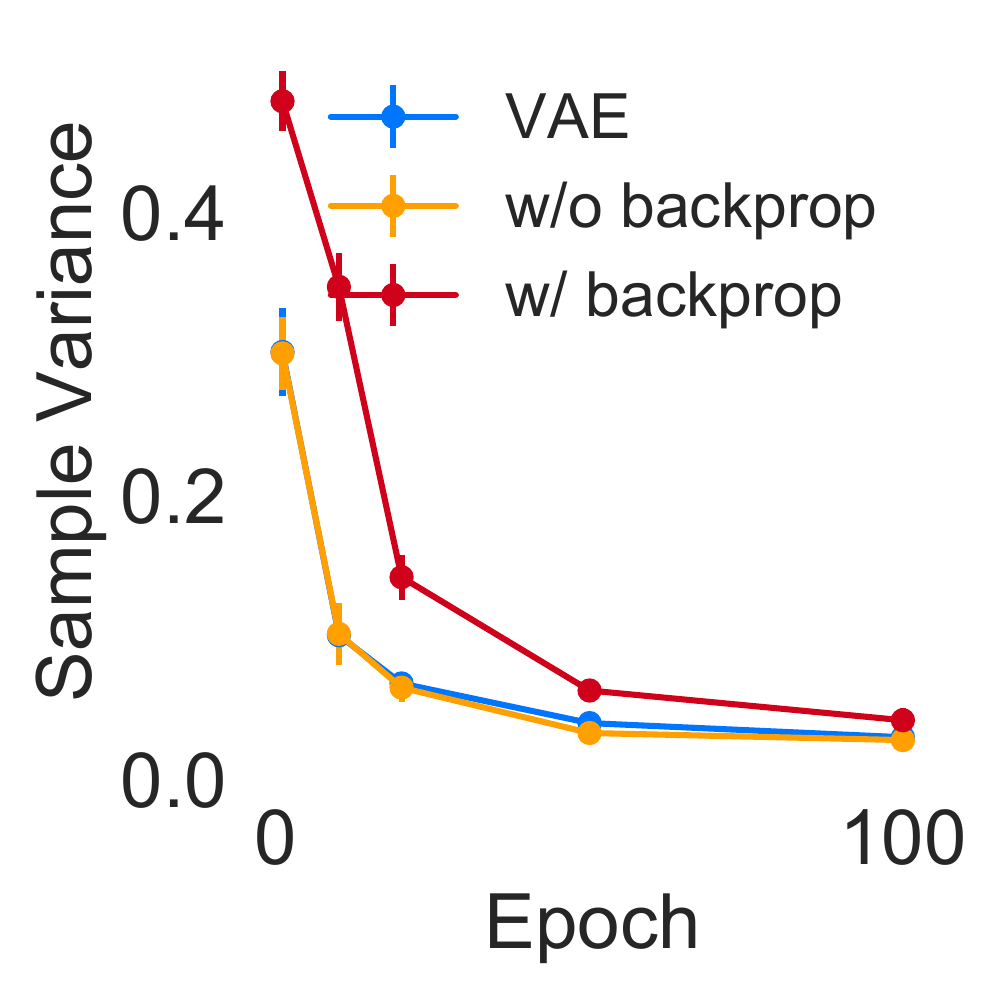}
        \caption{Histopathology}
        \label{fig:discussion:e}
    \end{subfigure}
    \caption{(a) With more samples, the difference in $\log p(x)$ between AntiVAE and VAE approaches 0. (b) The benefit of antithetics varies directly with dimensionality. (c) Backpropagating through \textsc{AntitheticSample} is responsible for most of the improvement over i.i.d. sampling. However, even without it, antithetics outperforms VAE. (d) Similar observation in Histopathology. (e) Differentiable antithetics encourages sample diversity.}
    \label{fig:discussion}
\end{figure*}

\section{Experiments}
\label{sec:experiments}

% \s{does the approach help for evaluation (not optimization/learning)? let's say you have a pretrained model. you want to evaluate using multiple samples. is cheng better?}

We compare performance of the VAE and AntiVAE on seven image datasets: static MNIST \citep{larochelle2011neural}, dynamic MNIST \citep{lecun1998gradient}, FashionMNIST \citep{xiao2017fashion}, OMNIGLOT \citep{lake2015human}, Caltech 101 Silhouettes \citep{marlin2010inductive}, Frey Faces\footnote{https://cs.nyu.edu/~roweis/data.html}, and Histopathology patches \citep{tomczak2016improving}. See supplement for details.

In both VAE and AntiVAE, $q_\phi(z|x)$ and $p_\theta(x|z)$ are two-layer MLPs with 300 hidden units, Xavier initialization \citep{glorot2010understanding}, and ReLU. By default, we set $d=40$ and $k=8$ (i.e. 4 antithetic samples) and optimize either the ELBO or IWAE. For grayscale images, $p_\theta(x|z)$ parameterize a discretized logistic distribution as in \citep{kingma2016improved, tomczak2017vae}. The log variance from $p_\theta(x|z)$ is clamped between -4.5 and 0.0 \citep{tomczak2017vae}. We use Adam \citep{kingma2014adam} with a fixed learning rate of $3\cdot 10^{-4}$ and a mini-batch of 128. We train for 500 epochs. Test marginal log likelihoods are estimated via importance sampling using 100 i.i.d.~samples. See supplement for additional experiments where we vary architectures, measure runtimes, report variance over many runs, and more.

\section{Results}
\label{sec:results}

Fig.~\ref{fig:learning_trajectory} and Table~\ref{table:results} show test log likelihoods (over 5 runs). We summarize findings below:

\paragraph{VAE vs AntiVAE} AntiVAE consistently achieves higher log likelihoods, usually by a margin of 2 to 5 log units. With FashionMNIST/Histopathology, the margin grows to as much as 30 log units. In the 3 cases that AntiVAE performs worse than VAE, the log-likelihoods are almost equal ($\leq$ 1 log unit). In Fig.~\ref{fig:elbo_dynamic_mnist}, we see a case where, even when the final performance is equivalent, AntiVAE learns faster. We find similar behavior using a tighter IWAE bound or other posterior families defined by one liners and flows. With the latter, we see improvements of up to 25 log units. A better sampling strategy is effective regardless of the choice of objective and distributional family.

% \paragraph{AntiVAE with one liners and flows}
% The positive effect of antithetics extends to other posterior families. With one liners and both flow families,
% \ndg{do we say anywhere that these experiments are VAEs with a different posterior approximating, $q$, family given by the corresponding flow?}

% This is likely dependent on what the transformed distribution looks like. More complex distributions with multiple modes might benefit more from more principled sampling methods whereas others may fare just as well by drawing randomly.
% \ndg{it feels like a figure, or some additional analyses would be useful for this result.}

% \ndg{what's the runtime impact of momentmatch?}

% \ndg{you should add momentmatch to pyro -- it'd be easy, work nicey with parallel sampling, and be nice to say its available...}

% \subsection{Pathwise Derivatives in Simple Physics Experiments}

% TODO.

%\section{Discussion}
%\label{sec:discussion}

%We present more analysis on AntiVAE behavior as we vary experimental settings to gain an understanding of performance sensitivity (see Fig.~\ref{fig:discussion}):

%Fig.~\ref{fig:discussion} illustrates more qualitative analysis of AntiVAE behavior as we vary experimental settings:

\paragraph{As $k$ increases, the effect of antithetic sampling diminishes.}
Fig.~\ref{fig:discussion}a illustrates that as the number of samples $k \rightarrow \infty$, posterior samples will match the true moments of $q_\phi(z|x)$ regardless of the sampling strategy. But as $k\rightarrow 0$, the effectiveness grows quickly. We expect best performance at small (but not too small) $k$ where the normal approximation (Eqn.~\ref{eqn:anti_approx}) is decent and the value of antithetics is high.

\paragraph{As $d$ increases, the effect of antithetic sampling grows.} Fig.~\ref{fig:discussion}b illustrates that the importance of sampling strategy increases as the dimensionality grows due to an exponential explosion in the volume of the sample space. With higher dimensionality, we find antithetic sampling to be more effective.

\paragraph{Backpropagating through antithetic sampling greatly improves performance.} From Fig.~\ref{fig:discussion:c}, \ref{fig:discussion:d}, we see that most of the improvement from antithetics relies on differentiating through \textsc{AntitheticSample}. This is sensible as the model can adjust parameters if it is aware of the sampling strategy, leading to better optima. Even if we do not backpropagate through sampling (draw antithetic samples from $\mathcal{N}(0, 1)$ followed by standard reparameterization), we will still find modest improvement over i.i.d.~sampling.

We believe differentiability encourages initial samples to be more diverse. To test this, we measure the variance of the first $k/2$ samples (1) without antithetics, (2) with non-differentiable antithetics, and (3) with differentiable antithetics. Fig.~\ref{fig:discussion}e shows that samples in (3) have consistently higher variance than (1) or (2).

\paragraph{AntiVAE runtimes are comparable.} We measure an average 0.004 sec. increase in wallclock time per step when adding in antithetics.

\section{Conclusion}
\label{sec:conclusion}
We present a differentiable antithetic sampler for variance reduction. We show its benefits for a family of VAEs. We hope to apply it to reinforcement learning using pathwise derivatives~\citep{levy2018deterministic}.

\subsubsection*{Acknowledgements}
This research was supported by NSF (\#1651565, \#1522054, \#1733686), ONR, AFOSR (FA9550-19-1-0024), FLI, and SocioNeticus (part of the DARPA SocialSim program). MW is supported by NSF GRFP.

% \ndg{revise this to smooth out and emphasize the key ideas, key results:}
% We presented an algorithm for sampling from Gaussian distributions that preserves that sample moments match the true distribution parameters, and adapted it to the context of stochastic optimization. We demonstrated the practical benefits on a suite of variational inference experiments across different datasets, different objectives, and different families of posteriors. We believe that the ease of use makes this a compelling tool. In the future, we may explore applications to pathwise derivatives in reinforcement learning.

% We are broadly interested in estimating gradients by sampling with a focus on variational inference. In particular, we will investigate a single latent variable generative model: the variational autoencoder (VAE). However, the algorithms described can be extended to other facets of stochastic optimization (e.g. pathwise derivatives for policy gradients or utility estimation in risk minimization). We begin by summarizing the main ideas in the field:
\bibliography{antithetic}

\begin{thebibliography}{35}
\providecommand{\natexlab}[1]{#1}
\providecommand{\url}[1]{\texttt{#1}}
\expandafter\ifx\csname urlstyle\endcsname\relax
  \providecommand{\doi}[1]{doi: #1}\else
  \providecommand{\doi}{doi: \begingroup \urlstyle{rm}\Url}\fi

\bibitem[Aitchison(1963)]{aitchison1963inverse}
J.~Aitchison.
\newblock Inverse distributions and independent gamma-distributed products of
  random variables.
\newblock \emph{Biometrika}, 50\penalty0 (3/4):\penalty0 505--508, 1963.

\bibitem[Burda et~al.(2015)Burda, Grosse, and
  Salakhutdinov]{burda2015importance}
Y.~Burda, R.~Grosse, and R.~Salakhutdinov.
\newblock Importance weighted autoencoders.
\newblock \emph{arXiv preprint arXiv:1509.00519}, 2015.

\bibitem[Canal(2005)]{canal2005normal}
L.~Canal.
\newblock A normal approximation for the chi-square distribution.
\newblock \emph{Computational statistics \& data analysis}, 48\penalty0
  (4):\penalty0 803--808, 2005.

\bibitem[Cheng(1984)]{cheng1984generation}
R.~C. Cheng.
\newblock Generation of inverse gaussian variates with given sample mean and
  dispersion.
\newblock \emph{Applied statistics}, pages 309--316, 1984.

\bibitem[Devroye(1996)]{devroye1996random}
L.~Devroye.
\newblock Random variate generation in one line of code.
\newblock In \emph{Proceedings of the 28th conference on Winter simulation},
  pages 265--272. IEEE Computer Society, 1996.

\bibitem[Glorot and Bengio(2010)]{glorot2010understanding}
X.~Glorot and Y.~Bengio.
\newblock Understanding the difficulty of training deep feedforward neural
  networks.
\newblock In \emph{Proceedings of the thirteenth international conference on
  artificial intelligence and statistics}, pages 249--256, 2010.

\bibitem[Hawkins and Wixley(1986)]{hawkins1986note}
D.~M. Hawkins and R.~Wixley.
\newblock A note on the transformation of chi-squared variables to normality.
\newblock \emph{The American Statistician}, 40\penalty0 (4):\penalty0 296--298,
  1986.

\bibitem[Helmert(1875)]{helmert1875berechnung}
F.~Helmert.
\newblock {\"U}ber die berechnung des wahrscheinlichen fehlers aus einer
  endlichen anzahl wahrer beobachtungsfehler.
\newblock \emph{Z. Math. U. Physik}, 20\penalty0 (1875):\penalty0 300--303,
  1875.

\bibitem[Jang et~al.(2016)Jang, Gu, and Poole]{jang2016categorical}
E.~Jang, S.~Gu, and B.~Poole.
\newblock Categorical reparameterization with gumbel-softmax.
\newblock \emph{arXiv preprint arXiv:1611.01144}, 2016.

\bibitem[Kendall et~al.(1946)]{kendall1946advanced}
M.~G. Kendall et~al.
\newblock The advanced theory of statistics.
\newblock \emph{The advanced theory of statistics.}, \penalty0 (2nd Ed), 1946.

\bibitem[Kingma and Ba(2014)]{kingma2014adam}
D.~P. Kingma and J.~Ba.
\newblock Adam: A method for stochastic optimization.
\newblock \emph{arXiv preprint arXiv:1412.6980}, 2014.

\bibitem[Kingma and Welling(2013)]{kingma2013auto}
D.~P. Kingma and M.~Welling.
\newblock Auto-encoding variational bayes.
\newblock \emph{arXiv preprint arXiv:1312.6114}, 2013.

\bibitem[Kingma et~al.(2016)Kingma, Salimans, Jozefowicz, Chen, Sutskever, and
  Welling]{kingma2016improved}
D.~P. Kingma, T.~Salimans, R.~Jozefowicz, X.~Chen, I.~Sutskever, and
  M.~Welling.
\newblock Improved variational inference with inverse autoregressive flow.
\newblock In \emph{Advances in Neural Information Processing Systems}, pages
  4743--4751, 2016.

\bibitem[Kruskal(1946)]{kruskal1946helmert}
W.~Kruskal.
\newblock Helmert's distribution.
\newblock \emph{The American Mathematical Monthly}, 53\penalty0 (8):\penalty0
  435--438, 1946.

\bibitem[Lake et~al.(2015)Lake, Salakhutdinov, and Tenenbaum]{lake2015human}
B.~M. Lake, R.~Salakhutdinov, and J.~B. Tenenbaum.
\newblock Human-level concept learning through probabilistic program induction.
\newblock \emph{Science}, 350\penalty0 (6266):\penalty0 1332--1338, 2015.

\bibitem[Larochelle and Murray(2011)]{larochelle2011neural}
H.~Larochelle and I.~Murray.
\newblock The neural autoregressive distribution estimator.
\newblock In \emph{Proceedings of the Fourteenth International Conference on
  Artificial Intelligence and Statistics}, pages 29--37, 2011.

\bibitem[LeCun et~al.(1998)LeCun, Bottou, Bengio, and
  Haffner]{lecun1998gradient}
Y.~LeCun, L.~Bottou, Y.~Bengio, and P.~Haffner.
\newblock Gradient-based learning applied to document recognition.
\newblock \emph{Proceedings of the IEEE}, 86\penalty0 (11):\penalty0
  2278--2324, 1998.

\bibitem[Levy and Ermon(2018)]{levy2018deterministic}
D.~Levy and S.~Ermon.
\newblock Deterministic policy optimization by combining pathwise and score
  function estimators for discrete action spaces.
\newblock In \emph{Thirty-Second AAAI Conference on Artificial Intelligence},
  2018.

\bibitem[Marlin et~al.(2010)Marlin, Swersky, Chen, and
  Freitas]{marlin2010inductive}
B.~Marlin, K.~Swersky, B.~Chen, and N.~Freitas.
\newblock Inductive principles for restricted boltzmann machine learning.
\newblock In \emph{Proceedings of the Thirteenth International Conference on
  Artificial Intelligence and Statistics}, pages 509--516, 2010.

\bibitem[Marsaglia and Good(1980)]{marsaglia1980c69}
G.~Marsaglia and I.~Good.
\newblock C69. generating a normal sample with given sample mean and variance.
\newblock \emph{Journal of Statistical Computation and Simulation}, 11\penalty0
  (1):\penalty0 71--74, 1980.

\bibitem[Mnih and Gregor(2014)]{mnih2014neural}
A.~Mnih and K.~Gregor.
\newblock Neural variational inference and learning in belief networks.
\newblock \emph{arXiv preprint arXiv:1402.0030}, 2014.

\bibitem[Pegoraro(2012)]{pegoraro2012transformation}
I.~Pegoraro.
\newblock A transformation characterizing the normal distribution.
\newblock \emph{Communications in Statistics-Theory and Methods}, 41\penalty0
  (16-17):\penalty0 3060--3067, 2012.

\bibitem[Pullin(1979)]{pullin1979generation}
D.~Pullin.
\newblock Generation of normal variates with given sample mean and variance.
\newblock \emph{Journal of Statistical Computation and Simulation}, 9\penalty0
  (4):\penalty0 303--309, 1979.

\bibitem[Radford et~al.(2015)Radford, Metz, and
  Chintala]{radford2015unsupervised}
A.~Radford, L.~Metz, and S.~Chintala.
\newblock Unsupervised representation learning with deep convolutional
  generative adversarial networks.
\newblock \emph{arXiv preprint arXiv:1511.06434}, 2015.

\bibitem[Ranganath et~al.(2014)Ranganath, Gerrish, and
  Blei]{ranganath2014black}
R.~Ranganath, S.~Gerrish, and D.~Blei.
\newblock Black box variational inference.
\newblock In \emph{Artificial Intelligence and Statistics}, pages 814--822,
  2014.

\bibitem[Rezende and Mohamed(2015)]{rezende2015variational}
D.~J. Rezende and S.~Mohamed.
\newblock Variational inference with normalizing flows.
\newblock \emph{arXiv preprint arXiv:1505.05770}, 2015.

\bibitem[Rezende et~al.(2014)Rezende, Mohamed, and
  Wierstra]{rezende2014stochastic}
D.~J. Rezende, S.~Mohamed, and D.~Wierstra.
\newblock Stochastic backpropagation and approximate inference in deep
  generative models.
\newblock \emph{arXiv preprint arXiv:1401.4082}, 2014.

\bibitem[Schulman et~al.(2015)Schulman, Heess, Weber, and
  Abbeel]{schulman2015gradient}
J.~Schulman, N.~Heess, T.~Weber, and P.~Abbeel.
\newblock Gradient estimation using stochastic computation graphs.
\newblock In \emph{Advances in Neural Information Processing Systems}, pages
  3528--3536, 2015.

\bibitem[Shu et~al.(2019)Shu, Bui, Whang, and Ermon]{shu2019buffered}
R.~Shu, H.~H. Bui, j.~Whang, and S.~Ermon.
\newblock Training variational autoencoders with buffered stochastic
  variational inference.
\newblock In \emph{AISTATS}, 2019.

\bibitem[Silver et~al.(2014)Silver, Lever, Heess, Degris, Wierstra, and
  Riedmiller]{silver2014deterministic}
D.~Silver, G.~Lever, N.~Heess, T.~Degris, D.~Wierstra, and M.~Riedmiller.
\newblock Deterministic policy gradient algorithms.
\newblock In \emph{ICML}, 2014.

\bibitem[Tomczak and Welling(2016)]{tomczak2016improving}
J.~M. Tomczak and M.~Welling.
\newblock Improving variational auto-encoders using householder flow.
\newblock \emph{arXiv preprint arXiv:1611.09630}, 2016.

\bibitem[Tomczak and Welling(2017)]{tomczak2017vae}
J.~M. Tomczak and M.~Welling.
\newblock Vae with a vampprior.
\newblock \emph{arXiv preprint arXiv:1705.07120}, 2017.

\bibitem[Weaver and Tao(2001)]{weaver2001optimal}
L.~Weaver and N.~Tao.
\newblock The optimal reward baseline for gradient-based reinforcement
  learning.
\newblock In \emph{Proceedings of the Seventeenth conference on Uncertainty in
  artificial intelligence}, pages 538--545. Morgan Kaufmann Publishers Inc.,
  2001.

\bibitem[Wilson and Hilferty(1931)]{wilson1931distribution}
E.~B. Wilson and M.~M. Hilferty.
\newblock The distribution of chi-square.
\newblock \emph{Proceedings of the National Academy of Sciences}, 17\penalty0
  (12):\penalty0 684--688, 1931.

\bibitem[Xiao et~al.(2017)Xiao, Rasul, and Vollgraf]{xiao2017fashion}
H.~Xiao, K.~Rasul, and R.~Vollgraf.
\newblock Fashion-mnist: a novel image dataset for benchmarking machine
  learning algorithms.
\newblock \emph{arXiv preprint arXiv:1708.07747}, 2017.

\end{thebibliography}

% \pagebreak
\onecolumn

\appendix
\section{Description of Datasets}

We provide details in preprocessing for datasets used in the experiments in the main text. In total, we tested AntiVAE on seven datasets: static MNIST, dynamic MNIST, FashionMNIST, OMNIGLOT, Caltech 101 Silhouettes, Frey Faces, and Histopathology patches. As in previous literature, static MNIST uses a fixed binarization of images whereas dynamic MNIST resamples images from the training dataset at each minibatch. In dynamic MNIST, the validation and test sets have fixed binarization. We do an identical dynamic resampling procedure for OMNIGLOT. Caltech101 is given as binary data, so we cannot resample at training time, which we find to cause overfitting on the test set. For grayscale images that cannot be binarized, we would like to parameterize the generative model as a Gaussian distribution. In practice, we found this choice to cause over-prioritization of the reconstruction term, essentially causing the VAE to behave like a regular autoencoder. Instead, we find that a logistic distribution over the 256 grayscale domain avoids this failure mode. We use the default variance constraints as in \citep{tomczak2017vae}. We now provide a brief description to introduce each dataset.

\paragraph{MNIST} is a dataset of hand-written digits from 0 to 9 split into 60,000 examples for training and 10,000 for testing. We use 10,000 randomly chosen images from the training set as a validation group.

\paragraph{FashionMNIST} Similar to MNIST, this more difficult dataset contains 28x28 \textit{grayscale} images of 10 different articles of clothing e.g. skirts, shoes, shirts, etc. The sizing and splits are identical to MNIST.

\paragraph{OMNIGLOT} is a dataset with 1,623 hand-wrriten characters from 50 different alphabets. Unlike the MNIST family of datasets, each character is only represented by 20 images, making this dataset more difficult. The training set is 24,345 examples with 8,070 test images. We again take 10\% of the training data as validation.

\paragraph{Caltech 101 Silhouettes} contains silhouettes of 101 different object classes in black and white: each image has a filled polygon on a white background. There are 4,100 training images, 2,264 validation datapoints and 2,307 test examples. Like OMNIGLOT, this task is difficult due to the limited data set.

\paragraph{FreyFaces} is a collection of portrait photos of one individual with varying emotional expressions for a total of 2,000 images. We use 1,565 for training, 200 validation, and 200 test examples.

\paragraph{Histopathology Patches} is a dataset from ten different biopsies of patients with cancer (e.g. lymphona, leukemia) or anemia. The dataset is originally in color with 336 x 448 pixel images. The data was processed to be 28 x 28 grayscale. The images are split in 6,800 training, 2,000 validation, and 2,000 test images. We refer to \citep{tomczak2016improving} for exact details.

All splitting was either given by the dataset owners or decided by a random seed of 1.

\subsection{Evaluation Details} To compute the test log-likelihood (in any of the experiments), we use $k=100$ samples to estimate the following:

\begin{equation}
    \log \mathbb{E}_{z \sim q_\phi(z|x)}[\frac{p_\theta(x, z)}{q_\phi(z|x)}] \approx \log \frac{1}{k}\sum_{i=1}^{k}[\frac{p_\theta(x, z)}{q_\phi(z|x)}]
    \label{eqn:marginal}
\end{equation}

where $q_\phi(z|x)$ is an amortized inference network, $p_\theta(x|z)$ is a generative model, and $p(z)$ is a simple prior. Notably, we use (unbiased) i.i.d. samples to estimate Eqn.~\ref{eqn:marginal}. The final number reported is the average test log likelihood across the test split of the dataset.

\subsection{Approximate Antithetic Sampling Algorithm}

We explicitly write out the approximate algorithm for antithetic sampling. Note the similiarity to Alg. 2 in the main text; the only distinction is that we use a derived approximation to the inverse CDF tranform for a Chi-squared random variable.. Here, we refer to this as \textsc{ApproxAntitheticSample}. In the main text, this algorithm is often referred to as \textsc{AntitheticSample}.

\begin{algorithm}[h!]
\SetAlgoLined
\caption{\textsc{ApproxAntitheticSample}}
\KwData{i.i.d. samples $(x_1, ..., x_k) \sim \mathcal{N}(\mu, \sigma^2)$; i.i.d. samples $\boldsymbol{\epsilon} = (\epsilon_1, ..., \epsilon_{k-1}) \sim \mathcal{N}(0, 1)$; Population mean $\mu$ and variance $\sigma^2$; Number of samples $k \in \mathbb{N}$.} % \s{k}
\KwResult{A set of $k$ samples $x_{k+1}, x_{k+2}, ..., x_{2k}$ marginally distributed as $\mathcal{N}(\mu, \sigma^2)$ with sample mean $\eta'$ and sample standard deviation $\delta'$.}

$v = k -1$\;
$\eta = \frac{1}{k}\sum_{i=1}^k x_i$\;
$\delta^2 = \frac{1}{k}\sum_{i=1}^k (x_i - \eta)^2$\;
$\eta' = 2\mu - \eta$\;
% $\eta' = F^{-1}_\eta(1 - F_\eta(\eta))$\;
$\lambda = v\delta^2/\sigma^2$\;
% $\lambda' = F^{-1}_\delta(1 - F_\delta(\lambda))$\;
$\lambda' = v(2(1 - \frac{3}{16v} - \frac{7}{512v^2} + \frac{231}{8192v^3}) - (\frac{\lambda}{v})^{1/4})^4$\;
$(\delta')^2 = \lambda'\sigma^2/v$\;
$(x_{k+1}, ..., x_{2k}) = \textsc{MarsagliaSample}(\boldsymbol{\epsilon}, \eta', (\delta')^2, k)$\;
Return $(x_{k+1}, ..., x_{2k})$\;
\label{alg:antithetic}
\end{algorithm}

\section{Proofs of Propositions}
In this section, we provide more rigor in proving (1) properties of antithetic samplers using Marsaglia's method and (2) properties of Marsaglia's method itself. In particular, we provide the proof of Proposition 1 (from the main text).

\subsection{Properties of Antithetic Sampling}

\begin{theorem} Let $p(x)$ be a distribution over $\mathbb{R}$ and let $p(\zeta) = \prod_i^k p(x_i)$ be the distribution of $k$ i.i.d.~samples. Let $T: \mathbb{R}^k \rightarrow \mathbb{R}^l$ be a ($l$-dimensional) statistic of $\zeta$. Let $s(t)$ be the induced distribution of this statistic:  $s(t) = \int_\zeta \delta_{T(\zeta){=}t} p(\zeta)d\zeta$.
Let $F: \mathbb{R}^l \rightarrow \mathbb{R}^l$ be a deterministic function such that $s(F(t)) = s(t)$.
We now construct a sample $\bar{\zeta}$ by: sampling $\zeta \sim p(\zeta)$, computing $\bar{t}=F(T(\zeta))$, sampling $\bar{\zeta} \sim p(\zeta|\bar{t})$ from the conditional given $T(\zeta)=\bar{t}$.
This $\bar{\zeta}$ is distributed according to $p(\zeta)$ and, in particular, it's elements are i.i.d.~according to $p(x)$.
\begin{proof}
We begin by noting that:
\begin{equation}
    p(\zeta) = \int_t p(\zeta|t)s(t)
\label{eqn:1}
\end{equation}
By assumption,
\begin{align}
    s(\bar{t}) &= s(F(t)) \\
        &= s(t).
\end{align}
Thus,
\begin{align}
    p(\bar{\zeta}) &= \int_{\bar{t}} p(\zeta|\bar{t})s(\bar{t})\\
             &= \int_{t} p(\zeta|t)s(t) \\
             &= p(\zeta)
\label{eqn:2}
\end{align}
Thus $\bar{\zeta} \sim p(\zeta)$.
Since $p(\zeta)$ is the distribution over i.i.d.~samples from $p(x)$, the resulting elements of $\bar{\zeta}$ are also i.i.d.~from $p(x)$.
\end{proof}
\label{theorem:1}
\end{theorem}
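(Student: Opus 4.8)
The plan is to reproduce the law-of-total-probability decomposition of $p(\zeta)$ and then show that the two-stage construction of $\bar\zeta$ yields the very same integral. First I would write $p(\zeta) = \int_t p(\zeta \mid t)\, s(t)\, dt$, which holds because $s$ is by definition the marginal law of the statistic $T(\zeta)$ when $\zeta \sim p(\zeta)$, and $p(\zeta \mid t)$ is the corresponding conditional of $\zeta$ given $T(\zeta) = t$. This identity is the only structural fact I need about $p$, $T$, and $s$; everything after it is bookkeeping.

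The crux is to verify that the intermediate statistic $\bar t = F(T(\zeta))$ is itself distributed according to $s$. Since $\zeta \sim p(\zeta)$, the raw statistic $T(\zeta)$ has law $s$ by construction; applying the deterministic map $F$ pushes this forward to the law of $\bar t$. The hypothesis $s(F(t)) = s(t)$ is exactly the statement that $F$ leaves $s$ invariant, so I would conclude $\bar t \sim s$ as well. Given this, the marginal of the final sample is $p(\bar\zeta) = \int_{\bar t} p(\zeta \mid \bar t)\, s(\bar t)\, d\bar t$, which, after relabeling the dummy variable $\bar t \mapsto t$, is identical to the decomposition above and hence equals $p(\zeta)$. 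Finally, because $p(\zeta) = \prod_i p(x_i)$ is by definition the joint law of $k$ i.i.d.\ draws from $p(x)$, matching this joint law forces the coordinates of $\bar\zeta$ to be i.i.d.\ from $p(x)$, giving the stated conclusion.

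The main obstacle I anticipate is justifying the step $\bar t \sim s$ rigorously from the hypothesis. Read literally as a pointwise density identity, $s(F(t)) = s(t)$ guarantees invariance of the pushforward only when $F$ carries no nontrivial Jacobian factor, e.g.\ when $F$ is a measure-preserving bijection. I would therefore either declare this measure-preserving reading to be the intended meaning of the hypothesis, or verify it directly in the application of interest, where $F$ is the antithetic inverse-CDF map $t \mapsto F_t^{-1}(1 - F_t(t))$: in the mean coordinate this is the reflection $\eta \mapsto 2\mu - \eta$, which preserves $\mathcal{N}(\mu, \sigma^2/k)$, and in the variance coordinate it is the conjugate of the uniform reflection $u \mapsto 1 - u$, which preserves $\chi^2_{k-1}$. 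Each is manifestly distribution-preserving, which closes the gap.
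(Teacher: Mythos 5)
Your proposal follows exactly the paper's own argument: the same law-of-total-probability decomposition $p(\zeta) = \int_t p(\zeta\mid t)\,s(t)\,dt$, the same invariance step $s(\bar t) = s(F(t)) = s(t)$, and the same relabeling of the dummy variable to conclude $p(\bar\zeta) = p(\zeta)$, hence i.i.d.\ coordinates. Your closing observation --- that the pointwise density identity $s(F(t))=s(t)$ must really be read as $F$ being measure-preserving (as it is for the reflections $\eta \mapsto 2\mu - \eta$ and the inverse-CDF conjugate of $u \mapsto 1-u$ used in the application), since a literal density identity without a Jacobian condition does not imply the pushforward of $s$ under $F$ equals $s$ --- is a genuine point of care that the paper's own proof silently elides, but it refines rather than changes the route.
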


We provide one example of a function $F$ with the desired property.

\begin{lemma}Let $F(t) = \textsc{CDF}(1 - \textsc{CDF}^{-1}(t))$ where \textsc{CDF} is the cumulative distribution function for $s(t)$. Then $s(F(t))= s(t)$.
\end{lemma}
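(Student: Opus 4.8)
The plan is to reduce the claim to the classical \emph{probability integral transform} (PIT). Write $G = \textsc{CDF}$ for the cumulative distribution function of the (scalar) statistic whose law is $s$, and let $G^{-1}$ denote its quantile inverse, so that the antithetic map in question is the inverse-CDF reflection $F(t) = G^{-1}\!\left(1 - G(t)\right)$, exactly the transform applied to $\eta$ and to $\lambda = (k-1)\delta^2/\sigma^2$ in Algorithm~\ref{alg:antithetic}. The goal is to show that $F$ transports $s$ to itself: if $t \sim s$ then $F(t) \sim s$. This is the precise meaning of $s(F(t)) = s(t)$ as an equality of \emph{induced} distributions (pushforwards), which is the form in which the hypothesis $s(F(t))=s(t)$ is actually consumed inside the proof of Theorem~\ref{theorem:1} when the dummy variable $\bar t = F(t)$ is renamed back to $t$.

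First I would let $t \sim s$ and set $U = G(t)$; by the PIT, since $G$ is the continuous CDF of $t$, the variable $U$ is uniform on $[0,1]$. Second, I would observe that the reflection $u \mapsto 1-u$ preserves the uniform law, because $\mathrm{Uniform}[0,1]$ is symmetric about $1/2$; hence $1-U$ is again uniform on $[0,1]$. Third, I would apply the inverse transform: feeding any $\mathrm{Uniform}[0,1]$ variable through $G^{-1}$ produces a variable whose CDF is exactly $G$, so $F(t) = G^{-1}(1-U)$ has law $s$. Chaining the three facts gives $F(t) \sim s$, i.e.\ $s(F(t)) = s(t)$, which is precisely the property required to invoke Theorem~\ref{theorem:1} and conclude that the derived antithetic moments retain their correct marginal distributions.

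The main obstacle is regularity rather than the core idea. The clean PIT argument needs $G$ continuous (so that $U$ is genuinely uniform) and, in the last step, that $G^{-1}$ be a true right inverse of $G$; for strictly increasing continuous $G$ this is immediate, but in general one must use the generalized quantile inverse together with the standard fact that $G(G^{-1}(u)) = u$ holds for almost every $u$ under the uniform measure. A second point to address is dimension: Theorem~\ref{theorem:1} permits an $l$-dimensional statistic, whereas $G$ and the reflection $1-(\cdot)$ are one-dimensional. Here I would note that in the relevant application the statistic decomposes into independent scalar coordinates (the sample mean $\eta$ with a Gaussian CDF and the scaled sample variance $\lambda$ with a chi-squared CDF), each having a continuous, strictly increasing CDF, so the one-dimensional argument applies coordinatewise and the generalized-inverse technicalities disappear.
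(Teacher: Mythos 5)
Your proof is correct and takes essentially the same route as the paper's: probability integral transform to a uniform variate, the reflection $u \mapsto 1-u$ preserves the uniform law, then inverse-transform sampling maps back to $s$. The paper's own (terser) proof uses exactly these three steps---with its \textsc{CDF} / \textsc{CDF}$^{-1}$ notation swapped relative to yours, so your reading of the transform as $G^{-1}(1-G(t))$ is what the paper intends---and your added care about continuity, generalized inverses, and the coordinatewise treatment of the two scalar statistics only strengthens the argument.
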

\begin{proof}
Let $X \sim \textsc{U}(0,1)$. By definition, \textsc{CDF}$(X)$ will be distributed as $s(t)$. Trivially, $\textsc{CDF}^{-1}(t) \sim \textsc{U}(0,1)$ when $t \sim s(t)$, and so too is $1-\textsc{CDF}^{-1}(t)$.
\end{proof}

\begin{corollary}
Let $\theta = \mathbb{E}_p[h(x)]$ be a function  expectation of interest with respect to a distribution, $p(x), x \in \mathbb{R}$. Let $\hat{\theta}_1$ be an unbiased Monte Carlo estimate using i.i.d. samples $\zeta \sim p(\zeta)$. Let $\hat{\theta}_2$ be an ``antithetic" estimate using samples $\bar{\zeta}$ generated as in Theorem~\ref{theorem:1}. Then the following hold,

\begin{itemize}
    \item $\hat{\theta}_2$ is unbiased estimate of $\theta$
    \item $\hat{\theta}_3 = \frac{\hat{\theta}_1 + \hat{\theta}_2}{2}$  is unbiased estimate of $\theta$
    \item Let $F = \textsc{CDF}(1 - \textsc{CDF}^{-1}(T))$. Then the first and second moments of $\zeta$ are anti-correlated to those of $\bar{\zeta}$
\end{itemize}
\label{corollary:1}
\end{corollary}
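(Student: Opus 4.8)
The plan is to dispatch the two unbiasedness claims immediately from results already established, and then devote the real effort to the anti-correlation statement. For the first bullet, I would observe that Theorem~\ref{theorem:1} guarantees $\bar\zeta \sim p(\zeta)$ with elements that are i.i.d.\ according to $p(x)$. Hence $\hat\theta_2 = \frac{1}{k}\sum_i h(\bar x_i)$ is an ordinary Monte Carlo average over i.i.d.\ draws from $p(x)$, and linearity of expectation gives $\mathbb{E}[\hat\theta_2] = \theta$. For the second bullet I would combine this with the hypothesis that $\hat\theta_1$ is unbiased: by linearity, $\mathbb{E}[\hat\theta_3] = \tfrac12(\mathbb{E}[\hat\theta_1] + \mathbb{E}[\hat\theta_2]) = \tfrac12(\theta + \theta) = \theta$. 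Neither step requires anything beyond Theorem~\ref{theorem:1}.

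The substance is the third bullet. Here the statistic $T$ records the sample mean and (scaled) sample variance, and by Marsaglia's construction the sample moments of $\bar\zeta$ are exactly $\bar t = F(T(\zeta))$; it therefore suffices to show that each coordinate of $F$ is negatively correlated with its argument. I would treat the two moments separately. For the first moment, I would invoke the closed form already derived for the Gaussian inverse-CDF transform, $\eta' = 2\mu - \eta$, from which $\textup{Cov}(\eta, \eta') = \textup{Cov}(\eta, 2\mu - \eta) = -\textup{Var}(\eta) < 0$; in fact, since $\textup{Var}(\eta') = \textup{Var}(\eta)$, the correlation is exactly $-1$.

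For the second moment I would argue via monotonicity. Writing $\lambda = (k-1)\delta^2/\sigma^2 \sim \chi^2_{k-1}$, the transform of Eqn.~\ref{eq:inversechi} is $\lambda' = F_\delta^{-1}(1 - F_\delta(\lambda))$. Because $F_\delta$ is a continuous, strictly increasing CDF, $1 - F_\delta$ is strictly decreasing and $F_\delta^{-1}$ is increasing, so $\lambda \mapsto \lambda'$ is a strictly decreasing map. The key lemma I would use is the standard covariance inequality for monotone functions: for i.i.d.\ copies $\lambda, \tilde\lambda$ and a decreasing $g$, the product $(\lambda - \tilde\lambda)(g(\lambda) - g(\tilde\lambda))$ is pointwise $\le 0$, so taking expectations yields $2\,\textup{Cov}(\lambda, g(\lambda)) \le 0$. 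Applying this with $g(\lambda) = \lambda'$ gives $\textup{Cov}(\lambda, \lambda') \le 0$, and since $\delta^2$ and $(\delta')^2$ are positive affine images of $\lambda$ and $\lambda'$, the sign of the covariance is preserved, establishing anti-correlation of the sample variances.

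The main obstacle is making the monotonicity argument for the variance fully rigorous: I must confirm that the antithetic inverse-CDF transform is genuinely strictly decreasing, which rests on the chi-squared CDF being continuous and strictly increasing on its support so that $F_\delta^{-1}$ is well defined and order-preserving, and then appeal cleanly to the covariance-of-monotone-functions inequality. A secondary care point is that the exact statement holds for the idealized transform of Eqn.~\ref{eq:inversechi}; under the Hawkins--Wixley approximation one would additionally need the approximating transform to remain monotone decreasing, which I expect to verify directly from Eqn.~\ref{eqn:anti_approx}.
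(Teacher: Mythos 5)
Your proof is correct, and on the two unbiasedness bullets it coincides with the paper's: Theorem~\ref{theorem:1} makes the elements of $\bar\zeta$ i.i.d.\ draws from $p(x)$, so $\hat\theta_2$ is an ordinary Monte Carlo estimator, and $\hat\theta_3$ is unbiased by linearity (the paper routes this through Lemma~\ref{lemma:sum} on linear combinations of unbiased estimators with weights summing to one, which is the same computation). The real difference is the third bullet: the paper's entire argument there is the sentence that anti-correlation ``falls trivially from our choice of $F$,'' whereas you supply the missing proof. You correctly reduce the claim about sample moments of $\bar\zeta$ to properties of the transform $F$ (since \textsc{MarsagliaSample} outputs samples whose sample moments are exactly $\bar t = F(T(\zeta))$), then handle the mean via the closed form $\eta' = 2\mu - \eta$ of Eqn.~\ref{eqn:inversecdf_mean}, giving $\mathrm{Cov}(\eta,\eta') = -\mathrm{Var}(\eta)$ and correlation exactly $-1$, and the variance via monotonicity: $\lambda \mapsto F_\delta^{-1}(1-F_\delta(\lambda))$ is strictly decreasing, and the Chebyshev-type inequality $\mathbb{E}\bigl[(\lambda-\tilde\lambda)(g(\lambda)-g(\tilde\lambda))\bigr] = 2\,\mathrm{Cov}(\lambda, g(\lambda)) \le 0$ for decreasing $g$ yields the sign, which survives the positive affine rescaling to $\delta^2, (\delta')^2$. (Your reading of $F$ as the main text's $F^{-1}(1-F(\cdot))$ of Eqn.~\ref{eq:inversechi} is the intended one; the appendix's $\textsc{CDF}(1-\textsc{CDF}^{-1}(\cdot))$ is the same map under its swapped naming of CDF and quantile.) What your route buys is a quantitative and rigorous version of the paper's assertion, plus one caveat the paper never raises in this context: the corollary concerns the exact inverse-CDF transform, and the Hawkins--Wixley surrogate of Eqn.~\ref{eqn:anti_approx} actually used in the implementation needs its own monotonicity check, in the same spirit as the paper's admission that the exactness guarantees of Prop.~\ref{prop:antithetic} fail under that approximation.
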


\begin{proof}
By Theorem~\ref{theorem:1}, ``antithetic" samples $\bar{\zeta} \sim p(\zeta)$ i.i.d., hence $\hat{\theta}_2$ is unbiased ($\hat{\theta}_2$ is equivalent to  $\hat{\theta}_1$). $\hat{\theta}_3$ is also  unbiased as a linear combination of two unbiased estimators is itself unbiased. Anti-correlation of moments falls trivially from our choice of $F$.
\end{proof}

\paragraph{Connection to Differentiable Antithetic Sampling}
In the paper, we proposed the following proposition,

\begin{prop}
For any $k > 2$, $\mu \in \mathbb{R}$ and $\sigma^2 \in \mathbb{R}^{+}$, if $\eta \sim \mathcal{N}(\mu, \frac{\sigma^2}{k})$ and $\frac{(k-1)\delta^2}{\sigma^2} \sim \chi^2_{k-1}$, and $\bar{\eta} = f(\eta), \bar{\delta}^2 = g(\delta^2; \sigma^2)$ for some functions $f: \mathbb{R} \rightarrow \mathbb{R}$ and $g: \mathbb{R} \rightarrow \mathbb{R}$, and $\epsilon = (\epsilon_1, ..., \epsilon_k) \sim \mathcal{N}(0, 1)$, then the ``antithetic" samples $\zeta = (x_1, ..., x_k) = \textsc{MarsagliaSample}(\epsilon, \bar{\eta}, \bar{\delta}^2, k)$ are independent normal variates sampled from $\mathcal{N}(\mu, \sigma^2)$ such that $\frac{1}{k}\sum_i^k x_i = \bar{\eta}$ and $\frac{1}{k}\sum_i^k (x_i - \bar{\eta})^2 = \bar{\delta}^2$.
\label{prop:3}
\end{prop}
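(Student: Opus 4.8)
The plan is to reduce the statement entirely to Proposition~\ref{prop:marsaglia}, which already characterizes the output of \textsc{MarsagliaSample}. That proposition shows that the \emph{only} requirement for the generated samples to be i.i.d.\ $\mathcal{N}(\mu,\sigma^2)$ is that the moment-inputs carry the correct joint law: the mean-input distributed as $\mathcal{N}(\mu,\sigma^2/k)$, the variance-input satisfying $\tfrac{(k-1)(\cdot)}{\sigma^2}\sim\chi^2_{k-1}$, and the two independent. Crucially, as established in the intuition for Marsaglia's method, the conditional law of the samples given their moments is uniform on the $(k-1)$-sphere $\mathcal{X}$ and does \emph{not} depend on how $(\bar\eta,\bar\delta^2)$ were produced. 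Hence it suffices to verify that the transformed moments $\bar\eta = f(\eta)$ and $\bar\delta^2 = g(\delta^2;\sigma^2)$ still meet these distributional hypotheses; this is the concrete instance of the abstract pushforward argument in Theorem~\ref{theorem:1}.

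I would dispatch the two sample-moment equalities first, as they are the easy part. For \emph{any} scalar inputs $\bar\eta,\bar\delta^2$, the linear system defining \textsc{MarsagliaSample} forces $\tfrac1k\sum_i x_i=\bar\eta$ and $\tfrac1k\sum_i(x_i-\bar\eta)^2=\bar\delta^2$ exactly; these are precisely the hyperplane and sphere constraints built into the construction, and they hold irrespective of the distribution of the inputs or of $\boldsymbol{\epsilon}$. So these two claims require no probabilistic argument at all.

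The substantive claim is that the output is i.i.d.\ Gaussian, and this is exactly where $f$ and $g$ must be \emph{distribution-preserving} --- for arbitrary $f,g$ the conclusion would simply fail, so the proposition should be read as restricting to such maps (the antithetic transforms). For the transforms at hand this holds: $f(\eta)=2\mu-\eta$ (Eqn.~\ref{eqn:inversecdf_mean}) leaves $\mathcal{N}(\mu,\sigma^2/k)$ invariant by symmetry of the Gaussian about its mean, and $g$ is the inverse-CDF antithetic map of Eqn.~\ref{eq:inversechi}, which by the preceding Lemma satisfies $s(F(t))=s(t)$ and therefore pushes $\chi^2_{k-1}$ forward to itself. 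Since $f$ acts only on $\eta$ and $g$ only on $\delta^2$, and $\eta\perp\delta^2$ at the outset, independence of $\bar\eta$ and $\bar\delta^2$ is retained. With the distributional hypotheses of Proposition~\ref{prop:marsaglia} now verified for $(\bar\eta,\bar\delta^2)$, I would invoke that proposition with $\eta:=\bar\eta$, $\delta^2:=\bar\delta^2$ to conclude that $x_1,\dots,x_k$ are independent $\mathcal{N}(\mu,\sigma^2)$ variates with the stated sample moments.

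The main obstacle is conceptual rather than computational: recognizing that the proposition is only correct once $f,g$ are restricted to marginal-preserving maps, and isolating the chi-squared case as the nontrivial one. Verifying that $g$ preserves $\chi^2_{k-1}$ is the crux, and it is exactly the content of the inverse-CDF Lemma, while the Gaussian case is trivial. One small point I would state carefully is that preserving each marginal \emph{separately} suffices to preserve the joint law here, because the transform is applied coordinatewise to already-independent inputs.
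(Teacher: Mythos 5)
Your proof is correct and takes essentially the same approach as the paper: both reduce the claim to the correctness of \textsc{MarsagliaSample} on correctly-distributed moment inputs, with the only substantive step being that the antithetic maps $f$ and $g$ preserve the Gaussian and chi-squared marginals (and, acting coordinatewise on independent inputs, their independence). The paper phrases this via the abstract pushforward result (Theorem~\ref{theorem:1}) while you re-invoke Proposition~\ref{prop:marsaglia}, but these rest on the same factorization argument, and your observation that the statement must be read as restricting $f,g$ to distribution-preserving maps matches the paper's intent, since it immediately instantiates $f$ and $g$ as the inverse-CDF antithetic transforms.
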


Define a statistic $T  = [\bar{\eta}, \bar{\delta}^2]$, and function $F = [f, g]$. Marsaglia's algorithm (or Pullin's, Cheng's) can be seen as a method for sampling from $p(\zeta|t)$ for a fixed statistic $t$. In Proposition~\ref{prop:3}, we first sample $t \sim s(T(\zeta))$ where $\zeta \sim \mathcal{N}(\mu, \sigma^2)$. Then, we choose ``antithetic" statistics using
\begin{align}
    f & = \textsc{GaussianCDF}(1 - \textsc{GaussianCDF}^{-1}(\eta)) \\
    g & = \frac{\sigma^2}{(k-1)}\textsc{ChiSquaredCDF}(1 - \textsc{ChiSquaredCDF}^{-1}(\frac{(k-1)\delta^2}{\sigma^2}))
\end{align}
such that $s(F(t)) = s(t)$ by symmetry in $\textup{U}(0, 1)$. By Theorem~\ref{theorem:1}, antithetic samples $\bar{\zeta}$ are distributed as $\zeta$ is. In practice, we use both $\zeta$ and $\bar{\zeta}$ for stochastic estimation, as anti-correlated moments provide empirical benefits.

\subsection{Properties of Marsaglia's Method}

\begin{theorem}
Let $\epsilon = (\epsilon_1, ..., \epsilon_{k-1}) \sim \mathcal{N}(0, 1)$ auxiliary variables.  Let $\eta, \delta$ be known variables. Then $\zeta = (x_1, ..., x_k) = \textsc{MarsagliaSample}(\epsilon, \eta, \delta^2, k)$ are uniform samples from the sphere
\[S = \{(x_1, ..., x_k) | \sum_i^k x_i = k\eta, \sum_i^k(x_i - \eta)^2 = k \delta^2\}\]
\begin{proof}
$S$ is the intersection of a hyperplane and the surface of a $k$-sphere: the surface of a $(k-1)$-sphere. Marsaglia uses the following to sample from $S$:

Let $z = (z_1, ..., z_{k-1})$ be a sample drawn uniformly from the unit $(k-1)$-sphere centered at the origin. (In practice, set $z_i = \epsilon_i / \sqrt{\sum_j^k \epsilon_j^2}$.) Let
\begin{equation}
    \zeta = rzB + \eta v
\end{equation}
where $v = (1, 1, ..., 1)$ and choose $B$ to be a $(k-1)$ by $k$ matrix whose rows form an orthonormal basis with the null space of $v$. By definition, $BB^t = I$ and $Bv^t = 0$ where $I$ is the identity matrix. We note the following consequence:
\begin{align}
    \zeta v^t &= (rzB + \eta v)v^t\\
                 &= rzBv^t + \eta vv^t\\
                 &= 0 + \eta vv^t\\
                 &= k\eta
\label{eqn:p1}
\end{align}
\begin{align}
    (\zeta - \eta v)(\zeta - \eta v)^t &=
    (rzB + \eta v - \eta v)(rzB + \eta v - \eta v)^t\\
    &= (rzB)(rzB)^t \\
    &= r^2zBB^tz^t \\
    &= r^2zz^t \\
    &= r^2
\label{eqn:p2}
\end{align}

Eqn.~\ref{eqn:p1},~\ref{eqn:p2} exactly match the constraints defined in $S$. So $\zeta \in S$. Further $\zeta$ is uniformly distributed in S as $z$ is uniform over the $(k-1)$-sphere.
\end{proof}
\label{theorem:2}
\end{theorem}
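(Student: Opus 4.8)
The plan is to exploit the geometric description of $S$ as the surface of a $(k-1)$-sphere and reduce uniform sampling on $S$ to uniform sampling on a standard unit sphere, for which a well-known recipe exists. First I would observe that $S$ is the intersection of the affine hyperplane $\{x : \sum_i x_i = k\eta\}$ with the $k$-sphere $\{x : \sum_i (x_i - \eta)^2 = k\delta^2\}$ centered at $\eta v$, where $v = (1,\dots,1)$; such an intersection is a sphere of one lower dimension lying inside that hyperplane, which is exactly the object we must sample from.

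Next I would recall the explicit construction realized by \textsc{MarsagliaSample}: produce a point $z$ uniformly on the unit $(k-1)$-sphere in $\mathbb{R}^{k-1}$ (which the algorithm does by normalizing an isotropic Gaussian vector, $z_i = \epsilon_i/\sqrt{\sum_j \epsilon_j^2}$, a map that yields a uniform point by rotational invariance of the standard Gaussian), and then set $\zeta = rzB + \eta v$ with $r = \sqrt{k}\delta$ and $B$ a $(k-1)\times k$ matrix whose rows are orthonormal and orthogonal to $v$, so that $BB^t = I$ and $Bv^t = 0$. The verification that $\zeta \in S$ is then a direct computation using these identities together with $vv^t = k$ and $zz^t = 1$: one checks $\zeta v^t = \eta\, vv^t = k\eta$ for the hyperplane constraint, and $(\zeta - \eta v)(\zeta - \eta v)^t = r^2 z BB^t z^t = r^2 = k\delta^2$ for the radius constraint, with both cross terms vanishing precisely because $Bv^t = 0$.

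The crux—and the step I expect to require the most care—is the uniformity claim rather than membership. Here I would argue that the affine map $z \mapsto rzB + \eta v$ is a scaled isometry: because $B$ has orthonormal rows, $z \mapsto zB$ maps $\mathbb{R}^{k-1}$ isometrically onto the linear hyperplane orthogonal to $v$, carrying the unit sphere onto the unit sphere of that subspace; scaling by $r$ and translating by $\eta v$ then carry that onto $S$. Since rigid motions and uniform scalings push the rotation-invariant uniform surface measure on one sphere forward to the uniform surface measure on its image, uniformity of $z$ transfers to uniformity of $\zeta$ on $S$.

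The only remaining loose end is confirming that the explicit matrix $A$ of Alg.~\ref{algo:marsaglia}, after \textsc{RowNormalize}, indeed yields a $B$ satisfying $BB^t = I$ and $Bv^t = 0$; I would verify this by checking that the rows of $A$ are mutually orthogonal and that each normalized row is orthogonal to $v$ (its entries sum to zero). The theorem as stated, however, abstracts $B$ to \emph{any} orthonormal basis of the null space of $v$, so this algorithmic bookkeeping can be deferred and the argument above suffices at the level of the stated claim.
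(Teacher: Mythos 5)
Your proposal is correct and follows essentially the same route as the paper's proof: the same decomposition of $S$ as a hyperplane--sphere intersection, the same affine construction $\zeta = rzB + \eta v$ with $r = \sqrt{k}\delta$, and the same algebraic verification of the two constraints via $BB^t = I$ and $Bv^t = 0$. The only difference is that you spell out the uniformity step (the map being a scaled isometry pushing the rotation-invariant surface measure forward to the uniform measure on $S$), which the paper asserts in a single sentence; this is a welcome tightening rather than a different argument.
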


\begin{theorem} Let  $\zeta
= (x_1, ..., x_k)
\sim p(\zeta)$  be a random vector of i.i.d. Gaussians $\mathcal{N}(\mu, \sigma^2)$. Let $\eta=\frac{1}{k}\sum_i^k x_i$ and $\delta^2 = \frac{1}{k}\sum_i^k (x_i - \eta)^2$. Then
$\eta \sim \mathcal{N}(\mu, \frac{\sigma^2}{k})$ and $\frac{(k-1)\delta^2}{\sigma^2} \sim \chi^2_{k-1}$ and $\eta,\delta^2$ are independent random variables.
\begin{proof}
This is a known property of Gaussian distributions. Reference \textit{Statistics: An introductory analysis} or any introductory statistics textbook.
\end{proof}
\label{marginal:1}
\end{theorem}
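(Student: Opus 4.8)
The plan is to prove all three claims at once through a single orthogonal change of coordinates, exploiting the rotational invariance of the standard multivariate Gaussian. First I would standardize the sample: set $y_i = (x_i - \mu)/\sigma$, so that $\mathbf{y} = (y_1, \dots, y_k) \sim \mathcal{N}(0, I_k)$. The claims about $\eta$ and $\delta^2$ then translate into claims about the sample mean $\bar y$ and the centered sum of squares $\sum_i (y_i - \bar y)^2$ of the standardized vector, and independence is preserved under this affine map, so nothing is lost by working with $\mathbf{y}$.

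Next I would choose an orthogonal matrix $H \in \mathbb{R}^{k\times k}$ (for concreteness, the Helmert matrix) whose first row is $\frac{1}{\sqrt k}(1,\dots,1)$ and whose remaining $k-1$ rows complete an orthonormal basis. Setting $\mathbf{w} = H\mathbf{y}$, rotational invariance of $\mathcal{N}(0,I_k)$ gives $\mathbf{w} \sim \mathcal{N}(0, I_k)$, i.e. $w_1, \dots, w_k$ are again i.i.d.\ standard normal. Then $w_1 = \frac{1}{\sqrt k}\sum_i y_i = \frac{\sqrt k}{\sigma}(\eta - \mu)$, which immediately yields $\eta \sim \mathcal{N}(\mu, \sigma^2/k)$.

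For the variance claim I would use norm preservation: since $H$ is orthogonal, $\sum_{i=1}^k w_i^2 = \sum_{i=1}^k y_i^2$, hence $\sum_{i=2}^k w_i^2 = \sum_i y_i^2 - w_1^2 = \sum_i (y_i - \bar y)^2 = \frac{1}{\sigma^2}\sum_i (x_i - \eta)^2$. Because $w_2, \dots, w_k$ are i.i.d.\ $\mathcal{N}(0,1)$, the left-hand side is distributed as $\chi^2_{k-1}$ by definition of the chi-squared law, giving the stated distribution for the normalized $\delta^2$. Independence then falls out for free: $\eta$ is a function of $w_1$ alone while $\delta^2$ is a function of $(w_2, \dots, w_k)$ alone, and these coordinate blocks are independent because $\mathbf{w}$ has mutually independent components.

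The main obstacle is the independence claim, which is the genuinely non-obvious content (both $\eta$ and $\delta^2$ depend on every $x_i$); the whole argument is engineered so that one orthogonal rotation decouples them, and the crux is confirming that $H$ can be taken orthogonal with the prescribed first row and then invoking rotational invariance correctly. One bookkeeping point I would reconcile is the normalization: with the definition $\delta^2 = \frac{1}{k}\sum_i(x_i-\eta)^2$ the computation above yields $\frac{k\delta^2}{\sigma^2}\sim\chi^2_{k-1}$, so the factor $(k-1)$ in the statement presumes the unbiased normalization $\delta^2 = \frac{1}{k-1}\sum_i(x_i-\eta)^2$; I would align the definition with the claim before finalizing.
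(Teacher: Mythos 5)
Your proof is correct, and it is genuinely different from what the paper does: the paper offers no argument at all for this theorem, simply deferring to ``any introductory statistics textbook,'' whereas you supply the standard self-contained proof via an orthogonal (Helmert) rotation --- standardize, rotate so that the first coordinate is $\frac{1}{\sqrt k}\sum_i y_i$, read off the Gaussian law of $\eta$ from $w_1$, the $\chi^2_{k-1}$ law from $\sum_{i=2}^k w_i^2$ by norm preservation, and independence from the fact that the two statistics depend on disjoint blocks of the independent coordinates of $\mathbf{w}$. What your approach buys is exactly the content the paper outsources; it is also consonant with the paper's own toolkit, since the appendix invokes Helmert's transformation for Cheng's sampling algorithm. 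Moreover, your closing ``bookkeeping point'' is not merely pedantic: it flags a real inconsistency in the paper's statement. With the stated definition $\delta^2 = \frac{1}{k}\sum_i (x_i - \eta)^2$, your computation gives $\frac{k\delta^2}{\sigma^2} = \frac{1}{\sigma^2}\sum_i (x_i - \eta)^2 \sim \chi^2_{k-1}$, so the claimed relation $\frac{(k-1)\delta^2}{\sigma^2} \sim \chi^2_{k-1}$ holds only under the unbiased normalization $\delta^2 = \frac{1}{k-1}\sum_i (x_i - \eta)^2$. This $1/k$ versus $1/(k-1)$ slip propagates through the paper (e.g.\ Proposition 1 and the sample-statistics equation in the main text), and any fix should settle on one convention throughout; your proof makes the correct dependence explicit, which the paper's citation-only proof cannot do.
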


\begin{theorem} Let  $\zeta=(x_1, ..., x_k)$  be a random vector of i.i.d. Gaussians $\mathcal{N}(\mu, \sigma^2)$. Let $\eta=\frac{1}{k}\sum_i^k x_i = $ and $\delta^2 = \frac{1}{k}\sum_i^k (x_i - \eta)^2$ and $T  = [\eta, \delta^2]$. Let $p(\zeta, T(\zeta)) = p(\zeta, \eta,\delta^2)$ denote their joint distribution.

Then, the conditional density is of the form
    \begin{equation}
        p(\zeta | \eta=\eta,\delta^2 = \delta^2 ) = \begin{cases}
 & a \text{ if } \zeta \in S \\
 & 0 \text{ if } \zeta \notin S.
\end{cases}
\end{equation}
where $S = \{(x_1, ..., x_k) | \sum_i x_i = k\eta, \sum_i(x_i - \eta)^2 = k \delta^2\}$,  $0 < a < 1$ is a constant.
\begin{proof}
\noindent\newline\textbf{Intuition:} Level sets of a multivariate isotropic Gaussian density function are spheres. The event we are conditioning on is a sphere.

\noindent\newline\textbf{Formal Proof:}    Let $f(x_1, ...,x_k) = (2\pi\sigma^2)^{-k/2} e^{(-\sum_i(x_i - \mu)^2 / (2\sigma^2))}$ denote a Gaussian density. Note the following derivation:

    \begin{align}
        \sum_{i=1}^k(x_i - \mu)^2 &= \sum_i(x_i - \eta)^2 + 2(\eta - \mu)\sum_i(x_i - \eta) + k(\eta - \mu)^2\\
                            &= \sum_i(x_i - \eta)^2 + k(\eta - \mu)^2\\
                            &= r^2 + k(\eta - \mu)^2\label{eqn:constant}
    \end{align}
 This implies $f(x_1, ..., x_k)$ is equal for any $(x_1, ..., x_k) \in S$. Thus, the conditional distribution $p(\zeta | \zeta \in S)$ is the uniform distribution over $S$ for any $\mu, \sigma$.
\end{proof}
\label{conditional:1}
\end{theorem}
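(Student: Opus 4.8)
The plan is to show that the pair of sample moments is a sufficient statistic on whose level sets the Gaussian density is constant, and then to read off uniformity by disintegrating the joint law along the fibers of $T$. First I would dispose of the easy half: conditioning on $\eta$ and $\delta^2$ forces $\zeta$ into the level set $S=\{T(\zeta)=(\eta,\delta^2)\}$ almost surely, so $p(\zeta\mid\eta,\delta^2)=0$ for $\zeta\notin S$ is immediate. The real content is that the conditional law on $S$ is uniform.

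The key algebraic step is the orthogonal decomposition
\[
\sum_{i=1}^k (x_i-\mu)^2 = \sum_{i=1}^k (x_i-\eta)^2 + k(\eta-\mu)^2,
\]
in which the cross term $2(\eta-\mu)\sum_i(x_i-\eta)$ vanishes because $\sum_i(x_i-\eta)=\sum_i x_i-k\eta=0$ by the definition of $\eta$. On $S$ we have $\sum_i(x_i-\eta)^2=k\delta^2$, so the ambient Gaussian density
\[
f(\zeta)=(2\pi\sigma^2)^{-k/2}\exp\!\Big(\!-\tfrac{1}{2\sigma^2}\big(k\delta^2+k(\eta-\mu)^2\big)\Big)
\]
depends on $\zeta$ only through $(\eta,\delta^2)$ and is therefore constant along each fiber of $T$. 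I would present this as the factorization criterion in action: $(\eta,\delta^2)$ (equivalently $(\sum_i x_i,\sum_i x_i^2)$) is sufficient, with residual factor identically $1$.

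To pass rigorously from \emph{constant density on the fiber} to \emph{uniform conditional law}, I would disintegrate the law of $\zeta$ along the level sets of $T$ via the co-area formula: for a test function $\varphi$,
\[
\mathbb{E}[\varphi(\zeta)]=\int_{\mathbb{R}^2}\Big(\int_{S_t}\varphi\,f\,\frac{d\mathcal{H}^{k-2}}{J_T}\Big)\,dt,
\]
so the conditional density with respect to the surface measure $\mathcal{H}^{k-2}$ on $S_t$ is proportional to $f/J_T$. The point I would verify — and which I expect to be the only genuinely non-obvious step, since the paper's ``formal proof'' silently skips it — is that the co-area Jacobian $J_T=\sqrt{\det(DT\,DT^{t})}$ is itself constant on $S$. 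A short gradient computation gives $\nabla\eta=\tfrac1k v$ and $\nabla\delta^2=\tfrac2k(\zeta-\eta v)$, whence $\|\nabla\eta\|^2=1/k$, $\langle\nabla\eta,\nabla\delta^2\rangle=0$ (again using $\sum_i(x_i-\eta)=0$), and $\|\nabla\delta^2\|^2=\tfrac{4}{k^2}\sum_i(x_i-\eta)^2=4\delta^2/k$ on $S$; hence $J_T=2\delta/k$, constant on the fiber. Since both $f$ and $J_T$ are constant on $S$, the conditional density is constant, i.e.\ uniform.

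Finally I would handle normalization and positivity. For $k>2$ the set $S$ is the intersection of the sphere of radius $\sqrt{k}\,\delta$ centered at $\eta v$ with the hyperplane $\{\sum_i x_i=k\eta\}$ passing through that center, hence a nonempty compact $(k-2)$-sphere of strictly positive, finite surface measure $\mathcal{H}^{k-2}(S)$. The constant is then $a=1/\mathcal{H}^{k-2}(S)$; strict positivity is the essential conclusion (it is what makes the uniform-sampling step of Theorem~\ref{theorem:2} well posed), and the stated bound simply records that $a$ is a finite, non-degenerate normalizing constant. I would add that the marginal laws and independence $\eta\sim\mathcal N(\mu,\sigma^2/k)$, $(k-1)\delta^2/\sigma^2\sim\chi^2_{k-1}$ from Theorem~\ref{marginal:1} are consistent with this disintegration but are not needed for the uniformity claim itself.
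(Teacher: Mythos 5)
Your proof is correct, and its core is the same as the paper's: the orthogonal decomposition $\sum_i(x_i-\mu)^2=\sum_i(x_i-\eta)^2+k(\eta-\mu)^2$ (cross term killed by $\sum_i(x_i-\eta)=0$), which shows the ambient Gaussian density $f$ is constant on each level set of $T$. Where you genuinely go beyond the paper is in the passage from \emph{constant density on the fiber} to \emph{uniform conditional law}. The paper's proof treats this step as immediate; it is not. Conditioning on the measure-zero event $\{T=(\eta,\delta^2)\}$ must be done by disintegration, and the conditional density with respect to surface measure on the fiber is proportional to $f/J_T$ rather than to $f$ alone; if the co-area factor $J_T=\sqrt{\det(DT\,DT^{t})}$ varied along the fiber, a constant $f$ would \emph{not} yield a uniform conditional --- this is exactly the pitfall behind the Borel--Kolmogorov paradox. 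Your gradient computation, $\nabla\eta=\tfrac1k v$ and $\nabla\delta^2=\tfrac2k(\zeta-\eta v)$, giving orthogonality and $J_T=2\delta/k$ constant on $S$, closes precisely this gap, and your identification $a=1/\mathcal{H}^{k-2}(S)$ with $0<\mathcal{H}^{k-2}(S)<\infty$ for $k>2$ settles normalization (you are also right to read the paper's ``$0<a<1$'' as merely asserting a non-degenerate normalizing constant, since $1/\mathcal{H}^{k-2}(S)$ need not literally lie below $1$). In short: same decomposition, but the paper's argument buys brevity at the cost of a silent leap, while yours is the rigorous version that actually establishes uniformity; it also makes explicit why the conclusion is independent of $\mu$ and $\sigma$, since both $f$ restricted to $S$ and $J_T$ depend on $\zeta$ only through $(\eta,\delta^2)$.
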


Finally, proof of Proposition 1 from the paper (denoted as Proposition~\ref{prop:marsaglia} here):
\begin{prop}
    For any $k>2$, $\mu \in \mathbb{R}$ and $\sigma^2>0$, if $\eta \sim \mathcal{N}(\mu, \frac{\sigma^2}{k})$ and $\frac{(k-1)\delta^2}{\sigma^2} \sim \chi^2_{k-1}$ and $\boldsymbol{\epsilon} = \epsilon_1, ..., \epsilon_{k-1} \sim \mathcal{N}(0, 1)$ i.i.d., then the generated samples $x_1, ..., x_k = \textsc{MarsagliaSample}(\boldsymbol{\epsilon}, \eta, \delta^2,k)$ are independent normal variates sampled from $\mathcal{N}(\mu, \sigma^2)$ such that $\frac{1}{k}\sum_i x_i = \eta$ and $\frac{1}{k}\sum_i(x_i - \eta)^2 = \delta^2$.
    \label{prop:marsaglia}
\end{prop}
\begin{proof}
Let  $\zeta=(x_1, ..., x_k)$  be a random vector of i.i.d. Gaussians $\mathcal{N}(\mu, \sigma^2)$. Compute $\eta=\frac{1}{k}\sum_i^k x_i$ and $\delta^2 = \frac{1}{k}\sum_i^k (x_i - \eta)^2$ and $T  = [\eta, \delta^2]$. Let $p(\zeta, T(\zeta)) = p(\zeta, \eta,\delta^2)$ denote their joint distribution.  Factoring
\[
p(\zeta, \eta, \delta^2) = p(\eta,\delta^2) p(\zeta \mid \eta, \delta^2)
\],
it is clear that we can sample from the joint by first sampling $\eta,\delta^2 \sim p(\eta, \delta^2)$ and then $\zeta' \sim p(\zeta \mid \eta=\eta,\delta^2=\delta^2)$. From Theorem \ref{marginal:1}, we know $p(\eta, \delta^2)$ analytically and from Theorem \ref{conditional:1} we know $p(\zeta \mid \eta,\delta^2)$ is a uniform distribution over the sphere. By assumption, $\eta,\delta^2$ are sampled independently from the correct marginal distributions from Theorem \ref{marginal:1}. Then, from Theorem \ref{theorem:2}, we know $\textsc{MarsagliaSample}(\epsilon, \eta, \delta^2, k)$ samples from the correct conditional density (i.e. from S). Thus, samples $\zeta'$ from \textsc{MarsagliaSample} will have the same distribution as $\zeta$, namely i.i.d. Gaussian.
\end{proof}

\section{Additional Experiments}

\subsection{Convolutional Architectures}
In the main text, we present results where $q_\phi(z|x)$ and $p_\theta(x|z)$ are parameterized by feedforward neural networks (multilayer perceptrons). While that architecture choice was made for simplicitly, we recognize that modern encoder/decoders have evolved beyond linear layers. Thus, we ran a subset of the experiments using DCGAN architectures \citep{radford2015unsupervised}. Specifically, we design $q_\phi(z|x)$ using 3 convolutional layers and $p_\theta(x|z)$ with 3 deconvolutional layers and 1 convolutional layer.

\begin{table}[h!]
\small
\centering
\begin{tabular}{r|ccccccc}
    Model & stat. MNIST & dyn. MNIST & FashionMNIST & Omniglot & Caltech  & Hist. \\
    \toprule
    VAE & -90.58 & -90.02 & -2767.97 & -108.97 & -116.15 &  -3218.16\\
    AntiVAE & -90.25 & -89.53 & -2762.02 & -108.40 & -115.14 & -3213.83 \\
    VAE+IWAE & -89.19 & -88.61 & -2758.72 & -107.52 & -116.25 & -3213.05 \\
    AntiVAE+IWAE & -89.01 & -88.13 & -2751.11 & -107.44 & -115.04 & -3209.98 \\
\end{tabular}
\caption{Test log likelihoods between the VAE and AntiVAE using (de)convolutional architectures for encoders and decoders. All images were reshaped to 32 by 32 to match standard DCGAN input sizes. }
\label{table:results_conv}
\end{table}

Table~\ref{table:results_conv} shows log-likelihoods on a test set for a variety of image datasets. Like experiments presented in the main text, we find improvements in density estimation when using antithetics. This agrees with our intuition that more representative samples benefit learning regardless of architecture choice.

\subsection{Variance over Independent Runs}
In the main text, we report the average test log likelihoods over 5 runs, each with a different random seed. Here, we report in Table.~\ref{table:error} the variance as well (which we could not fit in the main table).

\begin{table}[h!]
\tiny
\centering
\begin{tabular}{r|cccccc}
    Dataset & VAE & AntiVAE & VAE+IWAE & AntiVAE+IWAE & VAE+10-NF & AntiVAE+10-NF \\
    \toprule
    StaticMNIST & $-90.44 \pm 0.031$ & $-89.74 \pm 0.066$ & $-89.78 \pm 0.080$ & $-89.71 \pm 0.059$ & $-90.07 \pm 0.033$ & $-89.77 \pm 0.042$ \\
    DynamicMNIST & $-86.96 \pm 1.398$ & $-86.94 \pm 1.412$ & $-86.71 \pm 1.778$ & $-86.62 \pm 1.426$ & $-86.93 \pm 1.132$ & $-86.57 \pm 1.173$ \\
    FashionMNIST & $-2819.13 \pm 1.769$ & $-2807.06 \pm 1.591$ & $-2797.02 \pm 1.714$ & $-2793.01 \pm 1.174$ & $-2803.98 \pm 1.487$ & $-2801.90 \pm 1.459$ \\
    Omniglot & $-110.65 \pm 0.141$ & $-110.13 \pm 0.063$ & $-109.32 \pm 0.134$ & $-109.48 \pm 0.104$ & $-110.03 \pm 0.178$ & $-109.43 \pm 0.057$ \\
    Caltech101 & $-127.26 \pm 0.254$ & $-124.87 \pm 0.213$ & $-123.99 \pm 0.262$ & $-123.35 \pm 0.195$ & $128.62 \pm 0.278$ & $-126.72 \pm 0.247$ \\
    FreyFaces & $-1778.78 \pm 4.649$ & $-1758.66 \pm 7.581$ & $-1772.06 \pm 7.275$ & $-1771.47 \pm 5.783$ & $-1780.61 \pm 4.595$ & $-1777.26 \pm 6.467$ \\
    Histopathology & $-3320.37 \pm 6.136$ & $-3294.23 \pm 1.543$ & $-3311.23 \pm 2.859$ & $-3305.91 \pm 1.972$ & $-3328.68 \pm 5.426$ & $-3303.00 \pm 1.517$ \\
    \end{tabular}
\caption{Identical to Table 1 in the main text but we include an errorbar over 5 runs. We find the differences induced by antithetics to be significant.}
\label{table:error}
\end{table}

\subsection{Runtime Experiments}
To measure runtime, we compute the average wall-time of the forward and backward pass over a single epoch with fixed hyperparameters for VAE and AntiVAE. Namely, we use a minibatch size of 128 and vary the number of samples $k=8, 16$. The measurements are in seconds using a Titan X GPU with CUDA 9.0. The implementation of the forward pass in PyTorch is vectorized across samples for both VAE and AntiVAE. Thus the comparison of runtime should be fair. We report the results in the Table.~\ref{table:runtime}.

\begin{table}[h!]
\tiny
\centering
\begin{tabular}{r|r|ccccccc}
    $k$ & Model & StaticMNIST & DynamicMNIST & FashionMNIST & OMNIGLOT & Caltech101  & FreyFaces & Hist. Patches \\
    \toprule
    8 & VAE & $0.0132 \pm 0.011$ & $0.0122 \pm 0.010$ & $0.0142 \pm 0.009$ & $0.0144 \pm 0.015$ & $0.0188 \pm 0.034$ & $0.0283 \pm 0.052$ & $0.0173 \pm 0.028$ \\
    8 & AntiVAE & $0.0179 \pm 0.011$ & $0.0156 \pm 0.009$ & $0.0173 \pm 0.010$ & $0.0164 \pm 0.017$ & $0.0220 \pm 0.036$ & $0.0334 \pm 0.054$ & $0.0196 \pm 0.029$ \\
    8 & AntiVAE (Cheng) & $0.0242 \pm 0.014$ & $0.0210 \pm 0.010$ & $0.0231 \pm 0.009$ & $0.0221 \pm 0.015$ & $0.0353 \pm 0.040$ & $0.040 \pm 0.062$ & $0.0303 \pm 0.026$\\
    \hline
    16 & VAE & $0.0228 \pm 0.009$ & $0.0182 \pm 0.011$ & $0.0207 \pm 0.010$ & $0.0181 \pm 0.015$ & $0.0275 \pm 0.035$ & $0.0351 \pm 0.049$ & $0.0245 \pm 0.027$\\
    16 & AntiVAE & $0.0252 \pm 0.009$ & $0.0240 \pm 0.011$ & $0.0288 \pm 0.010$ & $0.0256 \pm 0.015$ & $0.0308 \pm 0.035$ & $0.0384 \pm 0.049$ & $0.0315 \pm 0.027$ \\
    16 & AntiVAE (Cheng) & $0.0388 \pm 0.011$ & $0.0396 \pm 0.010$ & $0.0452 \pm 0.011$ & $0.0399 \pm 0.015$ & $0.0461 \pm 0.038$ & $0.0550 \pm 0.054$ & $0.0505 \pm 0.033$ \\
\end{tabular}
\caption{A comparison of runtime estimates between VAE and AntiVAE over different datasets. The number reported is the number of seconds for 1 forward and backward pass of a minibatch of size 128.}
\label{table:runtime}
\end{table}

To compute the additional cost of antithetic sampling, we divided the average runtimes of AntiVAE by the average runtimes of VAE and took the mean, resulting in 22.8\% increase in running time (about 0.004 seconds). We note that AntiVAE (Cheng) is much more expensive as it is difficult to vectorize Helmert's transformation.

\subsection{Importance of Differentiability}
We report the numbers plotted in Fig.4e, which showed that differentiability in antithetic sampling is the driving force behind sample diversity. The numbers reported are averaged over 5 runs on Histopathology.

\begin{table}[h!]
\small
\centering
\begin{tabular}{l|ccc}
    Epoch & VAE & AntiVAE (no backprop) & AntiVAE (with backprop) \\
    \toprule
    1 & $0.302 \pm  0.031$ & $0.301 \pm 0.026$ & $0.479 \pm 0.021$\\
    10 & $0.102 \pm 0.008$ & $0.103 \pm 0.022$ & $0.348 \pm 0.024$ \\
    20 & $0.068 \pm 0.006$ & $0.065 \pm 0.010$ &  $0.143 \pm 0.016$ \\
    50 & $0.040 \pm 0.005$ & $0.033 \pm 0.006$ & $0.063 \pm 0.004$ \\
    100 & $0.030 \pm 0.002$ & $0.028 \pm 0.008$ & $0.042 \pm 0.009$ \\
\end{tabular}
\caption{Variance of the first $k/2$ samples (non-antithetics) as measured over five independent runs on Histopathology. Without backprop, the variance is roughly equivalent to regular VAE.}
\label{table:diff_study}
\end{table}

As an aside, we provide the following remark: it is important to check that by adding differentiability, we do not introduce any unintended effects. For example, one might ask if differentiability leads to collapse of the VAE to a deterministic autoencoder (AE), thereby learning to ``sample" only the mean. To confirm that this is not the case, we measure the average variance (across dimensions and examples in the test set) of the variational posterior $q(z|x)$ when trained as a VAE versus as a AntiVAE.

\begin{table}[h!]
\small
\centering
\begin{tabular}{l|cc}
    Dataset & VAE & AntiVAE \\
    \toprule
    StaticMNIST & 0.253 & 0.290 \\
    DynamicMNIST & 0.269 & 0.290 \\
    FashionMNIST & 0.049 & 0.049 \\
    OMNIGLOT & 0.208 & 0.285\\
    Caltech101 & 0.179 & 0.182\\
    FreyFaces & 0.048 & 0.061\\
    Histopathology & 0.029 & 0.028\\
\end{tabular}
\caption{Learned variance of the approximate Gaussian posterior with and without antithetics. We measure variance on a variety of datasets.}
\label{table:collapse_study}
\end{table}

If differentiating through antithetic sampling led to ignoring noise, we would expect q(z|x) to be deterministic i.e. near 0 variance. This does not appear to be the case, as shown in Table.~\ref{table:collapse_study}.

\section{Deriving One-Liner Transformations}

We provide a step-by-step derviation for $g(\cdot)$ in one-liner transformations, namely from Gaussian to Cauchy and Exponential. We skip Log Normal as its formulation from a Gaussian variate is trivial. Below, let $X$ represent a normal variate and let $Y$ be a random variable in the desired distribution family.

\paragraph{Exponential} Let $F(X) = 1 - \exp^{-\lambda X}$. Parameters: $\lambda$.

We start with $F(F^{-1}(Y)) = Y$.

\begin{align*}
    1 - \exp^{-(\lambda F^{-1}(Y))} &= Y \\
    \exp^{-(\lambda F^{-1}(Y))} &= 1 - Y \\
    -(\lambda F^{-1}(Y)) &= \log (1 - Y) \\
   \lambda F^{-1}(Y) &= -\log (1 - Y) \\
   F^{-1}(Y)& = -\frac{1}{\lambda} \log (1 - Y) \\
\end{align*}

Since $1 - Y \in \text{U}(0, 1)$ and $Y \in \text{U}(0, 1)$, we can replace $1 - Y$ with $Y$.

\begin{equation*}
    F^{-1}(Y) = -\frac{1}{\lambda}\log Y \\
\end{equation*}

\paragraph{Cauchy}
Let $F(X) = \frac{1}{2} + \frac{1}{\pi}\arctan(\frac{X - x_0}{\gamma})$. Parameters: $x_0$, $\gamma$.

\begin{align*}
    \frac{1}{2} +   \frac{1}{\pi}\arctan(\frac{F^{-1}(Y) - x_0}{\gamma}) &= Y \\
    \arctan(\frac{F^{-1}(Y) - x_0}{\gamma}) &= \pi(Y - \frac{1}{2}) \\
    F^{-1}(Y) &= \gamma(\tan(\pi(Y - \frac{1}{2})) + x_0) \\
    F^{-1}(Y) &= \gamma(\tan(\pi Y) + x_0) \\
\end{align*}

In practice, we only optimize over $\gamma$, fixing $x_0$ to be 0.

\section{Deriving Antithetic Hawkins-Wixley}

We provide the following derivation for computing an antithetic $\chi^2$ variate using a normal approximation to the $\chi^2$ distribution. We assume the reader is familiar with the inverse CDF transform (as reviewed in the main text).

\citep{hawkins1986note} presented the following fourth root approximation of a $\chi^2_n$ variate, denoted $X^{(1)}$ with $n$ degrees of freedom as distributed according to the following Gaussian:

\begin{equation}
    (X^{(1)}/n)^{1/4} \sim \mathcal{N}(1 - \frac{3}{16n} - \frac{7}{512n^2} + \frac{231}{8192n^3}, \frac{1}{8n} + \frac{3}{128n^2} - \frac{23}{1024n^3})
\end{equation}

We can separately define a unit Gaussian variate, $Z^{(1)} \sim \mathcal{N}(0, 1)$ such that

\begin{equation}
    Z^{(1)} = ((X^{(1)}/n)^{1/4} - (1 - \frac{3}{16n} - \frac{7}{512n^2} + \frac{231}{8192n^3})) \cdot \frac{1}{\sqrt{\frac{1}{8n} + \frac{3}{128n^2} - \frac{23}{1024n^3}}}
\end{equation}

Notice this is just the standard reparameterization trick reversed \citep{rezende2014stochastic}.

Independently, we can define a second $\chi^2_n$ variate, $X^{(2)}$ and unit Gaussian variate $Z^{(2)}$ in the same manner.

\begin{equation}
    Z^{(2)} = ((X^{(2)}/n)^{1/4} - (1 - \frac{3}{16n} - \frac{7}{512n^2} + \frac{231}{8192n^3})) \cdot \frac{1}{\sqrt{\frac{1}{8n} + \frac{3}{128n^2} - \frac{23}{1024n^3}}}
\end{equation}

As each $Z$ is distributed as $\mathcal{N}(0, 1)$, the inverse CDF transform amounts to:

\begin{equation}
    Z^{(2)} = -Z^{(1)}
\end{equation}

Expanding each $Z$, we can derive a closed form solution:

\begin{align}
    ((X^{(2)}/n)^{1/4} - (1 - \frac{3}{16n} - \frac{7}{512n^2} + \frac{231}{8192n^3})) &= ((X^{(1)}/n)^{1/4} - (1 - \frac{3}{16n} - \frac{7}{512n^2} + \frac{231}{8192n^3})) \\
    (X^{(2)}/n)^{1/4} &= 2(1 - \frac{3}{16n} - \frac{7}{512n^2} + \frac{231}{8192n^3})) - (X^{(1)}/n)^{1/4} \\
    X^{(2)} &= n[2(1 - \frac{3}{16n} - \frac{7}{512n^2} + \frac{231}{8192n^3})) - (X^{(1)}/n)^{1/4}]^4
\end{align}

This is the approximation we use in the main text. Coincidentally, \citep{wilson1931distribution} present a similar approximation but as a third root that is more popular. In the main text, we noted that we could not use this as it led negative antithetic variances. To see why, we first write their approximation:

\begin{equation}
    (X^{(1)}/n)^{1/3} \sim \mathcal{N}(1 - \frac{2}{9n}, \frac{2}{9n})
\end{equation}

Following a similar derivation, we end with the following antithetic Wilson-Hilferty equation:

\begin{equation}
    X^{(2)} = n[2(1 - \frac{2}{9n}) - (x^{(1)}/n)^{1/3}]^3
\end{equation}

The issue lies in the cube root. If $(x^{(1)}/n)^{1/3} \geq 2(1 - \frac{2}{9n})$, then inference is ill-posed as a Normal distribution with 0 or negative variance does not exist.

\section{Cheng's Solution to the Constrained Sampling Problem}
\label{sec:methods}

In the main text, we frequently reference a second algorithm, other than \citep{marsaglia1980c69} to solve the constrained sampling problem. Here we walk through the derivation of \citep{cheng1984generation,pullin1979generation} (which we present results for in the main text):

We first review a few useful characteristics of Gamma variables, then review an important transformation with desirable properties, and finally apply it to draw representative samples from a Gaussian distribution.

\subsection{Invariance of Scaling Gamma Variates}
\label{sec:gamma}

We wish to show that Gamma random variables are closed under scaling by a constant and under normalization by independent Gamma variates.

\begin{lemma} If $x \sim \textup{Gamma}(\mu, \alpha)$ where $\mu > 0$ represents shape and $\alpha > 0$ represents rate, and $y = cx$ for some constant $c \in \mathbb{R}^{+}$, $y \sim \textup{Gamma}(\mu, \frac{\alpha}{c})$.
\label{lemma:mul_gamma}
\end{lemma}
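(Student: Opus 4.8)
The plan is to prove this by a direct change of variables on the density function, using the shape–rate parameterization stated in the lemma. First I would write down the density of $x \sim \textup{Gamma}(\mu, \alpha)$, namely
\begin{equation*}
    f_x(t) = \frac{\alpha^\mu}{\Gamma(\mu)} t^{\mu-1} e^{-\alpha t}, \qquad t > 0.
\end{equation*}
Since $c > 0$, the map $y = cx$ is a strictly increasing bijection of $(0,\infty)$ onto itself, with inverse $x = y/c$ and Jacobian $\tfrac{dx}{dy} = \tfrac{1}{c}$. This is the only structural fact needed, and it is what makes the univariate change-of-variables formula directly applicable.

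Next I would substitute into $f_y(s) = f_x(s/c)\,\bigl|\tfrac{1}{c}\bigr|$ and simplify:
\begin{equation*}
    f_y(s) = \frac{\alpha^\mu}{\Gamma(\mu)} \Bigl(\frac{s}{c}\Bigr)^{\mu-1} e^{-\alpha s / c} \cdot \frac{1}{c}
           = \frac{(\alpha/c)^\mu}{\Gamma(\mu)} \, s^{\mu-1} e^{-(\alpha/c) s}.
\end{equation*}
The right-hand side is exactly the density of $\textup{Gamma}(\mu, \alpha/c)$, which establishes that $y \sim \textup{Gamma}(\mu, \tfrac{\alpha}{c})$ as claimed. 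Note in particular that the shape parameter $\mu$ is unchanged, consistent with the fact that scaling only affects the rate.

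There is no genuine obstacle here; the result is elementary. The only point requiring care is bookkeeping of the parameterization convention. Because the lemma uses the \emph{rate} parameterization, scaling by $c$ must send $\alpha \mapsto \alpha/c$ (under a \emph{scale} parameterization the effect would instead be to multiply the scale by $c$). The computation that confirms this is the combination of powers of $c$: the factor $c^{-(\mu-1)}$ from $(s/c)^{\mu-1}$ and the factor $c^{-1}$ from the Jacobian multiply to $c^{-\mu}$, so that $\alpha^\mu / c^\mu = (\alpha/c)^\mu$ reproduces precisely the normalizing constant of $\textup{Gamma}(\mu, \alpha/c)$. Verifying this exact cancellation is what pins down the correct transformed rate.
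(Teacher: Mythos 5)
Your proof is correct and follows essentially the same route as the paper's: a univariate change of variables on the Gamma density under $y = cx$. In fact your version is the more careful one, since the paper's proof conflates density and CDF notation and handles the Jacobian factor $1/c$ only implicitly, whereas you track the cancellation $\alpha^\mu/c^\mu = (\alpha/c)^\mu$ explicitly.
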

\begin{proof}
In generality, let the chain rule be $f_y(y) = F_x(g^{-1}(y))|\frac{dx}{dy}|$ where $f$ is the cumulative distribution function for a random variable. Applying this to a Gamma: $F_y(y) = \frac{\alpha^{\mu}(y/k)^{\mu-1} \exp^{-\alpha y/k}}{\Gamma(\mu)} = \frac{(\alpha /k)^{\mu}Y^{\mu-1}\exp^{-y\cdot \alpha/k}}{\Gamma(\mu)} = \textup{Gamma}(\mu, \frac{\alpha}{k})$.
\end{proof}

\begin{lemma} Let $x_1, x_2, ..., x_k$ be $\textup{Gamma}(\mu, \alpha)$ variates and let $x_{k+1}$ be a $\textup{Gamma}(k\mu, \alpha)$ variate independent of $x_i$, for $i = 1, ..., k$. Then, $y_i = x_{k+1}(\frac{x_i}{\sum_{j=1}^{k} x_j})$ where $y_i \sim \textup{Gamma}(\mu, \alpha)$.
\label{lemma:many_gamma}
\end{lemma}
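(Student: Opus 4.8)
The plan is to reduce the statement to the classical \emph{beta--gamma algebra}, which I would establish from scratch by a single change-of-variables computation. Write $S = \sum_{j=1}^{k} x_j$ and fix an index $i$. Since the $x_j$ are i.i.d.\ $\textup{Gamma}(\mu, \alpha)$, the remainder $R_i := S - x_i = \sum_{j \neq i} x_j$ is a sum of $k-1$ independent $\textup{Gamma}(\mu, \alpha)$ variates and is independent of $x_i$; I would first record that such a sum is $\textup{Gamma}((k-1)\mu, \alpha)$ (additivity of Gammas sharing a common rate, provable by convolution or moment generating functions). This rewrites the target as $y_i = x_{k+1}\,\tfrac{x_i}{S} = x_{k+1}\,\tfrac{x_i}{x_i + R_i}$, with the ingredients $x_i \sim \textup{Gamma}(\mu,\alpha)$ and $R_i \sim \textup{Gamma}((k-1)\mu, \alpha)$ independent, and $x_{k+1} \sim \textup{Gamma}(k\mu, \alpha)$ independent of both.

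Next I would prove two facts via the same elementary Jacobian. (i) \emph{Splitting}: if $G_a \sim \textup{Gamma}(a,\alpha)$ and $G_b \sim \textup{Gamma}(b,\alpha)$ are independent, then $B := G_a/(G_a+G_b) \sim \textup{Beta}(a,b)$, and $B$ is independent of the total $G_a+G_b$. (ii) \emph{Recombination}: if $G \sim \textup{Gamma}(a+b,\alpha)$ and $B \sim \textup{Beta}(a,b)$ are independent, then $GB \sim \textup{Gamma}(a,\alpha)$. Both follow from the map $(g,t)\mapsto(gt, g(1-t))$ and its inverse: substituting $y = gt$, $z = g(1-t)$ with $|\partial(g,t)/\partial(y,z)| = 1/(y+z)$ into the joint density collapses the product of a Gamma density and a Beta density to a product of two independent Gamma densities $\textup{Gamma}(a,\alpha)\times\textup{Gamma}(b,\alpha)$. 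I would not grind through the constants; the point is that the Beta normalizer $\Gamma(a+b)/(\Gamma(a)\Gamma(b))$ and the Gamma factor $\alpha^{a+b}/\Gamma(a+b)$ recombine into the correct $\alpha^a/\Gamma(a)$ and $\alpha^b/\Gamma(b)$.

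Applying (i) with $a = \mu$, $b = (k-1)\mu$ gives $W_i := x_i/(x_i+R_i) = x_i/S \sim \textup{Beta}(\mu,(k-1)\mu)$. The crucial observation is that $W_i$ is a function of $(x_1,\dots,x_k)$ alone, hence independent of $x_{k+1}$. Since $x_{k+1} \sim \textup{Gamma}(\mu + (k-1)\mu,\alpha)$, applying (ii) with the same $a,b$ yields $y_i = x_{k+1} W_i \sim \textup{Gamma}(\mu,\alpha)$, as claimed.

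The main obstacle is purely the beta--gamma Jacobian computation in (ii): it is the one nonroutine step, and it must be arranged so that the $(y+z)$ factors coming from the Beta density, the Gamma density, and the Jacobian cancel exactly. Everything else is bookkeeping: the additivity of Gammas in the first step, and, conceptually, noticing that one never needs the full Dirichlet independence of the whole ratio vector from $S$---only the marginal $\textup{Beta}$ law of the single coordinate $W_i$ together with its independence from the external variate $x_{k+1}$. I would also remark that Lemma~\ref{lemma:mul_gamma} is not invoked here, since all variates already share the common rate $\alpha$; scaling invariance would only be needed if the rates differed.
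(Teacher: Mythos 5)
Your proposal is correct, but it takes a genuinely different route from the paper: the paper gives no argument at all for this lemma, deferring entirely to \cite{aitchison1963inverse}, whereas you supply a self-contained proof. Your reduction is sound: $R_i = S - x_i \sim \textup{Gamma}((k-1)\mu,\alpha)$ independent of $x_i$ by Gamma additivity; the splitting fact gives $W_i = x_i/S \sim \textup{Beta}(\mu,(k-1)\mu)$; $W_i$ is a function of $(x_1,\dots,x_k)$ alone and hence independent of $x_{k+1} \sim \textup{Gamma}(\mu+(k-1)\mu,\alpha)$; and recombination with $a=\mu$, $b=(k-1)\mu$ yields $y_i = x_{k+1}W_i \sim \textup{Gamma}(\mu,\alpha)$. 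The Jacobian step also closes as you describe: with $y=gt$, $z=g(1-t)$, the factor $1/(y+z)$, the Beta normalizer, and the powers of $(y+z)$ cancel exactly, leaving the product density $\textup{Gamma}(a,\alpha)\times\textup{Gamma}(b,\alpha)$. What each approach buys: your argument is elementary and proves precisely the statement as written, which is a claim about the \emph{marginal} law of each $y_i$; Aitchison's cited theorem is stronger---the whole vector $(y_1,\dots,y_k)$ is jointly i.i.d.\ $\textup{Gamma}(\mu,\alpha)$, because the Dirichlet proportions $(x_1/S,\dots,x_k/S)$ are independent of $S$ and recombine with the independent total $x_{k+1}$ into mutually independent coordinates. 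That distinction matters downstream: Cheng's sampler feeds $y_2,\dots,y_k$ into the inverse Helmert transform and needs them mutually independent, so your closing remark that full Dirichlet independence is never needed is true for the lemma as stated, but not for the use the paper ultimately makes of it; if you wanted your proof to support that application, you would need to upgrade the single-coordinate splitting fact to the vector-level statement.
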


\begin{proof} See \cite{aitchison1963inverse}.
\end{proof}
% \s{where is this used?}

% \ndg{this is a lemma not a def?}
\begin{lemma} If $x \sim \mathcal{N}(0, 1)$, then $x^2 \sim \chi^2_1$. Additionally, $x^2 \sim \textup{Gamma}(\frac{1}{2}, \frac{1}{2})$.
\label{lemma:normal_chi}
\end{lemma}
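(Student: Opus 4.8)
The plan is to prove both claims with a single density computation, because the $\chi^2_1$ and $\textup{Gamma}(\tfrac{1}{2},\tfrac{1}{2})$ densities will turn out to be identical. First I would set $Y = x^2$ with $x \sim \mathcal{N}(0,1)$ and compute the CDF of $Y$ directly. Since $x \mapsto x^2$ is not monotone, I would exploit the symmetry of the standard normal: for $y \geq 0$,
\[
F_Y(y) = \Pr(x^2 \le y) = \Pr(-\sqrt{y} \le x \le \sqrt{y}) = 2\Phi(\sqrt{y}) - 1,
\]
where $\Phi$ is the standard normal CDF. Differentiating and using $\tfrac{d}{dy}\sqrt{y} = \tfrac{1}{2}y^{-1/2}$ gives the density
\[
f_Y(y) = 2\phi(\sqrt{y})\cdot \tfrac{1}{2}y^{-1/2} = \frac{1}{\sqrt{2\pi}}\, y^{-1/2} e^{-y/2}, \qquad y > 0,
\]
with $\phi$ the standard normal density, and $f_Y(y)=0$ for $y \le 0$.

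Next I would match this against the two target densities. Recalling that the $\chi^2_k$ density is $\frac{1}{2^{k/2}\Gamma(k/2)} y^{k/2-1} e^{-y/2}$, I set $k=1$ and use $\Gamma(\tfrac{1}{2})=\sqrt{\pi}$ to obtain exactly $\frac{1}{\sqrt{2\pi}} y^{-1/2} e^{-y/2}$, which coincides with $f_Y$; hence $x^2 \sim \chi^2_1$. For the second claim I would write the $\textup{Gamma}(\mu,\alpha)$ density (shape $\mu$, rate $\alpha$) as $\frac{\alpha^\mu}{\Gamma(\mu)} y^{\mu-1} e^{-\alpha y}$, substitute $\mu = \alpha = \tfrac{1}{2}$, and again invoke $\Gamma(\tfrac{1}{2})=\sqrt{\pi}$ to recover the same $\frac{1}{\sqrt{2\pi}} y^{-1/2} e^{-y/2}$, so $x^2 \sim \textup{Gamma}(\tfrac{1}{2},\tfrac{1}{2})$. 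Alternatively, once the first part is established the second follows immediately from the standard identity $\chi^2_k = \textup{Gamma}(k/2, 1/2)$.

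There is no real obstacle here; the only points needing care are the non-monotone transformation $x \mapsto x^2$ (handled by the two-sided probability bound together with the evenness of $\phi$) and the value $\Gamma(\tfrac{1}{2}) = \sqrt{\pi}$, which is precisely what makes the normalizing constants of the three expressions line up. I would restrict the derivation to $y>0$ throughout and note the matching supports so that the density comparison fully determines the distribution.
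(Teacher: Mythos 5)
Your proof is correct, but it takes a genuinely different route from the paper: the paper's entire proof of this lemma is the single line ``By definition.'' That is, the paper takes $\chi^2_1$ to be \emph{defined} as the law of the square of a standard normal variate, so the first claim is tautological, and the identification with $\textup{Gamma}(\tfrac{1}{2},\tfrac{1}{2})$ is invoked as a standard identity. You instead prove both claims from first principles: you derive the density of $x^2$ by the CDF method, correctly handling the non-monotone map $x \mapsto x^2$ via the two-sided probability bound and the symmetry of $\phi$, then match normalizing constants against both target densities using $\Gamma(\tfrac{1}{2}) = \sqrt{\pi}$. What your approach buys is self-containedness: it works for a reader whose working definition of $\chi^2_k$ is the density formula rather than the ``sum of squares of independent standard normals'' construction, and it explicitly verifies that all three normalizing constants agree, which is exactly the content of the Gamma claim. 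What the paper's approach buys is brevity appropriate to the lemma's role --- it is a bookkeeping fact used only to set up Cheng's sampler (through the subsequent corollary on scaled normals), so a one-line appeal to the definition suffices in context. One small point of care in your write-up that is worth keeping: you note the supports match ($f_Y$ vanishes for $y \le 0$), which is needed for the density comparison to pin down the law.
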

\begin{proof} By definition.
\end{proof}

\begin{corollary} If $x \sim \mathcal{N}(0, \sigma^2)$, then $\frac{x^2}{\sigma^2} \sim \chi^2_1$. Furthermore, we can say $x^2 \sim \sigma^2 \chi^2_1 = \sigma^2 \cdot \textup{Gamma}(\frac{1}{2}, \frac{1}{2}) = \textup{Gamma}(\frac{1}{2}, \frac{1}{2\sigma^2})$.
\label{corollary:normal_gamma}
\end{corollary}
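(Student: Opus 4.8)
The plan is to reduce everything to the standard-normal case already handled in Lemma~\ref{lemma:normal_chi} via a change of scale, and then push the scale factor through the Gamma family using Lemma~\ref{lemma:mul_gamma}. First I would standardize: since $x \sim \mathcal{N}(0,\sigma^2)$, the variable $z = x/\sigma$ is a standard normal $\mathcal{N}(0,1)$. By Lemma~\ref{lemma:normal_chi}, its square satisfies $z^2 = x^2/\sigma^2 \sim \chi^2_1$, which immediately establishes the first claim, and moreover $z^2 \sim \textup{Gamma}(\tfrac{1}{2}, \tfrac{1}{2})$.

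For the second claim, I would write $x^2 = \sigma^2 z^2$, i.e.\ view $x^2$ as the scaling of the Gamma variate $z^2$ by the positive constant $c = \sigma^2$. Applying Lemma~\ref{lemma:mul_gamma} with shape $\mu = \tfrac{1}{2}$, rate $\alpha = \tfrac{1}{2}$, and $c = \sigma^2$ yields $x^2 \sim \textup{Gamma}(\tfrac{1}{2}, \tfrac{\alpha}{c}) = \textup{Gamma}(\tfrac{1}{2}, \tfrac{1}{2\sigma^2})$, which is exactly the stated identity $\sigma^2 \cdot \textup{Gamma}(\tfrac{1}{2}, \tfrac{1}{2}) = \textup{Gamma}(\tfrac{1}{2}, \tfrac{1}{2\sigma^2})$. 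The chain of equalities in the statement then just records these two steps (standardization giving $\sigma^2\chi^2_1$, and the scaling lemma reidentifying $\sigma^2\chi^2_1$ as a Gamma with a rescaled rate).

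There is essentially no hard step here; the only point demanding care is the bookkeeping of the rate parameter under scaling. Because Lemma~\ref{lemma:mul_gamma} parameterizes the Gamma by its \emph{rate} (inverse scale), multiplying the variate by $c = \sigma^2$ \emph{divides} the rate by $\sigma^2$ rather than multiplying it, so the rate moves from $\tfrac{1}{2}$ to $\tfrac{1}{2\sigma^2}$. I would double-check the direction of this transformation against the shape/rate convention fixed in Lemma~\ref{lemma:mul_gamma} to confirm the final rate is $\tfrac{1}{2\sigma^2}$ and not $\tfrac{\sigma^2}{2}$, since that sign-of-exponent bookkeeping is the only place a slip could occur.
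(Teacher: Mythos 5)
Your proof is correct and is exactly the argument the paper intends: its proof of this corollary is literally ``Direct application of Lemma~\ref{lemma:mul_gamma},~\ref{lemma:normal_chi}'', i.e.\ standardize to reduce to the unit-normal case and then push the constant $\sigma^2$ through the Gamma rate via the scaling lemma. Your explicit bookkeeping of the rate parameter (rate $\tfrac{1}{2}$ divided by $c=\sigma^2$ gives $\tfrac{1}{2\sigma^2}$) matches the convention fixed in Lemma~\ref{lemma:mul_gamma}, so there is nothing to add.
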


\begin{proof} Direct application of Lemma~\ref{lemma:mul_gamma},~\ref{lemma:normal_chi}.
\end{proof}

\subsection{Helmert's Transformation}
\label{sec:helmert}

Given a random sample of size $k$ from any Gaussian distribution, Helmert's transformation \citep{helmert1875berechnung,pegoraro2012transformation} allows us to get $k-1$ new i.i.d. samples normally distributed with zero mean and the same variance as the original distribution:

Let $x_1, ..., x_k \sim \mathcal{N}(\mu, \sigma^2)$ be $k$ i.i.d. samples. We define the Helmert transformed variables, $y_2, ..., y_k$ as:

\begin{equation}
    y_j = \frac{\sum_{i=j}^{k} x_i - (k + 1 - j) x_{j-1}}{[(k + 1 - j)(k + 2 - j)]^{1/2}}
\label{eqn:helmert}
\end{equation}

for $j = 2, ..., k$. Helmert's transformation guarantees the following for new samples:

\begin{prop}
    $y_2, ..., y_k$ are independently distributed according to $\mathcal{N}(0, \sigma^2)$ such that $\sum_{i=2}^{k} y_i^2 = \sum_{i=1}^{k}(x_i - \bar{x})^2$ where $\bar{x} = \frac{1}{k}\sum_{i=1}^{k} x_i$.
\label{prop:helmert}
\end{prop}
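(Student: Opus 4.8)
The plan is to recognize the Helmert map as an orthogonal linear transformation and then invoke the standard behavior of isotropic Gaussians under orthogonal maps. Concretely, I would augment the $k-1$ prescribed rows with a ``mean row'' $h_1 = \frac{1}{\sqrt{k}}(1,\dots,1)$ and write $y = Hx$, where $x = (x_1,\dots,x_k)$ and the $j$-th row $h_j$ (for $j \ge 2$) is read off from Eqn.~\ref{eqn:helmert}: it places the coefficient $1/\sqrt{(k+1-j)(k+2-j)}$ on each index $j, j+1, \dots, k$, the coefficient $-(k+1-j)/\sqrt{(k+1-j)(k+2-j)}$ on index $j-1$, and zero elsewhere. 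Everything then reduces to showing $H$ is orthogonal.

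First I would verify that $h_1, \dots, h_k$ are orthonormal. Unit norm of $h_j$ for $j \ge 2$ is a one-line computation: the squared norm is $\frac{(k+1-j)\cdot 1 + (k+1-j)^2}{(k+1-j)(k+2-j)} = 1$, which is exactly why the normalizing constant in Eqn.~\ref{eqn:helmert} was chosen. Orthogonality of $h_j$ to $h_1$ reduces to checking that the coefficients of $h_j$ sum to zero, which holds since the $(k+1-j)$ unit coefficients cancel the single $-(k+1-j)$ coefficient. The only slightly delicate step, and the one I expect to be the crux, is mutual orthogonality of $h_j$ and $h_{j'}$ for $j < j'$. Here I would exploit the ``constant tail'' structure: the support of $h_{j'}$ is $\{j'-1, j', \dots, k\}$, every index of which lies in $\{j, \dots, k\}$ since $j' > j$, and on that entire set $h_j$ is the constant $1/\sqrt{(k+1-j)(k+2-j)}$. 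Hence the inner product factors as this constant times the sum of the coefficients of $h_{j'}$, and that sum is zero by the previous step. This turns cross-orthogonality into a free consequence of the zero-sum property rather than a messy index-by-index calculation.

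With orthogonality in hand, the distributional claims follow from the standard fact that if $x \sim \mathcal{N}(\mu\mathbf{1}, \sigma^2 I)$ and $H$ is orthogonal, then $y = Hx \sim \mathcal{N}(H\mu\mathbf{1}, \sigma^2 H H^t) = \mathcal{N}(H\mu\mathbf{1}, \sigma^2 I)$. Being jointly Gaussian with a scalar covariance gives both mutual independence and common variance $\sigma^2$ for $y_2, \dots, y_k$. For the means, $\mathbb{E}[y_j] = h_j \cdot \mu\mathbf{1} = \mu\,(h_j \cdot \mathbf{1}) = 0$ for $j \ge 2$ by the zero-sum property, so each $y_j \sim \mathcal{N}(0, \sigma^2)$ as claimed.

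Finally, the sum-of-squares identity is just norm preservation. Since $H$ is orthogonal, $\sum_{i=1}^k y_i^2 = \|y\|^2 = \|x\|^2 = \sum_{i=1}^k x_i^2$. Isolating the augmented coordinate $y_1 = h_1 \cdot x = \sqrt{k}\,\bar{x}$ gives $\sum_{i=2}^k y_i^2 = \sum_{i=1}^k x_i^2 - y_1^2 = \sum_{i=1}^k x_i^2 - k\bar{x}^2 = \sum_{i=1}^k (x_i - \bar{x})^2$, which is exactly the desired equality. The whole argument therefore amounts to verifying that the prescribed coefficients assemble into an orthogonal matrix, after which the independence, the $\mathcal{N}(0,\sigma^2)$ marginals, and the variance decomposition are all standard.
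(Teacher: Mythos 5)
Your proof is correct, and it is complete in a way the paper's is not: the paper disposes of this proposition with a citation to Helmert (1875) and Kruskal (1946) and gives no argument at all. What you supply is the classical argument that those references contain, namely that the Helmert rows, once augmented by the mean row $h_1 = \tfrac{1}{\sqrt{k}}(1,\dots,1)$, form an orthogonal matrix, so that $y = Hx$ is again isotropic Gaussian, with independence from the diagonal covariance, zero means for $j \ge 2$ from the zero-sum property $h_j \cdot \mathbf{1} = 0$, and the sum-of-squares identity from norm preservation together with $y_1 = \sqrt{k}\,\bar{x}$ and the expansion $\sum_i (x_i - \bar{x})^2 = \sum_i x_i^2 - k\bar{x}^2$. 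All three verification steps check out against the coefficients in Eqn.~\ref{eqn:helmert}: the squared norm is $\frac{(k+1-j) + (k+1-j)^2}{(k+1-j)(k+2-j)} = 1$, the coefficients sum to zero, and your observation that for $j < j'$ the support $\{j'-1,\dots,k\}$ of $h_{j'}$ lies entirely in the constant tail of $h_j$ is the clean way to get cross-orthogonality for free from the zero-sum property, rather than by an index-by-index computation. The one thing your write-up buys beyond the bare statement is worth noting: since the full vector $y$ has independent components, $y_2,\dots,y_k$ are also jointly independent of $y_1 = \sqrt{k}\,\bar{x}$, which is exactly the property ("$y_i$ can be chosen independently of $\bar{x}$") that the paper relies on immediately after the proposition when it splits Constraint~\ref{eqn:constraint1} from Constraint~\ref{eqn:constraint2}; you could state that corollary explicitly at no extra cost.
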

\begin{proof} See \cite{helmert1875berechnung} or \cite{kruskal1946helmert}.
\end{proof}

Critically, Prop.~\ref{prop:helmert} also informs us that (1) the sample variance of $y_2, ..., y_k$ is equal to the sample variance of $x_1, ..., x_k$, and (2) $y_i, i=2,...,k$ can be chosen independently of $\bar{x}$. These properties will be important in the next subsection.

\subsection{Choosing Representative Samples}
\label{sec:choose}

% \s{i think it would be good to end the previous section with the text you have in the intro:. more samples help, but how to pick them? i.i.d vs. something else. then transition into cheng}
% \ndg{+1}

% \s{pseudocode is great, but had a very hard time reading this section. consider rewriting.}
% \ndg{i found this section hard to follow... try SE's suggestions and then i'll read again. try to only say what is needed for your algorithm, and to keep telling us why you are doing each step...}
% \mwu{rewrote!}

We are tasked with the following problem: we wish to generate $k$ i.i.d. samples $x_1, ..., x_k \sim \mathcal{N}(\mu, \sigma^2)$ subject to the following constraints:

\begin{align}
    \frac{1}{k}\sum_{i=1}^{k} x_i = \bar{x} &= \eta \label{eqn:constraint1}\\
    \frac{1}{k}\sum_{i=1}^{k}(x_i - \bar{x})^2 = s^2 &= \delta^2 \label{eqn:constraint2}
\end{align}

where by definition $\bar{x} \sim \mathcal{N}(\mu, \frac{\sigma^2}{k})$ and $(k-1)s^2 / \sigma^2 \sim \chi^2_{k-1}$. We assume that $\eta$ and $(k-1)\delta^2/\sigma^2$ are particular values drawn from these respective sample distributions. In other words, given all the possible sets of $k$ samples, we wish to choose a single set such that the sample moments match a particular value, $\bar{x} = \eta$ and $s^2 = \delta^2$. Note that this is \textit{not} the same as choosing any $\eta \in \mathbb{R}$ and $\delta^2 \in \mathbb{R}$.

% \s{why are the desired values sampled from these distributions? cannot pick them any way you want?. you need to justify this}

% \s{why is it hard? explain that sampling wont' work (sample mean neq population mean)}

This problem is difficult as the number of sets of $k$ samples that do not satisfy Constraints~\ref{eqn:constraint1} and \ref{eqn:constraint2} is much larger than the number of sets that do. Thus, randomly choosing $k$ samples will not work. Furthermore, preserving that the samples are i.i.d. makes this much more difficult as we cannot rely on common methods like sampling without replacement, rejecting samples, etc.

To tackle this, \cite{pullin1979generation} used the two following observations: (1) we can handle Constraint~\ref{eqn:constraint1} independently, and (2) as a linear transformation, Helmert is invertible. First, we investigate observation 1:

Helmert's transformations allows us to untie Constraint~\ref{eqn:constraint1} from Constraint~\ref{eqn:constraint2} as $y_2, ..., y_k$ are not dependent on $\mu$ or $\eta$. Suppose we instantiate a new variable, $y_1$ \citep{kendall1946advanced} such that

\begin{equation}
    \eta = \mu + y_1 / \sqrt{k}
\label{eqn:y1}
\end{equation}

As $\eta \sim \mathcal{N}(\mu, \frac{\sigma^2}{k})$, $y_1$ is then distributed as $\mathcal{N}(0, \sigma^2)$ by reparameterization. This means that we can deterministically choose a value for $y_1$ given $\mu$ and $\eta$ to satisfy Constraint~\ref{eqn:constraint1}.

Next, satisfying Constraint~\ref{eqn:constraint2} amounts to sampling $y_2, ..., y_k$ according to Prop.~\ref{prop:helmert}.
To do this, we follow \citep{cheng1984generation} and use the Gamma properties we introduced in Section~\ref{sec:gamma}:

First, we draw $k - 1$ independent samples from $z_2, ..., z_k \sim \mathcal{N}(0, 1)$. Compute $c_2, ..., c_k$ where $c_i = (z_i * \sigma)^2$. \cite{cheng1984generation} defines $y_i, i=2, ..., k$ such that

\begin{equation}
    y_i^2 = \frac{(k - 1)\delta^2 \cdot c_i}{\sum_{j=2}^{k} c_j}
\end{equation}

By design, $\sum_i y_i^2 = (k-1)\delta^2$, as desired by Prop.~\ref{prop:helmert}. Furthermore, as $c_i \sim \textup{Gamma}(\frac{1}{2}, \frac{1}{2\sigma^2})$ and $(k-1)\delta^2 \sim \textup{Gamma}(\frac{k-1}{2}, \frac{1}{2\sigma^2})$, Lemma~\ref{lemma:many_gamma} tells us that $y_i^2$ are also distributed as $\textup{Gamma}(\frac{1}{2}, \frac{1}{2\sigma^2})$, which crucially guarantees $y_i \sim \mathcal{N}(0, \sigma^2)$ by Corollary~\ref{corollary:normal_gamma}. For $i = 2, ..., k$, we do the following:

\begin{equation}
    y_i = b'_i \cdot \sqrt{y_i^2}
\end{equation}

where $b_i = \textup{Bern}(0.5)$ and $b'_i = 2b_i- 1$ i.e. we randomly attach a sign to $y_i$. Finally, now that we know how to generate $y_1, ..., y_k$, we use \cite{pullin1979generation}'s second observation to transform $y_i$ back to $x_i$:

Precisely, the inverse of Eqn.~\ref{eqn:helmert} (Helmert) is the following:

\begin{align}
    x_1 &= \frac{1}{k}(k\eta - \sqrt{k(k-1)}y_2) \\
    x_j &= x_{j-1} + \frac{(k+2-j)^{1/2}y_j - (k-j)^{1/2}y_{j+1}}{(k + 1 - j)^{1/2}}
\label{eqn:inverse_helmert}
\end{align}

for $j = 2, ..., k$. By the ``inverse" of Prop.~\ref{prop:helmert}, Eqn.~\ref{eqn:inverse_helmert} will transform $y_1, ..., y_k$ to samples $x_1, ..., x_k \sim \mathcal{N}(0, \sigma^2)$ such that the sample mean is $\eta - \mu$ and the sample variance is $\delta^2$. Lastly, adding $x_i = x_i + \mu$ for $i=1, ..., k$ ensures samples from the correct marginal distribution along with the correct sample moments.

% \s{the sign piece is non differentiable. and it seems that zi are functions of sigma, which you might want to differentiate. can you clarify that part? }

We refer to this procedure as \textsc{ChengSample}, detailed in Alg.~\ref{algo:match_moments}. We summarize the properties of \textsc{ChengSample} in the following proposition.

% \s{add assumptions. if the inputs are xxx then}
\begin{prop}
Given $k - 1$ i.i.d samples $z_1, ..., z_{k-1} \sim \mathcal{N}(0, 1)$; $k-1$ i.i.d samples $b_1, ..., b_{k-1} \sim \textup{Bern}(0.5)$; population moments from a Gaussian distribution $\mu \in \mathbb{R}, \sigma^2 \in \mathbb{R}$; and desired sample moments $\eta$, $\delta^2$ such that $\eta \sim \mathcal{N}(\mu, \frac{\sigma^2}{k})$ and $(k-1)\delta^2/\sigma^2 \sim \chi^2_{k-1}$,
generated samples $x_1, ..., x_k$ from $\textsc{ChengSample}([z_1, ..., z_{k-1}], [b_1, ..., b_{k-1}], \mu, \sigma, \eta, \delta, k)$ are (1) i.i.d., (2) marginally distributed as $\mathcal{N}(\mu, \sigma^2)$, and (3) have a sample mean of $\eta$ and a sample variance of $\delta^2$.
\label{lemma:important}
\end{prop}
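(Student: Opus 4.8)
The plan is to mirror the proof of Proposition~\ref{prop:marsaglia}: I would show that \textsc{ChengSample} draws exactly from the conditional distribution $p(\zeta \mid \eta, \delta^2)$ on the constraint set $S$, and then invoke the factorization $p(\zeta, \eta, \delta^2) = p(\eta, \delta^2)\,p(\zeta \mid \eta, \delta^2)$ together with the hypothesis that $\eta, \delta^2$ are drawn independently from the correct marginals. The only genuinely new ingredient relative to the Marsaglia argument is that Helmert's transformation (Eqn.~\ref{eqn:helmert}) plays the role of the orthonormal parametrization of the constraint sphere, so most of the work lies in verifying that the Cheng construction of $y_2, \dots, y_k$ produces a \emph{uniform} point on the appropriate sphere in the Helmert coordinates.

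First I would handle the mean constraint. Setting $y_1 = \sqrt{k}(\eta - \mu)$ via Eqn.~\ref{eqn:y1} deterministically fixes the sample mean at $\eta$, and since $\eta \sim \mathcal{N}(\mu, \sigma^2/k)$ by hypothesis, reparametrization gives $y_1 \sim \mathcal{N}(0, \sigma^2)$. Next I would treat the variance constraint. Writing each coordinate as $y_i = b_i'\sqrt{(k-1)\delta^2}\,|z_i|/\|z\|$, I observe that $\sum_{i=2}^k y_i^2 = (k-1)\delta^2$ holds by construction, and that marginally each $y_i \sim \mathcal{N}(0, \sigma^2)$ by Lemma~\ref{lemma:many_gamma} and Corollary~\ref{corollary:normal_gamma}. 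The crucial strengthening is that the whole vector $(y_2, \dots, y_k)$ is uniform on the sphere of radius $\sqrt{(k-1)\delta^2}$: because each sign $b_i'$ is an independent Rademacher variable, the vector $(b_i'|z_i|)$ is jointly distributed as $(z_i)$, i.i.d.\ $\mathcal{N}(0,1)$, so $(b_i'|z_i|)/\|z\|$ is uniform on the unit sphere by isotropy of the standard Gaussian.

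With these two facts in place, I would transfer back through the inverse Helmert map (Eqn.~\ref{eqn:inverse_helmert}). Since Helmert's transformation is orthogonal on the centered coordinates, it carries the uniform measure on the $y$-sphere to the uniform measure on $S$; hence \textsc{ChengSample} reproduces exactly the conditional $p(\zeta \mid \eta, \delta^2)$ shown to be uniform over $S$ in Theorem~\ref{conditional:1}. Properties (1) and (2) then follow as in Proposition~\ref{prop:marsaglia}: because $\eta, \delta^2$ are drawn from the correct marginals (Theorem~\ref{marginal:1}) and the conditional is correct, the factorization forces $\zeta = (x_1, \dots, x_k)$ to have the law of an i.i.d.\ $\mathcal{N}(\mu, \sigma^2)$ sample. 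Property (3) is immediate from the determinism of the inverse Helmert map, which sends $y_1$ to sample mean $\eta$ and the radius $\sum_{i\ge 2} y_i^2$ to the prescribed sample variance $\delta^2$ by the inverse direction of Proposition~\ref{prop:helmert}.

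The main obstacle I anticipate is the uniform-on-sphere claim for $(y_2, \dots, y_k)$. The Gamma machinery (Lemma~\ref{lemma:many_gamma}, Corollary~\ref{corollary:normal_gamma}) only certifies the correct \emph{marginals} $y_i \sim \mathcal{N}(0,\sigma^2)$; it does not by itself pin down the joint law, and the shared denominator $\sum_j c_j$ makes the coordinates dependent. The clean resolution is to bypass the Gamma marginals for this step and argue directly from the rotational invariance of the i.i.d.\ standard Gaussian vector $(z_2, \dots, z_k)$, using the independent random signs to undo the absolute values. A secondary technical point is confirming that Helmert's transformation is genuinely measure-preserving on spheres, i.e.\ orthogonal, which I would extract from the orthogonality of the Helmert matrix underlying Proposition~\ref{prop:helmert}.
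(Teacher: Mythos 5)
Your proposal is correct, but it follows a genuinely different route from the paper's. The paper never proves this proposition by a conditioning argument; its justification is the derivation preceding the statement, which is purely distributional: since $(k-1)\delta^2 \sim \textup{Gamma}(\frac{k-1}{2}, \frac{1}{2\sigma^2})$ is independent of the $c_i \sim \textup{Gamma}(\frac{1}{2}, \frac{1}{2\sigma^2})$, Aitchison's lemma (Lemma~\ref{lemma:many_gamma}) gives $y_i^2 \sim \textup{Gamma}(\frac{1}{2}, \frac{1}{2\sigma^2})$, hence with independent random signs $y_i \sim \mathcal{N}(0,\sigma^2)$ (Corollary~\ref{corollary:normal_gamma}), and the inverse Helmert map (Eqn.~\ref{eqn:inverse_helmert}) then carries $(y_1, \dots, y_k)$ to the output. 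You instead show that Cheng's construction is a sampler of the uniform distribution on the constraint sphere --- via the observation that reattaching independent Rademacher signs to $|z_i|$ recovers an i.i.d.\ Gaussian vector, whose normalization is uniform on the sphere by isotropy --- and then reuse the factorization template of Prop.~\ref{prop:marsaglia} together with Theorems~\ref{marginal:1} and~\ref{conditional:1}. This buys you two things. First, it unifies Cheng's and Marsaglia's algorithms under a single proof: both are realizations of $p(\zeta \mid \eta, \delta^2)$, differing only in how they parameterize the sphere. Second, and more substantively, it repairs a weakness you correctly flagged: Lemma~\ref{lemma:many_gamma} as stated in the paper asserts only the \emph{marginal} law of each $y_i^2$, which is not enough to conclude claim (1) (joint i.i.d.-ness); one needs the full strength of Aitchison's theorem (joint independence of the rescaled coordinates), which the paper uses implicitly but never states. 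Your isotropy-plus-signs argument pins down the joint law directly. What the paper's route buys in exchange is fidelity to Cheng's and Pullin's original reasoning, and --- if the full Aitchison result is invoked --- a proof needing no conditioning argument at all, since the $y_i$ are then literally independent Gaussians and the orthogonal inverse-Helmert map immediately yields i.i.d.\ outputs with the prescribed sample moments.
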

% \s{this is a very important piece. why not make it into a theorem/proposition?}

% \s{can we change the interface of cheng so that it takes N(0,1),Population mean $\mu$ and variance $\sigma^2$; Desired sample mean $\eta$ and variance $\delta^2$, as inputs?}

\begin{algorithm}[h!]
\SetAlgoLined
\caption{\textsc{ChengSample}}
\KwData{i.i.d. samples $z_1, ..., z_{k-1} \sim \mathcal{N}(0, 1)$; i.i.d. samples $b_1, ..., b_{k-1} \sim \textup{Bern}(0.5)$; Population mean $\mu$ and variance $\sigma^2$; Desired sample mean $\eta$ and variance $\delta^2$; Number of samples $k \in \mathbb{N}$.} % \s{k}
\KwResult{A set of $k$ samples $x_1, x_2, ..., x_k$ marginally distributed as $\mathcal{N}(\mu, \sigma^2)$ with sample mean $\eta$ and sample variance $\delta^2$.}

% Sample i.i.d. $(z_2, ..., z_k) \sim \mathcal{N}(0, \sigma^2)$\; % \s{i.id.?}
$c_i = (z_{i-1} \cdot \sigma)^2$ for $i = 2,...,k$\;
$a = (k  - 1)\delta^2 / \sum_i c_i$\;
$y_i^2 = a \cdot c_i$ for $i = 2,...,k$\;
$y_i = (2b_{i-1}-1) \cdot \sqrt{y^2_i}$ for $i = 2,...,k$\;
$y_1 = \sqrt{k}(\eta - \mu)$\;
$\alpha_k = k^{-\frac{1}{2}}$\;
$\alpha_j = (j(j+1))^{-\frac{1}{2}}$ for $j = 2,...,k$\;
$s_k =\alpha_k^{-1}y_k$\;
\For{$j\gets k$ \KwTo $2$}{
    $x_j = (s_j - \alpha^{-1}_{j-1} y_{j-1})/j$\;
    $s_{j-1} = s_j - x_j$\;
}
$x_1 = s_1$\;
$x_i = x_i + \mu$ for $i = 1,...,k$\;
Return $x_1, ..., x_k$\;
% \s{return ...}
% \s{hard to believe this is correct because $\mu$ (an input) is never used...}
\label{algo:match_moments}
\end{algorithm}

As a final note, we chose to use Marsaglia's solution instead of Cheng's as the former as a nice geometric interpretation and requires half as many random draws (no Bernoulli variables needed in Marsaglia's algorithm).

\section{Miscellaneous}

In the \textsc{AntitheticSample} proposition in the main text, we use the fact that the average of two unbiased estimators is an unbiased estimator. We provide the proof here.

\begin{lemma} A linear combination of two unbiased estimators is unbiased.
\label{lemma:sum}
\end{lemma}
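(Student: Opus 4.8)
The plan is to prove this by a direct application of linearity of expectation, after first pinning down the precise statement. The phrase ``linear combination'' should be read as a combination whose coefficients sum to one (an affine combination), since this is the condition under which unbiasedness is preserved; the average used in the main text is the special case with equal weights. So first I would fix the setup: let $\hat{\theta}_1$ and $\hat{\theta}_2$ be two unbiased estimators of a common target $\theta$, i.e.\ $\mathbb{E}[\hat{\theta}_1] = \mathbb{E}[\hat{\theta}_2] = \theta$, and let $\hat{\theta} = \alpha\hat{\theta}_1 + (1-\alpha)\hat{\theta}_2$ for an arbitrary $\alpha \in \mathbb{R}$, with $\alpha = \tfrac{1}{2}$ recovering the case invoked in Corollary~\ref{corollary:1}.

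The single substantive step is then to compute the expectation and use that the expectation operator is linear, which holds regardless of whether $\hat{\theta}_1$ and $\hat{\theta}_2$ are independent (this matters here, since the antithetic estimators $\hat{\theta}_1$ and $\hat{\theta}_2$ of Corollary~\ref{corollary:1} are deliberately correlated). Concretely I would write
\begin{equation*}
    \mathbb{E}[\hat{\theta}] = \mathbb{E}[\alpha\hat{\theta}_1 + (1-\alpha)\hat{\theta}_2] = \alpha\,\mathbb{E}[\hat{\theta}_1] + (1-\alpha)\,\mathbb{E}[\hat{\theta}_2] = \alpha\theta + (1-\alpha)\theta = \theta,
\end{equation*}
so $\hat{\theta}$ is an unbiased estimator of $\theta$, completing the argument.

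There is essentially no obstacle: the result is a one-line consequence of linearity of expectation. The only point worth flagging is the role of the coefficient constraint. If the coefficients did not sum to one, say $\hat{\theta} = a\hat{\theta}_1 + b\hat{\theta}_2$, the same computation would give $\mathbb{E}[\hat{\theta}] = (a+b)\theta$, which equals $\theta$ precisely when $a+b=1$. I would therefore state the lemma with this normalization made explicit, noting in passing that no independence assumption is needed, so the lemma applies verbatim to the correlated antithetic estimators that motivate it.
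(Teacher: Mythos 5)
Your proof is correct and follows essentially the same argument as the paper: a direct application of linearity of expectation, together with the observation that the coefficients must sum to one (the paper writes a general combination $k_1 e_1 + k_2 e_2$ and concludes unbiasedness holds iff $k_1 + k_2 = 1$, while you build the normalization in from the start and note the general case in passing). Your added remark that no independence assumption is needed is a worthwhile clarification given the correlated antithetic estimators this lemma supports, but it does not change the substance of the argument.
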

\begin{proof} Let $e_1$ and $e_2$ denote two unbiased estimators that $\mathbb{E}[e_1] = \mathbb{E}[e_2] = \theta$ for some underlying parameter $\theta$. Define a third estimator $e_3 = k_1e_1 + k_2e_2$ where $k_1, k_2 \in \mathbb{R}$. We note that $\mathbb{E}[e_3] = k_1\mathbb{E}[e_1] + k_2\mathbb{E}[e_2]$ = $(k_1 + k_2)\theta$. Thus $e_3$ is unbiased if $k_1 + k_2 = 1$.
\end{proof}

\end{document}